\def\min{\qopname\relax n{min}}
\def\max2{\qopname\relax n{max2}}
\def\max{\qopname\relax n{max}}
\newcommand{\BB}{\mathbb{B}}
\newcommand{\EE}{\mathbb{E}}
\newcommand{\RR}{\mathbb{R}}
\newcommand{\PP}{\mathbb{P}}
\newcommand{\II}{\mathbb{I}}
\def\A{\mathcal{A}}
\def\E{\mathcal{E}}
\def\N{\mathcal{N}}
\def \cG {\mathcal{G}}
\def\x{\bm{x}} 
\def\y{\bm{y}} 
\def\w{\mathbf{w}}
\def\a{\bm{a}} 
\def\g{\bm{g}} 
\def\u{\bm{u}} 
\def\v{\bm{v}} 
\def\w{\bm{w}} 
\def\z{\bm{z}} 
\def\E{\mathcal{E}}
\newenvironment{lp*}{\begin{equation*}  \begin{array}{lll}}{\end{array}\end{equation*}}
\begin{document}

\title{Learning from a Learning User for Optimal Recommendations}

\author{\name Fan Yao$^1$ \email fy4bc@virginia.edu
       \AND
       \name Chuanhao Li$^1$ \email cl5ev@virginia.edu 
       \AND
       \name Denis Nekipelov$^2$ \email dn4w@virginia.edu 
       \AND
       \name Hongning Wang$^1$ \email hw5x@virginia.edu 
       \AND
       \name Haifeng Xu$^1$ \email hx4ad@virginia.edu \\ \\ 
       \addr $^1$Department of Computer Science, University of Virginia, USA \\
       \addr $^2$Department of Economics, University of Virginia, USA
    }


\maketitle

\begin{abstract}
In real-world recommendation problems, especially those with a formidably large item space, users have to gradually learn to estimate the utility of any fresh recommendations from their experience about previously consumed items. This in turn affects their interaction dynamics with the system and can invalidate previous algorithms built on the omniscient user assumption. In this paper, we formalize a model to capture such ``learning users'' and design an efficient system-side learning solution, coined Noise-Robust Active Ellipsoid Search (RAES), to confront the challenges brought by the non-stationary feedback from such a learning user. Interestingly, we prove that the regret of RAES  deteriorates gracefully as the convergence rate of user learning becomes worse, until reaching linear regret when the user's learning fails to converge. Experiments on synthetic datasets demonstrate the strength of RAES for such a contemporaneous system-user learning problem. Our study provides a novel perspective on modeling the feedback loop in recommendation problems.
\end{abstract}

\section{Introduction}

A recommender system (hereinafter referred to as \textit{system}) is designed to predict users' preferences over items so as to maximize the utility of the recommended items \cite{sarwar2001item,koren2009matrix}. Driven by this principle, there has been a tremendous amount of research efforts and industry practices on developing various recommendation algorithms that predict item utility for each user based on the observed user-item interactions, including collaborative filtering \cite{sarwar2001item,konstan1997grouplens,linden2003amazon}, latent factor models \cite{koren2009matrix,rendle2010factorization,rendle2010pairwise}, neural recommendation models \cite{he2017neural,ebesu2018collaborative,liang2018variational}, and sequential recommendation models \cite{kang2018self,tang2018personalized,wu2020deja}. 

Nevertheless, this paradigm is built on an overly simplified user model: users are omniscient about the (millions of) items and thus allow the system to directly query their preferences. This assumption ceases to be true in real-world recommendation applications where the size of the item space could be formidably large. As a result, instead of being a static ``classifier'' \cite{das2007google,li2010contextual,linden2003amazon}, an ordinary user typically is also \emph{learning} the item utility from her interactions with the system. For instance, 
a user might be new to a category of items;
thus, her responses to such items can only be accurate after consuming the recommended items, possibly even after multiple times. 

This ``inaccuracy'' in users' feedback 
cannot be simply modeled as random noise, since it naturally depend on the interaction history and thus could be biased by her previous choices.  More specifically,  any small bias (e.g., towards a particular item category) in the system's past recommendations will  bias the user's learning, which consequently leads to biased user feedback, which then further bias the system's subsequent recommendations.
This forms a vicious circle -- even if an optimal item is recommended to the user, she might not take it due to her currently inaccurate   utility estimation;   but failing to consume the optimal item will stop the user from exploring that direction, and thus leading to repeated future rejections of the same optimal recommendations. 
This is similar to the explore-exploit dilemma in bandit problems, but is much worse because in bandit problems the noise of user  feedback is independent from the interaction history, whereas here the bias will accumulate.  

To address the limitation caused by the previous omniscient user assumption, we propose to model a user as an autonomous agent who is learning to evaluate the utility of system's recommendations from her interaction history. We
formulate the system-user interaction in a dueling bandit setup \cite{yue2012k}, such that the user does not need to explicitly disclose their estimated utility of a chosen item. This more challenging feedback assumption is motivated by the observation that an ordinary user will most often take action that fulfills her information needs with the least effort, and thus does not bother providing details, e.g., numerical ratings \cite{tetard2009lazy}. Specifically, we assume at each time step, the system proposes two items for the user and can only observe the user's choice between the two items, i.e., comparative feedback. The system aims at minimizing the cumulative regret from the interaction with the user in a given period $T$. 

A very important distinction from the contextual dueling bandit problem  \citep{dudik2015contextual} is that we assume the user does not know the best choice ahead of time and will respond to current recommendations based on learned parameters from  her past experience.
Our model of such a learning user is quite general, without any need of restricting to  specific learning algorithms or to any user decision rules. Our only assumption about the user learning is that the user learns to evaluate new items' utility based on her consumed items, and her estimation uncertainty on an item is proportional to the projection of this item onto the consumed item space. Natural examples include a user equipped with LinUCB \cite{li2010contextual} or 
simply using the least square estimator (LSE) over history. Our user behavior assumption also considers potentially large estimation error and accounts for different decision making pattern under uncertainty (i.e., being optimistic, pessimistic, or purely myopic), which we will elaborate in later sections.

Our contributions are twofold. First, we propose a more realistic (though challenging) problem setting for interactive recommendation. Second, we design a learning algorithm for the system, named \emph{Noise-Robust Active Ellipsoid Search} (RAES), to make efficient learning possible when dealing with a learning user. We prove RAES enjoys a regret upper bound of $\tilde{O}(d^2T^{\frac{1}{2}+\gamma})$, which deteriorates gracefully in $\gamma$, i.e., the convergence rate of user's learning. In addition, we present a lower bound to confirm the tightness of our regret bound and present empirical studies comparing RAES with relevant baselines.

\section{Related Work.} 

The first related direction is the dueling bandit problem. First proposed by \citet{yue2009interactively}, dueling bandit models an online learning problem where the feedback at each step is restricted to a noisy comparison between a pair of arms. In follow-up works, \citet{ailon2014reducing} developed solutions by proposing a black-box reduction from dueling bandit to classic multi-armed bandit (MAB), \citet{dudik2015contextual} studied the adversarial and contextual extensions of dueling bandit and generalized the solution concept. Our feedback assumption is fundamentally different from that in dueling bandit as the user's feedback evolves as she learns from the realized rewards. 
This coupled environment results in the failure of almost all existing dueling bandit algorithms, including those mentioned above,
as we will demonstrate in our empirical study.

The ellipsoid method serves as a key building block in our algorithm design. First proposed by \citet{grotschel1981ellipsoid,karmarkar1984new}, the ellipsoid method is used to prove linear programs are solvable in polynomial time. Such an elegant idea has found applications in preference elicitation \cite{boutilier2006constraint}, recommender systems design \citep{viappiani2009regret,gollapudi2021contextual}, and feature-based dynamic pricing \cite{cohen2020feature,lobel2018multidimensional}. The main challenge in applying the ellipsoid method to our problem is that due to the user's inaccurate feedback, the system cannot control the intersection of the cutting hyperplane and thus needs to determine when to shrink the uncertainty set adaptively.

Another related line of research includes MAB algorithms that interact with strategic agents. \citet{kremer2014implementing} proposed the incentivized exploration problem, which studies how a system could maximize the welfare of a group of users who only care about their short-term utility. Follow-up works extended the setting by allowing users to communicate \cite{bahar2015economic} and introducing incentive-compatibility constraints \cite{mansour2016bayesian,mansour2020bayesian}. Our motivation differs from this line of work in that: 1). the user in our problem is a learning agent having repeated interactions with the system rather than a one-time myopic visitor to the system; 2). instead of modeling an informationally advantaged system to persuade the user to explore, we investigate how an disadvantaged system with mere access to comparative feedback can help optimize the user's accumulated utilities. Recently, \citet{yao2021learning} proposed a MAB problem where the system collects feedback from an explorative user who decides whether to accept a recommendation based on her estimated confidence intervals. Different from it, our work is built on the linear contextual setting and adopts a more general user behavior model. 

\section{The Problem of Contemporaneous System-User Learning}

As mentioned in the introduction, our setup inherits from the celebrated contextual dueling bandit problem but considers intrinsically different user behaviors, i.e., a learning and thus dynamically evolving user. 
Let $\cA$ be the set of candidate items (henceforth, the \emph{arms}) that the system can recommend at each round $t \in [T]$.  
We are interested in scenarios where $\cA$ is formidably large and diverse. Our results hold for arbitrary  $\cA$, continuous or discrete, so long as it has a non-trivial interior and is sufficiently ``dense'' (see formal definitions later). The user's expected utility of consuming any arm $\a\in\cA$ is governed by a hidden preference parameter $\theta_* \in \RR^d$  and, specifically, is realized by the linear reward function $\theta_*^{\top}\a$. At each round $t$, the system recommends a pair of arms $(\a_{0,t},\a_{1,t})$ and the user chooses one of them, i.e., the comparative feedback as in dueling bandits. We assume that the user does not know $\theta_*$ either and relies on her current estimation $\theta_t$ to make a choice between $(\a_{0,t},\a_{1,t})$. Since any non-zero scaling on $\theta_*$ does not affect the user's feedback, we assume $\|\theta_*\|_2=1$ without loss of generality. 

The key conceptual contribution of our problem setup is a formal non-stationary user model that captures a wide range of user-system interactions yet still permits tractable analysis of online learning with non-trivial   regret guarantees.  We defer a formal description of this user model to Section \ref{sec:sub:user-model}, and only summarize the interaction protocol at each round $t \in [T]$ as follows: 
\begin{enumerate}[nosep]
    \item The system recommends $(\a_{0,t},\a_{1,t})\in \cA^2$ to the user.
    \item The user uses $\theta_t$, i.e., her estimation of $\theta^*$ at time $t$, to choose an arm from $(\a_{0,t},\a_{1,t})$, denoted as $\a_t$. 
    \item The user observes reward $r_t$ and updates $\theta_{t+1}$ based on her observed history $\mathcal{H}_t=\{(\a_s,r_s)\}_{s=1}^{t}$.
    \item The system observes the user's choice $\a_t$ and updates its recommendation policy.
\end{enumerate}

The learning objective for the system is to minimize the regret defined as 
\begin{equation}\label{eq:strong_reg}
    R_T=\sum_{t=1}^{T} \theta_*^{\top}(2\a_*-\a_{0,t}-\a_{1,t}),
\end{equation}
where $\a_*=\arg\max_{\a\in \cA}\theta_*^{\top}\a$.

Next we introduce the remaining core components of the user behavior model by specifying: 1). her method for estimating $\theta_t$; and 2). her strategy for selecting an arm based on $\theta_t$. We refer to them as the \emph{estimation rule} and the \emph{decision rule} respectively.

\subsection{Modeling a Learning User}\label{sec:sub:user-model}

We consider a general model of a learning user as follows.
\begin{enumerate}
    \item (Estimation Rule) The user collects the past observations $\mathcal{H}_{t-1}$ and calculate $\theta_t=F(\mathcal{H}_{t-1})$ using any learning algorithm $F$,   such that 
    \begin{equation}\label{eq:rule_estimation}
        \|\theta_*-\theta_t\|_{V_t}\leq c_1t^{\gamma_1} g(\delta)
    \end{equation} holds with probability $1-\delta$, where $V_t=V_0 +\sum_{s=1}^{t-1}\a_s\a_s^{\top}$, $\gamma_1\in(0,\frac{1}{2})$ and $c_1$ are constants such that $c_1$ is independent of $t$. $V_0$ is assumed to be any Positive Semi-definite (PSD) matrix that summarizes the user's \emph{prior knowledge} regarding the item space.
    \item (Decision Rule) When facing recommendations $(\a_{0,t}, \a_{1,t})$, the user makes the decision based on the following \emph{index} which  combines her estimated utility and an explorative bonus term
    \begin{equation}\label{eq:rule_decision}
        \hat{r}_i=\theta_t^{\top}\a_{i,t}+\beta_t^{(i)}\|\a_{i,t}\|_{V_t^{-1}}, i=\{0,1\},
    \end{equation} where $\{\beta_t^{(0)}\}_{t\in[T]}$ and $\{\beta_t^{(1)}\}_{t\in[T]}$ are two \emph{arbitrary} sequences satisfying $\beta_t^{(i)} \in [-c_2t^{\gamma_2}, c_2t^{\gamma_2}]$ for some constant $c_2$ and $\gamma_2$. Then, the user returns her choice $\a_t$ with the largest index $\hat{r}$ (breaking ties arbitrarily). 
\end{enumerate}

In essence, the estimation rule  captures a crucial property of a learning user -- the utility estimation for an item becomes more accurate only when the user has experienced more similar items before. This is reflected in the data-weighted matrix norm in \eqref{eq:rule_estimation}. In other words, the user's response will not be reliable if the recommended item is barely related to her previously experienced items. A similar assumption is made to capture the user's explorative behaviors for previously unseen items, as described by \eqref{eq:rule_decision}.  This is fundamentally different from classical recommendation settings, where the uncertainty in user feedback is modeled by homogeneous noise of the same scale throughout the course of user-system interactions.  

One can interpret $V_0$ as $\sum_{i=1}^{n}\a_{-i}\a_{-i}^{\top}$, where $\a_{-i}$ is the user's consumed item before engaging with the system. The spectrum of $V_0$ thus reflects the estimation accuracy regarding different directions of the item space. For example, if $V_0$ has some small eigenvalues, the user's response can be inaccurate in the corresponding eigen-directions. Our algorithm does not depend on the exact knowledge about $V_0$, but only on a lower bound estimation of its smallest eigenvalue.

Next we describe a learning user example, which is also the running example of our (more general) user behavior model. As the true underlying utility function is linear, i.e., $r_t=\theta_*^{\top}\a_t+\eta_t$, where $\eta_t$ is sub-Gaussian noise, linear regression is a natural choice for a learning user's estimation rule and its estimation confidence bound satisfies $\|\theta_*-\theta_t\|_{V_t}\leq O\left(\sqrt{d\log\frac{t}{\delta}}\right)$ with probability $1-\delta$ \cite{lattimore2020bandit}. In this case, $\gamma_1$ can be any positive number and $g(\delta)=\sqrt{\log\frac{1}{\delta}}$. But   our user model covers more general estimation methods than linear regression. 
For example, to capture the scenario where an ordinary user does not necessarily have the capacity to precisely execute such a sophisticate estimation method, we allow the user's estimation to have much larger error at the order of $O(t^{\gamma_1})$ as in \eqref{eq:rule_estimation}, where the parameter $\gamma_1$ controls the convergence rate of user learning.

For the decision rule, we have to account for a user's potential exploration behavior when facing uncertainty, which has been observed and supported in numerous cognitive science \cite{cohen2007should,daw2006cortical} and behavior science \cite{gershman2018deconstructing,wilson2014humans} studies. A natural choice is to follow 
the ``optimism in the face of uncertainty'' (OFUL) principle \cite{abbasi2011improved}. Specifically, if $\theta_t$ is the least square estimator, a learning user employing the celebrated LinUCB can be realized by setting $\beta_t^{(0)}=\beta_t^{(1)} =O(\sqrt{\log t})$ in \eqref{eq:rule_decision}. But again our decision rule in \eqref{eq:rule_decision} is much more general. To capture cases where users use a much looser confidence bound estimation or even less rational arm choices,  we allow $\beta_t^{(i)}$ to deviate in a much larger range with $O(t^{\gamma_2})$ (compared to $O(\sqrt{\log t})$ in LinUCB).
Additionally, we allow $\{\beta_t^{(i)}\}_{t\in[T]}$ to be \emph{arbitrary} and even consist of negative values. This enables us to model highly non-stationary user behaviors, e.g., being optimistic, pessimistic,  purely myopic (when $\beta_t^{1}=\beta_t^0 = 0$), or an arbitrary mixture of any of them. 

 Parameters $\{\gamma_1, \gamma_2\}$ depict the user learning's convergence rate and user's exploration strength, respectively. Notably, we are only interested in the regime $(\gamma_1, \gamma_2)\in [0, \frac{1}{2})\times[0, \frac{1}{2})$, because  $\text{trace}(V_t)$ is in the order of $O(t)$ by the definition of $V_t$. Therefore, we must have  $\|\theta_*-\theta_t\|_{V_t} = O(\sqrt{t})$ whenever $\theta_*$ is within a constant $\ell_2$ distance to the user's estimated parameter $\theta_t$. As a result, if $\gamma_1 \geq 1/2$, it must be that the estimated  $\theta_t$ is at least a constant distance away from the true $\theta_*$, and so is the estimated reward $\hat{r}_i$ from the expected true reward. This makes it impossible for the system to do no-regret learning. Similarly, $ \|\a_{i,t}\|_{V_t^{-1}}$ will be $\Theta(\sqrt{t})$ for some $\a_{i,t}$ and a $\gamma_2 \geq 1/2$ will also make the estimated $\hat{r}_i$ arbitrarily bad. As we will demonstrate in later analysis, the estimation error of $\hat{r}$ turns out to be governed by $\max\{\gamma_1, \gamma_2\}$. Hence, for the ease of references, in the following analysis we conveniently refer to the above user behaviors as $(c,\gamma)$\emph{-rationality}, formally defined as:
\begin{definition}\label{def:rational_user}[$(c,\gamma)-$rationality]
Any user characterized by Estimation Rule \eqref{eq:rule_estimation} and Decision Rule \eqref{eq:rule_decision} is said to be $(c,\gamma)-$rational if $\gamma\geq\max\{\gamma_1, \gamma_2\}, c\geq\max\{c_1, c_2\}$.
\end{definition}

As a concrete example, a user is $(c,\gamma)$-rational for an arbitrarily small $\gamma$ if she runs LinUCB \footnote{This is also the reason for our terminology ``rationality''. That is, there exists (essentially) $0$-rational learning users, so a $\gamma$-rational user for some $\gamma>0$ must not be perfectly rational. }. This is because under LinUCB we have $\|\theta_*-\theta_t\|_{V_t} = O(\sqrt{\log t})$  and $\{\beta_t^{(0)}, \beta_t^{(1)}\}$ are also both in the order $O(\sqrt{\log t})$. Therefore, $\gamma$ here can be an arbitrarily small positive number since $\frac{\log t}{t^{\gamma}} \to 0$ as $t \to \infty$ for any $\gamma>0$.

\section{No-Regret System Learning from a Learning User} 

In this section, we develop an efficient learning algorithm for the system to learn from \emph{any} $(c,\gamma)$-rational user. The regret of our algorithm has an order of $\tilde{O}(c d^2 T^{\frac{1}{2}+\gamma})$.  Recall that, a user using the  LinUCB algorithm corresponds to an arbitrarily small $\gamma$. In this case, system learning essentially recovers the optimal $O(\sqrt{T})$ regret in bandit learning, despite that the system 1). only has limited comparative feedback about the user's utility estimation; and 2). faces non-stationary and non-stochastic user behaviors. More interestingly, our algorithm's regret deteriorates gracefully as $\gamma \in [0, \frac{1}{2})$ increases, i.e., as the user's learning converges at a slower rate or being more explorative as captured by $\gamma$.  The key conceptual message from our theoretical findings is that it is possible for a system to learn from a learning user, and \emph{the convergence rate of the system's learning deteriorates linearly  in the convergence rate of the user's learning}.

The only caveat for our analysis is the $O(d^2)$ dependence in the regret upper bound, which is worse than the regret's linear dependence on $d$ for standard no-regret learning problems. We believe this worse dependence is fundamentally due to the fact that the system has to learn from the users' binary feedback with diminishing yet \emph{non-stochastic} noise. This more challenging setup fails classic linear contextual bandit algorithms that rely on rewards with stochastic noise. We thus develop an entirely different solution, which is a novel use of the celebrated \emph{ellipsoid method} originally developed for solving linear programs \cite{grotschel1993ellipsoid,grotschel1981ellipsoid}. The ellipsoid method employs a simple yet elegant idea: to estimate the parameter, we first construct an uncertainty ellipsoid that encloses it. Then during each subsequent iteration, we draw a hyperplane that cuts the current uncertainty ellipsoid into two fractions and query an oracle to determine which fraction contains the true parameter with high probability. Then we replace the current uncertainty ellipsoid with the smallest possible ellipsoid that contains the remaining fraction, and so forth. As the iteration goes, we expect the volume of the confidence ellipsoid to shrink and thus a higher parameter estimation precision. In our setup, we maintain a sequence of confidence ellipsoid $\{\cE_t$\} for $\theta_*$ and reduce the volume of $\cE_t$ via a carefully chosen cutting hyperplane. The user's binary comparative feedback tells which side of the hyperplane contains the true parameter, which prepares the subsequent cuts. We provide necessary technical details of the ellipsoid method in the following section. Readers who are familiar are encouraged to skip this section. 

\begin{figure}[t]
    \centering
    \includegraphics[width=0.7\textwidth,trim={14cm 28cm 6cm 31cm},clip]{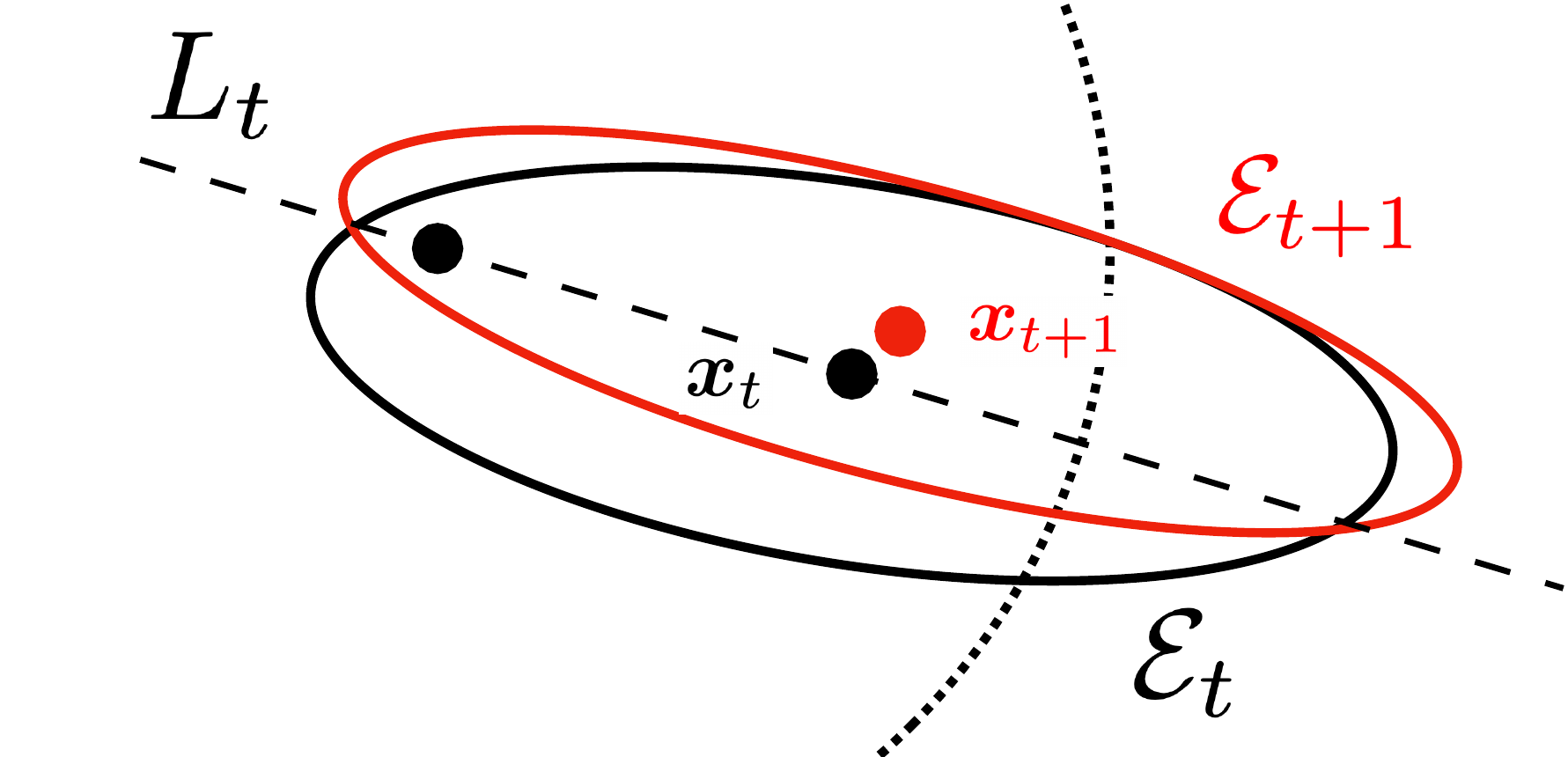}
    \caption{The unit ball (dashed line) is centered at the origin. $L_t$ crosses the origin, cuts $\E_t$ through its center $\x_t$ and yields $\E_{t+1}$. In a high dimensional space, we have additional degree of freedom to pick $L_t$ that shrinks $\E_t$ along all possible directions.}\label{fig:ellipsoid}
\end{figure}

\subsection{Preliminaries on Ellipsoid Method}

 A $d \times d$ matrix $A$ is symmetric when $A=A^{\top}$, and any symmetric matrix $A$ admits an eigenvalue decomposition $A=U\Sigma U^{\top}$, where $U$ is a orthogonal matrix and $\Sigma=\text{diag}(\sigma_1,\cdots,\sigma_d)$ is a diagonal matrix with diagonal elements $\sigma_1\geq \cdots\geq \sigma_d $. We refer to $\sigma_i(A)$ as the $i$-th largest eigenvalue of $A$. A symmetric matrix $A$ is called positive definite (PD) if all its eigenvalues are strictly positive. 

An ellipsoid is a subset of $\RR^d$ defined as $$\E(\x, P)=\{\z|(\z-\x)^{\top}P^{-1}(\z-\x)\leq 1\},$$ where $\x\in\RR^d$ specifies its center and the PD matrix $P$ specifies its geometric shape. Each of the $d$ radii of $\E(\x, P)$ corresponds to the square root of an eigenvalue of $P$ and the volume of the ellipsoid is given by $$\text{Vol}(\E(\x, P))=V_d\sqrt{\det P}= V_d\sqrt{\prod_{i=1}^d\sigma_d},$$
where $V_d$ is a constant that represents the volume of the unit ball in $\RR^d$. If a hyperplane $\g^{\top}(\z-\x)= b$ with normal direction $\g$ and intersection $b$ cuts the ellipsoid $\E(\x, P)$ to two pieces, the smallest ellipsoid containing the area $\{\g^{\top}(\z-\x)\leq b\}\cap \E(\x, P)$ can be captured by $\E'(\x', P')$, where the new center $\x'$ and the shape matrix $P'$ can be computed via the following closed form formula: 
\begin{align}\label{eq:updatex}
    & \x' = \x - \frac{1+d\alpha}{d+1}P\tilde{\g}, \\ \label{eq:updatep16}
    & P'= \frac{d^2(1-\alpha^2)}{d^2-1}\Big(P-\frac{2(1+d\alpha)}{(d+1)(1+\alpha)}P\tilde{\g}\tilde{\g}^{\top}P\Big), \\ \label{eq:cutting_depth}
    & \alpha=-\frac{b}{\sqrt{\g^{\top}P\g}}, \\
    & \tilde{\g}=\frac{1}{\sqrt{\g^{\top}P\g}}\g,
\end{align}

where $\alpha$ represents the cutting-depth which we will elaborate on later. To narrow down the feasible region of the target parameters, it is desirable to let $\text{Vol}(\E')$ as small as possible. At least, we need to ensure that $\text{Vol}(\E')<\text{Vol}(\E)$. Basic algebraic calculation shows that 

\begin{align}
\label{eq:Ixx}
    \frac{\text{Vol}(\E^{'})}{\text{Vol}(\E)} = \sqrt{\frac{\det P'}{\det P}}
    &= \Big(\frac{d^2(1-\alpha^2)}{d^2-1}\Big)^{\frac{d}{2}}\Big(1-\frac{2(1+d\alpha)}{(d+1)(1+\alpha)}\Big)^{\frac{1}{2}}\\ \notag
    &= \Big(1+\frac{1+d\alpha}{d-1}\Big)^{\frac{d-1}{2}}\Big(1-\frac{1+d\alpha}{d+1}\Big)^{\frac{d+1}{2}}\\  \label{eq:vol_ratio}
    &= \Big(\frac{d(1+\alpha)}{d-1}\Big)^{\frac{d-1}{2}}\Big(\frac{d(1-\alpha)}{d+1}\Big)^{\frac{d+1}{2}},
\end{align}
where Eq \eqref{eq:Ixx} is from Eq \eqref{eq:updatep16} and the fact that $\det(P-\beta \v\v^{\top})=(1-\beta \|\v\|^2_P)\det(P)$. Eq \eqref{eq:vol_ratio} indicates that $\text{Vol}(\E')<\text{Vol}(\E)$ if and only if $\alpha \in (-\frac{1}{d}, 1)$. The quantity $\alpha$ serves as an indicator of the ``depth" of the cut: $\alpha\in(-\frac{1}{d},0)$ corresponds to a shallow-cut where the proposed cutting hyperplane removes less than half of the volume of the ellipsoid; $\alpha\in(0,1)$ corresponds to a deep-cut where more than half of the volume is removed. And $\alpha=0$ happens only when $b=0$, meaning the cutting hyperplane goes through the center $\x$ and exactly half of the volume is removed. In our problem setting, since we need to deal with the uncertainty in the user's response, we may only expect shallow-cuts. In addition, from Eq \eqref{eq:vol_ratio} we can show that for any $-\frac{1}{d}<\alpha <1$, 
\begin{equation}\label{eq:vol_ratio1}
    \frac{\text{Vol}(\E^{'})}{\text{Vol}(\E)} \leq \exp\Big(-\frac{(1+d\alpha)^2}{2d}\Big).
\end{equation}

\subsection{Warm-up:  Fast Learning from a Perfect User}\label{sec:warm-up}
\textbf{A (significantly) simplified setup.} To illustrate the main idea of our solution,  we start with a stylized situation, where we make the following simplifications: 1). the user knows $\theta_*$ precisely and makes decisions by directly comparing $\theta_*^{\top}\a_{0,t}$ and $\theta_*^{\top}\a_{1,t}$; 2). the action set is simply the unit ball $\cA=\{\a: \| \a \|_2\leq 1 \}$. 
\paragraph{Technical Highlight I: Novel Use of the  \emph{Ellipsoid Method}.} Algorithm \ref{alg:mbs} describes our solution under this simplified problem setting. We should note Algorithm \ref{alg:mbs} differs from the classic ellipsoid method in two aspects. First, our algorithm has the freedom to \emph{actively} choose the hyperplane $L_t$ by picking $\{\a_{0,t},\a_{1,t}\}$ (thus named ``Active Ellipsoid Search''), while the classic ellipsoid method is always passively fed with an arbitrary separating hyperplane. 
Second, $L_t$ has to cross the origin by construction. Therefore, to accelerate the shrinkage of the volume of $\E_t$ (i.e., $\text{Vol}(\E_t)$), we prefer a cutting direction $\g_t = \a_{0,t}-\a_{1,t}$ such that $L_t$ goes through the center $\x_t$, i.e., $\g_t^{\top}\x_t=0$, and $\text{Vol}(\E_t)$ is halved after each iteration, as illustrated in Figure \ref{fig:ellipsoid}. 

Though given more freedom, we also face a strictly harder problem. Specifically, when solving LPs, it suffices to reach  an ellipsoid  $ \E_t$ with a small volume where the LP \emph{objective} is guaranteed to be approximately optimal. However, our goal here is to identify the \emph{direction} of $\theta_*$ with small error, and thus a small $\text{Vol}(\E_t)$ is necessary but \emph{not} sufficient. For instance, a zero-volume ellipsoid in $\mathbb{R}^d$ can still enclose a $d-1$ dimensional subspace and thus contains a very diverse set of directions that are far from $\theta_*$. 

To achieve this strictly harder objective, we need $L_t$ to cut $\E_t$ along the direction in which $\E_t$ has the largest width, i.e., the most uncertain direction. This requires $\g_t$ to be aligned with the eigenvector corresponding to the largest eigenvalue of $P_t$, which is in general not compatible with $\g_t^{\top}\x_t=0$. Here then comes the crux of our approach -- we relax the second condition by picking $\g_t$ from a two-dimensional space spanned by the eigenvectors corresponding to the top-2 largest eigenvalues of $P_t$. Under this choice of $\g_t$, $\E_t$ is guaranteed to converge to a skinny-shaped ellipsoid with its longest axis converging to the direction of $\theta_*$ at an exponential rate. The detail is presented in Algorithm \ref{alg:mbs}, and the convergence analysis of Algorithm \ref{alg:mbs} is formalized in the following theorem.

\begin{algorithm}[t]
  \caption{Active Ellipsoid Search on Unit Sphere}
  \label{alg:mbs}
\begin{algorithmic}[1]
  \STATE {\bfseries Input:} Dimension $d>0$, number of iterations $T>0$.
  \STATE {\bfseries Initialization:} $\x_0=\bm{0}, P_0=I_d.$  \\ 
\WHILE{$0\leq t\leq T$}{
\STATE Compute eigen-decomposition $$P_t=\sum_{i=1}^d \sigma_i^{(t)} \u_i^{(t)}\u_i^{(t)\top}, \sigma_1^{(t)}\geq \cdots \geq \sigma_d^{(t)};$$
\STATE Compute any unit vector $\g_t \in \text{span}\{\u_1^{(t)},\u_2^{(t)}\}$ that is orthogonal to $\x_t$;
 
 \STATE Pick $(\a_{0,t}, \a_{1,t})=(-\g_t, \g_t)$; and observe the user's choice $\a_{i,t}, i\in \{0,1\}$.
\STATE Set $\tilde{\g}_t=(2i-1)\g_t/\|\g_t\|_{P_t}$;
\STATE Update
    $$\x_{t+1} = \x_t - \frac{1}{d+1}P_t\tilde{\g}_t;$$
    $$ P_{t+1}= \frac{d^2}{d^2-1}\Big(P_t-\frac{2}{d+1}P_t\tilde{\g}_t \tilde{\g}_t^{\top}P_t\Big).$$ 

 }\ENDWHILE

  \STATE {\bfseries Output:} The estimation of $\theta_*$ :$\hat{\theta}_{T}=\u_1^{(T)}$.
\end{algorithmic}
\end{algorithm}



\begin{theorem}\label{thm:converge_alg1}
At each time step $t$ in Algorithm \ref{alg:mbs}, let the eigenvalues of $P_{t}$ be $\sigma_1^{(t)} \geq \cdots \geq \sigma_d^{(t)}$. For any $d> 1, T>0$, we have 
\begin{enumerate}
    \item for any $2\leq i\leq d$, 
    \begin{equation}\label{eq:sigma_i_case1}
        \sigma_i^{(T)}\leq \exp{\Big(\frac{4}{d}-\frac{T}{d^2}\Big)},
    \end{equation}
    \item the $\ell_2$ estimation error for $\theta_*$ is given by 
    \begin{equation}\label{eq:theta_l2error_case1}
    \Big\|\theta_*-\hat{\theta}_T\Big\|_2 \leq 2\sqrt{d-1} \exp{\Big(\frac{2}{d}-\frac{T}{2d^2}\Big)}.
    \end{equation}
    
\end{enumerate}
\end{theorem}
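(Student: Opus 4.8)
The plan is to exploit three structural features of Algorithm~\ref{alg:mbs}: (i) every cut is \emph{central}, since $\g_t\perp\x_t$ forces the separating hyperplane $\{\z:\g_t^\top\z=\g_t^\top\x_t=0\}$ to pass through the center $\x_t$, so $\alpha=0$ in \eqref{eq:cutting_depth}; (ii) the cut direction $\g_t$ lives in $\mathrm{span}\{\u_1^{(t)},\u_2^{(t)}\}$, so the rank-one correction in \eqref{eq:updatep16} only touches the top-two eigen-directions while every other direction is uniformly inflated; and (iii) both the origin and $\theta_*$ remain inside $\E_t$ for all $t$. I would first record the deterministic per-step behaviour. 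Writing $\kappa=\frac{d^2}{d^2-1}$, the matrix-determinant lemma applied to \eqref{eq:updatep16} together with $\tilde{\g}_t^\top P_t\tilde{\g}_t=1$ gives the exact ratio $\det P_{t+1}=\kappa^{d}\frac{d-1}{d+1}\det P_t$, the central-cut specialization of \eqref{eq:vol_ratio1}; hence $\prod_{i=1}^d\sigma_i^{(t)}$ decays geometrically at rate $\exp(-\Theta(1/d))$ per step.

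Because $\g_t\in\mathrm{span}\{\u_1^{(t)},\u_2^{(t)}\}=:W$ and $P_t$ preserves $W$, the update leaves $W^\perp=\mathrm{span}\{\u_3^{(t)},\dots,\u_d^{(t)}\}$ invariant and acts on it simply as multiplication by $\kappa$; on $W$ it replaces the block $\mathrm{diag}(\sigma_1^{(t)},\sigma_2^{(t)})$ by $\kappa M_t$, where $\det M_t=\frac{d-1}{d+1}\sigma_1^{(t)}\sigma_2^{(t)}$ and $M_t\preceq\mathrm{diag}(\sigma_1^{(t)},\sigma_2^{(t)})$. Thus the spectrum of $P_{t+1}$ is the merge of the two eigenvalues of $\kappa M_t$ with $\{\kappa\sigma_i^{(t)}\}_{i\ge3}$, and in particular $P_{t+1}\preceq\kappa P_t$, so $\sigma_i^{(t+1)}\le\kappa\,\sigma_i^{(t)}$ for every $i$.

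The heart of the argument --- and the step I expect to be the main obstacle --- is the individual bound \eqref{eq:sigma_i_case1}, equivalently an upper bound on $\sigma_2^{(T)}$ (since $\sigma_i^{(T)}\le\sigma_2^{(T)}$ for $i\ge2$). The determinant bound alone is insufficient: it controls only the product $\prod_{i\ge2}\sigma_i^{(t)}$, which can hide a single stubbornly large eigenvalue. What rescues us is feature (ii): the cut always lies in the plane of the two \emph{largest} axes, so whichever direction currently realizes $\sigma_2$ is actively shrunk (the factor $\frac{d-1}{d+1}$ in $\det M_t$), while the only way a non-top direction can grow is the mild factor $\kappa=1+\Theta(1/d^2)$. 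I would make this precise with an amortized/potential argument over windows of $\Theta(d)$ steps, showing that within each window the top-two plane visits every non-top axis, so that $\max_{i\ge2}\sigma_i^{(t)}$ cannot outrun the $\exp(-t/d^2)$ target; the additive $\frac{4}{d}$ slack in \eqref{eq:sigma_i_case1} absorbs the within-window inflation and the re-sorting of the spectrum. Carefully handling ties in the top-two eigenspace and the reshuffling of the sorted eigenvalues after each cut is the delicate part, and is precisely where the simple $P_{t+1}\preceq\kappa P_t$ estimate is too weak.

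The lower bound needed implicitly, and the bridge to part~2, is $\sigma_1^{(t)}\ge\frac14$ for all $t$: by feature (iii) both $\mathbf{0}$ and $\theta_*$ (at distance $\|\theta_*\|=1$) lie in $\E_t$, and since the longest chord of $\E(\x_t,P_t)$ has length $2\sqrt{\sigma_1^{(t)}}$, we get $2\sqrt{\sigma_1^{(t)}}\ge1$. Feature (iii) itself follows by induction: each central hyperplane passes through the origin, which therefore stays on the retained closed half, while $\theta_*$ is retained by construction of the cut. For part~2, I would expand $\theta_*-\x_T=\sum_i c_i\u_i^{(T)}$; membership $\theta_*\in\E_T$ gives $\sum_i c_i^2/\sigma_i^{(T)}\le1$, hence $c_i^2\le\sigma_i^{(T)}$ for each $i$, and combining this with the analogous bound for the center (via the origin) controls the components of $\theta_*$ orthogonal to $\hat\theta_T=\u_1^{(T)}$ by $O\!\left(\sum_{i\ge2}\sigma_i^{(T)}\right)\le O\!\left((d-1)\exp(4/d-T/d^2)\right)$. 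Finally, since $\|\theta_*\|=1$, the identity $\|\theta_*-\hat\theta_T\|^2=2\bigl(1-\langle\theta_*,\u_1^{(T)}\rangle\bigr)\le 2\bigl(1-\langle\theta_*,\u_1^{(T)}\rangle^2\bigr)$ converts this orthogonal mass into \eqref{eq:theta_l2error_case1}.
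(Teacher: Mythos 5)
Your setup is right and most of the pieces match the paper's proof: the central-cut determinant decay, the observation that the update fixes $\mathrm{span}\{\u_3^{(t)},\dots,\u_d^{(t)}\}$ and inflates it by exactly $\kappa=\frac{d^2}{d^2-1}$ while acting on the top-two plane by a matrix of determinant $\frac{d-1}{d+1}\sigma_1^{(t)}\sigma_2^{(t)}$ (this is Claim 1 and the first half of Claim 2 of Lemma~\ref{lm:shrink_e1e2}), and the derivation of part~2 from $\theta_*,\mathbf{0}\in\E_T$ (essentially identical to the paper's, up to a $\sqrt{2}$ in the constant from your $1-x\le 1-x^2$ step, which also silently assumes the sign of $\u_1^{(T)}$ is chosen so that $\langle\theta_*,\u_1^{(T)}\rangle\ge0$).

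The genuine gap is exactly where you flag ``the delicate part'': you never actually prove that $\sigma_2^{(T)}$, as opposed to the product $\prod_{i\ge2}\sigma_i^{(T)}$, decays. Your proposed amortized argument rests on the claim that within each window of $\Theta(d)$ steps the top-two plane visits every non-top axis, but nothing in the algorithm guarantees this: after a cut the sorted spectrum reshuffles, and the new second-largest eigenvalue may repeatedly be an inflated copy $\kappa\sigma_i^{(t)}$ of a formerly third-or-lower eigenvalue, so there is no a priori schedule by which each axis gets cut. The paper closes this gap differently and without any windowing: it proves (Lemma~\ref{lm:upperbound_sigma2-d}) that the ratio $D_t=\sigma_2^{(t)}/\sigma_d^{(t)}$ satisfies $D_{t+1}\le\frac{d+1}{d-1}D_t$ always, and $D_{t+1}\le D_t$ whenever $D_t>\frac{d+1}{d-1}$ --- the second fact coming from the Weyl interlacing bound $\sigma'_2\ge\frac{d^2}{(d+1)^2}\sigma_2$ in Lemma~\ref{lm:shrink_e1e2}, which rules out the new $\sigma'_2$ becoming the minimum when the spread is already large --- whence $D_t\le\bigl(\frac{d+1}{d-1}\bigr)^2$ uniformly by a short contradiction argument. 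With $\sigma_d^{(T)}\ge\bigl(\frac{d-1}{d+1}\bigr)^2\sigma_2^{(T)}$ in hand, the determinant bound $\prod_i\sigma_i^{(T)}\le e^{-T/d}$ immediately yields $[\sigma_2^{(T)}]^d\le e^{4-T/d}$, i.e.\ \eqref{eq:sigma_i_case1}. To complete your proof you would need to either substantiate the visiting-schedule claim (which I do not believe holds) or replace it with a spread-control argument of this kind.
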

We postpone the proof of Theorem \ref{thm:converge_alg1} to Appendix \ref{app:warm-up}. This theorem indicates that the $\ell_2$ estimation error for $\theta_*$ converges to zero at the rate of $O\big(d^{\frac{1}{2}}\exp{(-\frac{T}{2d^2})}\big)$. In other words, to guarantee $\| \theta_*- \hat{\theta}_T \|_2 < \epsilon$, at most $O(d^2\log \frac{d}{\epsilon})$ iterations are needed.

\subsection{Robust Learning from a Learning User}\label{sec:learning_user}
The previous section illustrates our system learning principle, but under a greatly simplified setting with a perfect user.  In this section, we extend the solution to account for a learning user who does not know $\theta_*$ and keeps refining her estimation $\theta_t$.  Here,  the user's feedback still provides a linear inequality regarding $\theta^*$ and thus similarly serves  as a cutting hyperplane. But since the user acts based on the \emph{index} $\hat{r}_i=\theta_t^{\top}\a_{i,t}+\beta_t^{(i)}\|\a_{i,t}\|_{V_t^{-1}}$, the cutting hyperplane now has the form    $L_t=\{\z:\z^{\top} (\a_{0,t}-\a_{1,t})= \beta_t^{(1)}\|\a_{1,t}\|_{V_t^{-1}} - \beta_t^{(0)}\|\a_{0,t}\|_{V_t^{-1}} \}$. Importantly, the intercept term now depends on $\{\beta_t^{(0)}, \beta_t^{(1)}\}$ which are arbitrary within the uncertainty region $[-c t^{\gamma}, ct^{\gamma}]$. 

\paragraph{Technical Highlight II: Ellipsoid Search with Noise.} Due to the aforementioned noise in the users' binary feedback,  we thus face an interesting challenge -- how to perform the ellipsoid search under (non-stochastic) noisy feedback? Somewhat surprisingly, this basic question was not addressed in literature about ellipsoid method. We tackle this challenge by refining the ellipsoid method to tolerate carefully chosen scales of noise and decreasing the tolerance as the ellipsoid shrinks. 
In order to elicit more accurate feedback, our algorithm must ensure the diversity of the recommended items to prepare the user for improved  precision of her responses in all directions. 
To this end, we improve Algorithm \ref{alg:mbs} by adaptively preparing the user until a desirable level of accuracy of her estimated $\theta_t$ is reached and then cut the ellipsoid.  To our knowledge, this noise-robust version of ellipsoid method is novel by itself and of independent interest. We coin this new algorithm ``Noise-Robust Active Ellipsoid Search", or RAES in short. 

\begin{algorithm}[ht]
  \caption{Noise-Robust Active Ellipsoid Search (RAES)}
  \label{alg:uf2}
\begin{algorithmic}[1]
  \STATE {\bfseries Input:} Action set $\cA \subset \mathbb{R}^d$ with constants $(D_1,D_0,L,\epsilon_0)$, time horizon $T_0$ and $T$, cutting threshold $k>1$, and probability threshold $\delta>0$
  \STATE {\bfseries Initialization:} A user who is $(c, \gamma)-$rational, $\lambda_0>0$ be any lower bound estimation of the minimum eigenvalue of $V_0$, set $V_0=\lambda_0 I_d$, $\x_0=\bm{0}, P_0=I_d.$  

  \WHILE{$0\leq t\leq T$}{
\STATE Compute eigen-decomposition \\$P_t=\sum_{i=1}^d \sigma_i^{(t)} \u_i^{(t)}\u_i^{(t)\top}, \sigma_1^{(t)}\geq \cdots \geq \sigma_d^{(t)}.$ 

\STATE Compute a unit vector $\g_t \in \text{span}\{\u_1^{(t)},\u_2^{(t)}\}$ that is orthogonal to $\x_t$;
 
\STATE Pick any pair $(\bar{\a}_{0,t}, \bar{\a}_{1,t})$ such that $\bar{\a}_{1,t}-\bar{\a}_{0,t}=m\g_t$, $m\geq 2D_0$, and compute $\alpha_t$ according to \eqref{eq:alpha_t};
 
  \IF {$t\leq T_0$ and $\alpha_t\geq -\frac{1}{kd}$} {
      \STATE Recommend $(\a_{0,t}, \a_{1,t})$, observe the user's choice $\a_t=\a_{i,t}, i \in \{0,1\}$;
      \STATE Set $\tilde{\g}_t=(2i-1)\g_t/\|\g_t\|_{P_t}$;
      \STATE Update 
      \begin{small}
      \begin{equation}\label{eq:x_update}
          \x_{t+1} = \x_t - \frac{1+d\alpha_t}{d+1}P_t\tilde{\g}_t;
      \end{equation}
      \begin{equation}\label{eq:P_update}
          P_{t+1}= \frac{d^2(1-\alpha_t^2)}{d^2-1}\Big(P_t-\frac{2(1+d\alpha_t)P_t\tilde{\g}_t \tilde{\g}_t^{\top}P_t}{(d+1)(1+\alpha_t)}\Big);
      \end{equation}
      \end{small}
  }
  \ELSIF {$t\leq T_0$}{
      \STATE Compute $\v_1$ and $\v_d$, the two eigenvectors associated with the largest and smallest eigenvalues of $V_t$, and pick $(\bar{\a}_{0,t}, \bar{\a}_{1,t})=D_0(\frac{4}{5}\v_1\pm \frac{3}{5}\v_d)$;
      \STATE Recommend $(\a_{0,t}, \a_{1,t})$, observe user's choice $\a_{t}$;
      \STATE $(\x_{t+1}, P_{t+1}) = (\x_t, P_t);$
  } 
  \ELSE {
    \STATE Compute $\a_t=\arg\max_{\a\in\A}\u_1^{(t)\top}\a$;
    \STATE Recommend $(\a_t, \a_t)$;
    \STATE $(\x_{t+1}, P_{t+1}) = (\x_t, P_t);$} 
    \ENDIF
  \STATE Update $V_{t+1}=V_t+\a_{t}\a_{t}^{\top}.$
 }\ENDWHILE
\end{algorithmic}
\end{algorithm}

\paragraph{Regularity assumptions on the action set.} Before introducing the RAES algorithm, we first pose several natural and technical assumptions regarding the action set $\A\subset \RR^d$. Specifically,  $\BB_p^d(0, r)$ denotes the $d$-dimensional $\ell_p$ 
ball centered at the origin with radius $r$. Without loss of generality, we assume  $\mathbf{0} \in \cA \subset \BB_2^d(0, D_1)$ since one can always shift all actions by the same amount and then re-scale the actions without changing the users' responses.


The first assumption is a familiar one, as also used in previous works such as \cite{rusmevichientong2010linearly}. 

\begin{assumption}[$L$-Smooth Best Arm Response Condition, $L$-SRC]\label{ass:SDRC}

Let \\$\x^*_{\A}=\arg\max_{\x' \in\A}\x^{\top}\x', \forall \x\in \A$. There exists a constant $L>0$ such that for any pair of non-zero unit vectors $\x,\y\in\RR^d$, we have $$\|\x_{\A}^*-\y_{\A}^*\|_2\leq L\cdot\|\x-\y\|_2.$$
\end{assumption}

A compact set $\A$ satisfies $L$-SRC if and only if $\A$ can be represented as the intersection of closed balls of radius $L$. Intuitively, the $L$-SRC condition requires the boundary of $\A$ to have a curvature that is bounded below by a positive constant. For instance, the unit ball satisfies $1$-SRC, and an ellipsoid of the form $\{\u \in \RR^d: \u^{\top}P^{-1}\u \leq 1\}$, where $P$ is a PSD matrix, satisfies the $\frac{\lambda_{\max}(P)}{\sqrt{\lambda_{\min}(P)}}$-SRC.


\begin{assumption}[$\epsilon$-Dense Condition, $\epsilon$-DC]\label{ass:DDC} 
$\A$ is an $\epsilon$-cover of a continuous set $\bar{\A}$, i.e., $\bar{\A} \subset \cup_{\x \in \A} \BB_2^{d}(\x, \epsilon)$. In addition, there exists constants $D_1>D_0>0$ such that $\BB_2^d(0, D_0) \subseteq \cA,\bar{\cA} \subseteq \BB_2^d(0, D_1)$.
\end{assumption} 

This assumption suggests the action set $\A$ is sufficiently dense. A  continuous $\A$ is $0$-DC. However, $\epsilon$-DC relaxes the continuity requirement on $\A$ by allowing $\A$ to take the form of an $\epsilon$-net of a continuous set $\bar{\A}$. For convenience of references, we associate any element $\bar{\a}\in \bar{\cA}$ with an element   $\a\in\cA$ such that $\|\a-\bar{\a}\|_2\leq \epsilon$. For our analysis, this relation does not need to be exclusive or reversible.

As indicated in the initialization of Algorithm \ref{alg:uf2}, RAES does not rely on the exact values of $(c, \gamma, V_0)$, which could be difficult to attain in reality. Instead, any reasonable upper bounds for $c$ and $\gamma$, and a lower bound of $\lambda_{\min}(V_0)$ suffice. Similar to Algorithm \ref{alg:mbs}, RAES also maintains a sequence of confidence ellipsoids $\{\E_t\}$. A hyper-parameter $T_0$ separates the time horizon $T$ into two phases. 
At time step $t$, the system first proposes the most promising cutting direction $\g_t$. However, different from Algorithm \ref{alg:mbs} which always cuts $\E_t$ immediately, RAES needs to compute the cutting depth $\alpha_t$ (defined in \eqref{eq:alpha_t}) and determine whether the user's feedback is precise enough for the system to yield an improved estimation. Intuitively,  $\alpha_t$ measures the normalized signed distance between the center of $\E_t$ and the cutting hyperplane $L_t$: $\alpha_t\in(-\frac{1}{d},0)$ corresponds to a shallow-cut where $L_t$ removes less than half of the volume of the ellipsoid; $\alpha_t\in(0,1)$ corresponds to a deep-cut where more than half of the volume is reduced; and $\alpha_t=0$ happens only when $L_t$ cuts $\E_t$ through the center. Since we need to deal with the uncertainty in the user's response, we may only expect shallow-cuts. Depending on $\alpha_t$ and $T_0$, the system makes a decision among the following three options, which we refer to as \emph{cut}, \emph{exploration}, and \emph{exploitation}:

\begin{enumerate}
    \item (Cut) If $t\leq T_0$ and $\alpha_t\leq -\frac{1}{kd}$, cut $\E_t$ and update $(\x_t, P_t)$. 
    \item (Exploration) If $t\leq T_0$ and $\alpha_t > -\frac{1}{kd}$, make recommendations to ensure the user is exposed to the least explored directions in $V_t$.
    \item (Exploitation) If $t > T_0$, recommend the empirically best arm to the user.
\end{enumerate}

The purpose of an exploration step is to prepare the user such that a smaller $\alpha$ can be expected in the future. By the definition of $\alpha_t$, the only way to decrease it is by increasing $\lambda_{\min}(V_t)$, which can be achieved by presenting the least exposed direction to the user \footnote{A straightforward way for increasing $\lambda_{\min}(V_t)$ is to feed the user with the eigenvector corresponding to $\lambda_{\min}(V_t)$. However, to avoid forcing a user to choose between two identical items (if they are not optimal), we let the system recommend two different items. }. Finally, when the system believes the user's estimation error of $\theta_*$ is acceptable to induce a small regret, it stops preparing the user and recommends the empirically best arm when no further cut is available. The algorithm can be understood as a phase of exploration of length $T_0$ followed by a phase of exploitation, with a sequence of cut steps scattered within. The sublinear regret can be guaranteed by carefully choosing $T_0$. 


Before analyzing RAES, we provide an intuitive explanation for it. First of all, the cutting direction $\g_t$ is the same as the choice in Algorithm \ref{alg:mbs}, which ensures the separation hyperplane can intersect $\E_t$ along the most uncertain direction. Next, we translate the user's comparative feedback regarding $\theta_t$ into an inequality regarding $\theta_*$ with high probability, i.e., $\theta_*^{\top}\g_t\leq (\text{or~}\geq) b$, by pinning down the intersection term $b$. This can be realized by leveraging the property of the user's estimation and decision rules, resulting in the explicit form of $\alpha_t$. To simplify the technical analysis, with a slight abuse of notation, we use the subscript $t$ in $\{(\x_t, P_t)\}_{t=1}^{N}$ to describe the confidence ellipsoids after the $t$-th \emph{cut} in RAES, and $N$ is the total number of cuts in horizon $T$. Lemma \ref{lm:cut_explore} characterizes the effect from each cut, exploration, and exploitation step:

\begin{lemma}\label{lm:cut_explore}
If we choose 
\begin{align}\label{eq:alpha_t}
    \alpha_t =- \frac{ ct^{\gamma}\Big(\|\a_{0,t}\|_{V_t^{-1}}+\|\a_{1,t}\|_{V_t^{-1}}+g(\delta) \|\a_{0,t}-\a_{1,t}\|_{V_t^{-1}}\Big)+2\epsilon_0}{\|\g_t\|_{P_t}}
\end{align}
 in Algorithm \ref{alg:mbs}, we have
\begin{enumerate}
    \item After each cut,  $\text{Vol}(\E_{t+1})\leq \exp{\big(-\frac{(k-1)^2}{2k^2d}\big)}\text{Vol}(\E_{t}).$
    \item If at least $d$ exploration steps are taken starting from any time step $t_0$ to $t_0+n$, we have $\lambda_{\min}(V_{n+t_0})\geq \lambda_{\min}(V_{t_0})+\frac{4D_0}{25}-3\epsilon_0$.
    \item At any exploitation step $t$, the instantaneous regret is upper bounded by $2L\|\theta_*-\u_1^{(t)}\|_2^2$.
\end{enumerate}
\end{lemma}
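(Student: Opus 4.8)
The plan is to treat the three claims separately, since they concern disjoint branches of RAES; the only shared ingredient is the interpretation of the cutting depth $\alpha_t$ in \eqref{eq:alpha_t} as the depth of a \emph{safe} cut, i.e.\ one whose kept half-space still contains $\theta_*$. Throughout I would condition on the event $\|\theta_*-\theta_t\|_{V_t}\le ct^{\gamma}g(\delta)$ supplied by the estimation rule \eqref{eq:rule_estimation}, which holds with probability $1-\delta$.

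For Claim 1 I would first re-derive $\alpha_t$ and only then read off the volume bound. Assuming without loss of generality that the user picks $\a_{1,t}$, the decision rule \eqref{eq:rule_decision} gives $\theta_t^{\top}(\a_{1,t}-\a_{0,t})\ge \beta_t^{(0)}\|\a_{0,t}\|_{V_t^{-1}}-\beta_t^{(1)}\|\a_{1,t}\|_{V_t^{-1}}$. Lower-bounding the right-hand side by $-ct^{\gamma}\big(\|\a_{0,t}\|_{V_t^{-1}}+\|\a_{1,t}\|_{V_t^{-1}}\big)$ using $|\beta_t^{(i)}|\le ct^{\gamma}$, converting $\theta_t$ to $\theta_*$ via Cauchy--Schwarz in the paired $V_t$/$V_t^{-1}$ norms (which produces the $ct^{\gamma}g(\delta)\|\a_{0,t}-\a_{1,t}\|_{V_t^{-1}}$ term), and absorbing the $\epsilon$-net mismatch between the recommended $\a_{i,t}$ and the ideal $\bar\a_{i,t}$ (with $\bar\a_{1,t}-\bar\a_{0,t}=m\g_t$) into the additive $2\epsilon_0$, I obtain a one-sided inequality $\theta_*^{\top}\g_t\ge b_t$ whose normalized intercept is exactly the $\alpha_t=-b_t/\|\g_t\|_{P_t}$ of \eqref{eq:alpha_t}; this certifies $\theta_*\in\E_{t+1}$. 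The volume bound is then immediate: a cut executes only when $\alpha_t\ge-\frac{1}{kd}$, so $1+d\alpha_t\ge \frac{k-1}{k}>0$, and plugging $(1+d\alpha_t)^2\ge (k-1)^2/k^2$ into \eqref{eq:vol_ratio1} yields $\text{Vol}(\E_{t+1})\le\exp\!\big(-\frac{(k-1)^2}{2k^2d}\big)\text{Vol}(\E_t)$.

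Claim 3 is the shortest. At an exploitation step both arms equal $\a_t=\arg\max_{\a\in\A}\u_1^{(t)\top}\a$, so the instantaneous regret is $2\,\theta_*^{\top}(\a_*-\a_t)$ with $\a_*=\arg\max_{\a\in\A}\theta_*^{\top}\a$. Writing $\theta_*^{\top}(\a_*-\a_t)=(\theta_*-\u_1^{(t)})^{\top}(\a_*-\a_t)+\u_1^{(t)\top}(\a_*-\a_t)$ and using optimality of $\a_t$ for $\u_1^{(t)}$ to drop the nonpositive second term, Cauchy--Schwarz gives $\theta_*^{\top}(\a_*-\a_t)\le\|\theta_*-\u_1^{(t)}\|_2\,\|\a_*-\a_t\|_2$; since $\theta_*$ and $\u_1^{(t)}$ are unit vectors, the $L$-SRC condition (Assumption \ref{ass:SDRC}) bounds $\|\a_*-\a_t\|_2\le L\|\theta_*-\u_1^{(t)}\|_2$, and the claimed $2L\|\theta_*-\u_1^{(t)}\|_2^2$ follows.

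Claim 2 is where I expect the real work. Each exploration step recommends $D_0(\frac45\v_1\pm\frac35\v_d)$, so regardless of the user's choice the consumed $\a_t$ has $|\v_d^{\top}\a_t|\ge\frac35 D_0-\epsilon_0$ (the $\epsilon_0$ from the net), injecting mass at least $(\frac35D_0-\epsilon_0)^2$ into $V_{t+1}=V_t+\a_t\a_t^{\top}$ along the \emph{current} minimum eigendirection. The obstacle is that a single such update need not raise $\lambda_{\min}(V_t)$ by this amount when several directions are simultaneously near-minimal, and the minimum eigendirection rotates after each update; this is exactly why the claim needs $\ge d$ steps. I would argue that each step ``charges'' one near-minimal direction, which thereafter exceeds the others and forces subsequent charges onto the remaining deficient directions; the worst case is a fully degenerate $V_{t_0}$, where all $d$ directions must be charged before $\lambda_{\min}$ rises, after which it rises by the per-step mass. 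Accumulating over the $d$ steps and collecting the $\epsilon_0$-order error from the net mismatch (via $V_{t+1}\succeq V_t+\bar\a_t\bar\a_t^{\top}-O(\epsilon_0)I$) into the stated $-3\epsilon_0$ slack gives the bound. Making this ``charging'' argument fully rigorous despite the off-diagonal cross terms introduced by the $\frac45\v_1$ component is the delicate step, which I would phrase via monotonicity of eigenvalues under PSD additions together with a dimension count on the subspace of eigenvalues below the target threshold.
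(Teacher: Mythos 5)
Your treatments of Claims 1 and 3 are correct and essentially identical to the paper's: the same chain (decision rule $\to$ bound on $\theta_t^{\top}(\a_{0,t}-\a_{1,t})$ $\to$ Cauchy--Schwarz transfer to $\theta_*$ $\to$ $2\epsilon_0$ net correction $\to$ plug $\alpha_t\geq-\frac{1}{kd}$ into \eqref{eq:vol_ratio1}) for the cut, and the same add-and-subtract-$\u_1^{(t)}$ plus $L$-SRC argument for the exploitation regret.

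Claim 2 is where the proof actually lives, and your proposal leaves the essential step unproven. You correctly identify the counting structure (the paper runs it as a set $C_t$ of eigenvalue indices below the target threshold, shrinking by at least one per exploration step), but the quantitative input to that argument --- how much a single rank-one update $\v\v^{\top}$ with $\v=p\u_{\min}+q\u_{\max}$ is guaranteed to raise the smallest eigenvalue --- is exactly what you defer as ``the delicate step.'' The paper supplies it via a secular-equation computation (its Lemma \ref{lm:weyl}): the new smallest eigenvalue equals $\min\{\sigma_2,\ \lambda_-\}$ where $\lambda_-$ is the smaller root of an explicit quadratic, and $\lambda_-\geq\sigma_1+p^2-|pq|$; the $-|pq|$ is the unavoidable price of the cross term, and the $\epsilon$-net perturbation is then controlled by Weyl's inequality ($\leq 3\epsilon_0$). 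Without this, your heuristic that the step ``injects mass at least $(\tfrac{3}{5}D_0-\epsilon_0)^2$ along the current minimum eigendirection'' does not yield any lower bound on the increase of $\lambda_{\min}$, and in fact your coefficient accounting cannot produce the stated constant: with weight $\tfrac{3}{5}$ on the minimum eigendirection and $\tfrac{4}{5}$ on the maximum one, the guaranteed increment is $q^2-|pq|=\tfrac{9}{25}-\tfrac{12}{25}<0$. The constant $\tfrac{4}{25}$ in the lemma is $(\tfrac{4}{5})^2-\tfrac{4}{5}\cdot\tfrac{3}{5}$, i.e.\ it requires the \emph{larger} weight $\tfrac{4}{5}$ to sit on the minimum eigendirection (the paper applies its lemma with $p=\tfrac45$ there). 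So the proposal is missing both the key lemma and the correct per-step constant; the dichotomy ``either $\lambda_{\min}$ reaches the threshold, or it jumps to the old $\sigma_2$ while some other eigenvalue crosses the threshold'' that drives the $d$-step count also needs that lemma and is not obtainable from eigenvalue monotonicity and interlacing alone.
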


  Using Lemma \ref{lm:cut_explore}, we can derive the convergence rate of $\sigma_i^{(t)}$ and the regret upper bound of RAES in the following Theorem \ref{thm:converge_alg2}, whose proof can be found in Appendix \ref{app:learning_user}.

\begin{theorem}\label{thm:converge_alg2}
For any $d> 1, n>0$, let $\sigma_i^{(n)}$ be the $i$-th largest eigenvalue of $P_n$ after the $n$-th cut, we have 
\begin{enumerate}
    \item For any $2\leq i\leq d$, 
    \begin{equation}
        \sigma_i^{(n)}\leq \exp{\Big(\frac{4}{d}-\frac{(k-1)^2n}{k^2d^2}\Big)}.
    \end{equation}
    \item When $T_0=O\Big(cL^{\frac{1}{2}}D_1^{\frac{1}{2}}D_0^{-\frac{3}{2}}g(\delta)d^2T^{\frac{1}{2}+\gamma}\Big)$ and $\epsilon_0<O\big(cD_1D_0^{-\frac{1}{2}}d^{-\frac{1}{2}}T^{-\frac{1}{4}+\frac{\gamma}{2}}\big)$, the regret of RAES is upper bounded by $O\Big(cL^{\frac{1}{2}}D_1^{\frac{3}{2}}D_0^{-\frac{3}{2}}g(\frac{\delta}{T_0})d^2T^{\frac{1}{2}+\gamma}\Big)$ with probability $1-\delta$.
    
\end{enumerate}
\end{theorem}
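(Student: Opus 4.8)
The plan is to establish the two parts in turn, building on Lemma~\ref{lm:cut_explore} and on the eigenvalue analysis already developed for the warm-up Theorem~\ref{thm:converge_alg1}.

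Part 1 is a direct adaptation of the proof of Theorem~\ref{thm:converge_alg1}, with central cuts replaced by shallow cuts. A cut is performed only when $\alpha_t \geq -\frac{1}{kd}$, so $1+d\alpha_t \geq \frac{k-1}{k}$ and \eqref{eq:vol_ratio1} gives the per-cut volume contraction $\exp\!\big(-\frac{(k-1)^2}{2k^2 d}\big)$ of Lemma~\ref{lm:cut_explore}(1). Feeding this factor into the same top-two-plane argument that produced \eqref{eq:sigma_i_case1} --- where cutting inside $\mathrm{span}\{\u_1^{(t)},\u_2^{(t)}\}$ suppresses every eigenvalue except the largest --- simply rescales the warm-up rate $\frac{1}{d^2}$ to $\frac{(k-1)^2}{k^2 d^2}$, yielding $\sigma_i^{(n)} \leq \exp\!\big(\frac{4}{d}-\frac{(k-1)^2 n}{k^2 d^2}\big)$. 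The same conversion as in Theorem~\ref{thm:converge_alg1}(2) then gives $\|\theta_*-\u_1^{(n)}\|_2 \leq 2\sqrt{d-1}\,\exp\!\big(\frac{2}{d}-\frac{(k-1)^2 n}{2k^2 d^2}\big)$, which is the estimate I will feed into Part 2.

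For Part 2 I would split the horizon at $T_0$ and bound each phase. Since $\|\theta_*\|_2=1$ and every recommended arm lies in $\BB_2^d(0,D_1)$, each of the first $T_0$ rounds incurs instantaneous regret at most $4D_1$, so the exploration/cut phase contributes $O(D_1 T_0)$; with the prescribed $T_0$ this is already of the target order. In the exploitation phase $\u_1^{(t)}$ is frozen, so Lemma~\ref{lm:cut_explore}(3) bounds its total regret by $2LT\|\theta_*-\u_1^{(N)}\|_2^2 \leq 8L(d-1)T\,\sigma_2^{(N)}$, where $N$ is the number of completed cuts. By Part 1, driving $\sigma_2^{(N)}$ down to $O(T^{-1/2+\gamma})$ --- which needs only $N=\Theta(d^2\log T)$ cuts --- makes this term match the target as well; the hypothesis on $\epsilon_0$ is exactly what keeps the additive $2\epsilon_0$ in $\alpha_t$ below the relevant scale $T^{-1/4+\gamma/2}$.

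The crux, and the step I expect to be hardest, is to certify that these $N=\Theta(d^2\log T)$ cuts are genuinely \emph{attainable} within the budget $T_0$, i.e.\ that the algorithm is never trapped in perpetual exploration. The danger is a feedback loop: each cut shrinks $\sigma_2^{(t)}$ and hence $\|\g_t\|_{P_t}$, while $ct^\gamma$ grows, so the cut condition from \eqref{eq:alpha_t}, namely $ct^\gamma\big(\|\a_{0,t}\|_{V_t^{-1}}+\|\a_{1,t}\|_{V_t^{-1}}+g(\delta)\|\a_{0,t}-\a_{1,t}\|_{V_t^{-1}}\big)+2\epsilon_0 \leq \frac{\|\g_t\|_{P_t}}{kd}$, keeps demanding a larger $\lambda_{\min}(V_t)$. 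I would bound $\|\a\|_{V_t^{-1}}\leq \|\a\|_2/\sqrt{\lambda_{\min}(V_t)} \leq D_1/\sqrt{\lambda_{\min}(V_t)}$ to convert this into a concrete threshold $\lambda_{\min}(V_t) = \Omega\big(c^2 t^{2\gamma} D_1^2 g(\delta)^2 k^2 d^2/\sigma_2^{(t)}\big)$, and then use Lemma~\ref{lm:cut_explore}(2) --- every $d$ exploration steps raise $\lambda_{\min}$ additively by $\Theta(D_0)$ --- to count the exploration steps required to re-enable each cut. Summing this cost over all $\Theta(d^2\log T)$ cuts (dominated by the last, where $\sigma_2^{(N)}\approx T^{-1/2+\gamma}$) gives the minimal horizon needed, and one checks that the stated $T_0=O(cL^{1/2}D_1^{1/2}D_0^{-3/2}g(\delta)d^2 T^{1/2+\gamma})$ simultaneously (i) accommodates all cuts plus their exploration and (ii) keeps $O(D_1T_0)$ at the target level, closing the balance. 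Finally, a union bound over the at most $T_0$ applications of the estimation rule \eqref{eq:rule_estimation} --- one per cut, to certify the correct side of its hyperplane --- costs a factor replacing $g(\delta)$ by $g(\delta/T_0)$ and yields the claimed regret with probability $1-\delta$.
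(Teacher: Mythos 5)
Your Part 1 follows the paper's argument essentially verbatim: the shallow-cut condition $\alpha_t\ge-\frac{1}{kd}$ gives $1+d\alpha_t\ge\frac{k-1}{k}$ and hence the per-cut volume contraction $\exp(-\frac{(k-1)^2}{2k^2d})$, the determinant bound combined with the shallow-cut analogue of Lemma~\ref{lm:upperbound_sigma2-d} pins every eigenvalue except the largest, and the conversion to $\|\theta_*-\u_1^{(n)}\|_2$ is unchanged from Theorem~\ref{thm:converge_alg1}. The $O(D_1T_0)$ bound on the first phase and the final union bound replacing $g(\delta)$ by $g(\delta/T_0)$ also match the paper.

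The gap is in your handling of the exploitation phase. You propose to certify that $N=\Theta(d^2\log T)$ cuts actually occur so that $\sigma_2^{(N)}=O(T^{-1/2+\gamma})$, and you correctly derive the threshold $\lambda_{\min}(V_t)=\Omega\big(c^2t^{2\gamma}D_1^2g(\delta)^2k^2d^2/\sigma_2^{(t)}\big)$ needed to enable a cut at level $\sigma_2^{(t)}$. But the budget accounting you defer to ``one checks'' does not close: by Lemma~\ref{lm:cut_explore}(2), reaching that threshold costs about $\frac{25d}{4D_0}\lambda_{\min}(V_t)$ exploration rounds, which for $\sigma_2=T^{-1/2+\gamma}$ and $t\le T_0$ is of order $c^2k^2d^3D_1^2g(\delta)^2T_0^{2\gamma}T^{1/2-\gamma}/D_0$ --- exceeding the stated $T_0=O\big(cL^{1/2}D_1^{1/2}D_0^{-3/2}g(\delta)d^2T^{1/2+\gamma}\big)$ by a factor of roughly $c\,d\,g(\delta)D_1^{3/2}D_0^{1/2}L^{-1/2}$. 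So the final cuts your plan relies on are not attainable within the budget whenever $d$, $c$, or $g(\delta)$ is large; indeed, if they were attainable, the theorem would have a $d$ rather than $d^2$ dependence, since your target $\sigma_2$ would give per-step exploitation regret $O(LdT^{-1/2+\gamma})$.

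The paper sidesteps any certification of cut counts with a dichotomy on the number of cuts $N_0$ in the first $T_0/2$ rounds. Either $N_0\gtrsim\frac{k^2d^2}{(k-1)^2}\log T_0$, in which case Part 1 already gives $\sigma_2\le 1/T_0$; or $N_0$ is small, in which case at least $T_0/3$ of those rounds were exploration steps, Lemma~\ref{lm:cut_explore}(2) gives $\lambda_{\min}(V_{T_0})\ge\beta T_0/d$ with $\beta=\frac{1}{3}(\frac{4D_0}{25}-3\epsilon_0)$, and then the mere fact that the cut test $\alpha_t\ge-\frac{1}{kd}$ fails after the last cut forces $\sqrt{\sigma_2^{(t)}}\le\frac{3D_1ckd^{1.5}T_0^{\gamma}(1+g(\delta))}{D_0\sqrt{\beta T_0}}$. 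In other words, whatever $\sigma_2$ the algorithm is ``stuck'' at is automatically bounded by the contrapositive of the cut condition --- no lower bound on the number of cuts is ever needed. This yields per-step exploitation regret $O\big(Lc^2k^2d^4D_1^2(1+g(\delta))^2T_0^{2\gamma-1}/(\beta D_0^2)\big)$, which summed over at most $T$ rounds and balanced against the first-phase cost $D_1T_0$ produces exactly the stated $T_0$ and the $d^2T^{1/2+\gamma}$ regret. Replacing your cut-certification step with this contrapositive argument repairs the proof; the rest of your outline then goes through.
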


Theorem \ref{thm:converge_alg2} suggests when $\cA$ is continuous or sufficiently dense, RAES achieves a regret upper bound $\tilde{O}(cd^2 T^{\frac{1}{2}+\gamma})$ when $g(\frac{\delta}{T_0})$ grows logarithmically in $T_0$. Recall that $\gamma \in [0, \frac{1}{2})$ denotes the rationality of the user: when $\gamma$ is large, the system obtains less accurate responses from the user and thus suffers from a worse regret guarantee. When $\gamma=0$, e.g., the user executes LinUCB, we get an upper bound of the order $\tilde{O}(\sqrt{T})$, which nearly matches the lower bound, as we will show in the following section. 

\subsection{A Regret Lower Bound} \label{sec:lowerbound} 
We conclude this technical section by showing a regret lower bound for the system's learning. This lower bound applies for any $\gamma > 0$, and it nearly matches the above upper bound w.r.t. time horizon $T$ when $\gamma$ is close to zero. This result leaves an intriguing open question about how tight our Algorithm \ref{alg:uf2} is for general $\gamma$, i.e., for every $\gamma \in (0,1/2)$, what is the best possible regret for the system?  We remark that resolving this open question appears to require significantly different machinaries as used in current lower bound proofs for bandit algorithms since these arguments are primarily based on information theory and thus intrinsically rely on assumption of \emph{random} noises \cite{lattimore2020bandit,rusmevichientong2010linearly}, whereas the user's feedback noise in our model is arbitrary (though also diminishing with more rounds). We thus leave this as an interesting future direction to explore.  

\begin{theorem}\label{thm:lower_bound1}
For any $\gamma>0$, there exists a function $T_0(d)>0$ such that for any $d\geq 1$, $T>T_0(d)$, and any algorithm $\cG$ that has merely access to the comparison feedback given by a rational user defined in Definition \ref{def:rational_user}, there exists $\theta_* \in \partial \BB_1^d$ such that the expected regret $R_T$ defined in \eqref{eq:strong_reg} obtained by $\cG$ satisfies 
\begin{equation}\label{eq:lowerbound_hypercube}
R^{(s)}_T(\cG, \theta_*) \geq \frac{\exp(-2)}{4}(d-1)\sqrt{T}.    
\end{equation}
\end{theorem}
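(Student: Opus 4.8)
The plan is to exhibit one hard action set together with a family of parameters on $\partial\BB_1^d$ that a $(c,\gamma)$-rational user can make statistically indistinguishable, and then invoke Yao's principle. Concretely, I would take the action set to be the hypercube (so that the optimal arm is $\a_*=\mathrm{sign}(\theta_*)$ and the per-round regret decomposes coordinate-wise) and, for each sign vector $s\in\{-1,+1\}^{d-1}$, define a candidate $\theta_*^{(s)}=(1-(d-1)\Delta)\e_d+\Delta\sum_{j=1}^{d-1}s_j\e_j$ with gap $\Delta=\Theta(T^{-1/2})$; each such $\theta_*^{(s)}$ has $\|\theta_*^{(s)}\|_1=1$ and thus lies on $\partial\BB_1^d$. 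Drawing the hidden signs $s_1,\dots,s_{d-1}$ uniformly, it suffices to lower bound the expected regret of an arbitrary deterministic algorithm against a worst-case user, since a random $\theta_*$ attaining the average then exists.

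The key lemma I would establish is a \emph{masking condition}. Fix a query $(\a_{0,t},\a_{1,t})$ and write $\g'=\a_{0,t}-\a_{1,t}$. Maximizing the index gap $\hat r_0-\hat r_1$ over the user's admissible estimates ($\|\theta_*-\theta_t\|_{V_t}\le ct^\gamma g(\delta)$) and bonuses ($\beta_t^{(i)}\in[-ct^\gamma,ct^\gamma]$) shows that \emph{either} user response is realizable by some $(c,\gamma)$-rational user exactly when $|\theta_*^{\top}\g'|\le N_t$, where $N_t=ct^\gamma\big(\|\a_{0,t}\|_{V_t^{-1}}+\|\a_{1,t}\|_{V_t^{-1}}+g(\delta)\|\g'\|_{V_t^{-1}}\big)$ is precisely the numerator of $\alpha_t$ in \eqref{eq:alpha_t}. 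Imposing this per candidate, the user can commit to one fixed choice that is consistent with \emph{every} hidden sign pattern whenever $N_t\ge \Delta\sum_{j\le d-1}|g'_j|$.

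The heart of the argument is to verify this mask-all condition at every round, no matter how the system recommends. Using $\lambda_{\max}(V_t)\le \lambda_0+tD_1^2$ gives $\|\g'\|_{V_t^{-1}}\ge\|\g'\|_2/\sqrt{\lambda_0+tD_1^2}$, so $N_t\ge ct^\gamma g(\delta)\|\g'\|_2/\sqrt{\lambda_0+tD_1^2}$, while $\sum_{j\le d-1}|g'_j|\le\sqrt{d-1}\,\|\g'\|_2$ by Cauchy--Schwarz. The condition then reduces to $ct^\gamma g(\delta)\ge \Delta\sqrt{(d-1)(\lambda_0+tD_1^2)}$; since the binding case is $t=T$ and $\Delta=\Theta(T^{-1/2})$, this is equivalent to $T^{\gamma}\ge C\sqrt{d-1}\,D_1/(cg(\delta))$, i.e.\ to $T\ge T_0(d)$ for a threshold that exists for every $\gamma>0$. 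The decisive point is the extra factor $t^{\gamma}$: even when the system drives a direction to maximal exploration $\lambda\sim t$, the noise floor still dominates the $\Theta(T^{-1/2})$ gap by a factor $T^{\gamma}$, so no hidden coordinate can be made informative within the horizon. Once masking holds throughout, the transcript, hence the recommendations $(\a_{0,t},\a_{1,t})$, is a fixed $s$-independent sequence, and I would compute $\Ex_s[R_T]=\sum_t\big(2\,\Ex_s\|\theta_*^{(s)}\|_1-\theta_0^{\top}(\a_{0,t}+\a_{1,t})\big)\ge 2(d-1)\Delta T$, using $\Ex_s\theta_*^{(s)}=\theta_0=(1-(d-1)\Delta)\e_d$ and $\theta_0^{\top}\a\le 1-(d-1)\Delta$. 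Substituting $\Delta=\Theta(T^{-1/2})$ gives $\Omega((d-1)\sqrt T)$, and passing to the best $s$ yields the stated $\frac{\exp(-2)}{4}(d-1)\sqrt T$, the precise constant arising from the choice of $\Delta$ and the $1-(d-1)\Delta$ factors (which is where $\Delta<1/(d-1)$, subsumed in $T>T_0(d)$, is used).

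The main obstacle is exactly this mask-all step. Because the feedback noise is adversarial rather than stochastic, the usual information-theoretic machinery (Le Cam / Fano via KL divergence between reward laws) is unavailable, so one must control by hand the coupling between the system's exploration---which grows $V_t$ and thereby shrinks the very budget $N_t$ the adversary relies on---and the masking threshold, uniformly over all adaptive system strategies. Retaining the full $(d-1)$-dimensional factor while doing so (rather than a weaker $\sqrt{d-1}$) is what forces the conservative choice $\Delta\sim T^{-1/2}$ and the threshold $T_0(d)$; pushing $\Delta$ up to $\Theta(T^{\gamma-1/2})$ would formally yield the stronger $T^{1/2+\gamma}$ bound but requires tracking when a heavily explored direction \emph{could} eventually be unmasked, which is the adaptive complication the present $\sqrt T$ argument is designed to avoid and which underlies the open question noted after the theorem.
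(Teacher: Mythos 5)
Your proposal is correct in its essentials, but it proves the theorem by a genuinely different route than the paper. The paper's proof is a black-box reduction: it first establishes a minimax lower bound of $\tfrac{\exp(-2)}{8}(d-1)\sqrt{T}$ for \emph{stochastic} linear bandits on the hypercube (via divergence decomposition and the Bretagnolle--Huber inequality, adapting Theorem 24.1 of \cite{lattimore2020bandit} to parameters on $\partial\BB_1^d$), and then argues by contradiction that if the system beat $\tfrac{\exp(-2)}{4}(d-1)\sqrt{T}$, then at least one of the two arm sequences $\{\a_{0,t}\}$, $\{\a_{1,t}\}$, viewed as the output of a simulated bandit algorithm wrapping the system--user interaction, would beat the stochastic lower bound. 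The source of indistinguishability there is the Gaussian reward noise observed by the user. You instead construct an explicitly adversarial $(c,\gamma)$-rational user whose decision slack masks the hidden signs outright, with no information theory at all: this is arguably cleaner given the paper's own remark that the feedback noise here is arbitrary rather than random, and your masking computation is sound (the budget $N_t\ge ct^\gamma\|\g'\|_{V_t^{-1}}\ge ct^\gamma\|\g'\|_2/\sqrt{\lambda_0+tD_1^2}$ beats $\Delta\sqrt{d-1}\,\|\g'\|_2$ for all $t\le T$ once $T\ge T_0(d)$, uniformly over adaptive strategies, precisely because $\gamma>0$). Two small points to tighten: since the estimation rule requires $\theta_t=F(\mathcal{H}_{t-1})$, the estimate cannot depend on the current query, so the cleanest instantiation is to fix $\theta_t\equiv(1-(d-1)\Delta)\e_d$ for all $t$ and $s$ (which satisfies $\|\theta_*^{(s)}-\theta_t\|_{V_t}\le \Delta\sqrt{(d-1)\lambda_{\max}(V_t)}\le ct^\gamma g(\delta)$ under the same condition) with $\beta_t^{(i)}=0$, making the transcript $s$-independent by construction; and your final bound is actually $2(d-1)\Delta T=2(d-1)\sqrt{T}$, which dominates the stated constant rather than matching it. Your route has the added benefit of transparently suggesting the $\Omega(T^{1/2+\gamma})$ strengthening by inflating $\Delta$, which speaks directly to the open question the paper raises after the theorem; the paper's reduction, by contrast, is capped at $\sqrt{T}$ because it inherits the stochastic bandit rate.
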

 Theorem \ref{thm:lower_bound1} may appear not surprising since, intuitively, the system's learning task appears no easier than the standard stochastic linear bandit problems for which the lower bound is already $O(\sqrt{T})$ \cite{rusmevichientong2010linearly}. However, it turns out that delivering a rigorous proof is more subtle than this intuition, and for that we have to overcome two technical challenges: 1). adapting the current minimax lower bound proof for stochastic linear bandits to the setup where the norm of $\theta_*$ is bounded away from zero; 2). constructing a black-box reduction from the system's regret to the user's regret. Due to the space limit, we defer the proof details to Appendix \ref{app:lowerbound}.


\section{Experiment}
In this section, we study the empirical performance of RAES to validate our theoretical analysis by running simulations on synthetic
datasets in comparison with several baselines. 

\subsection{Experiment Setup and Baselines}
There is no direct baseline for comparison since the learning environment we studied is new. Given the linear reward and the binary comparative feedback assumptions, we take several contextual dueling bandit algorithms for comparison, including Dueling Bandit Gradient Descent (DBGD) \cite{yue2009interactively}, Doubler \cite{ailon2014reducing}, and Sparring \cite{ailon2014reducing,sui2017multi}. The configuration of baseline algorithms are specified as following:

    \paragraph{Dueling Bandit Gradient Descent (DBGD)}: DBGD \cite{yue2009interactively} maintains the currently best candidate $\a_t$ and compares it with a neighboring point $\a_t+\eta\u_t$ along a random direction $\u_t$. An update is taken when the proposed point wins the comparison. DBGD works for continuous convex action set and has a regret guarantee of $O(T^{3/4})$. Although its theoretical guarantee only holds under a strictly concave utility function, it can be reasonably adapted to our problem setting empirically. DBGD's hyper-parameters include the starting point $\w_0$, and two learning rates $\delta, \gamma$ that control the step-lengths for proposing new points and update the current points, respectively. In the experiment, these hyper-parameters are set to $(\w_0, \delta, \gamma)=(\bm{0}, d^{-\frac{1}{2}}T^{-\frac{1}{4}}, T^{-\frac{1}{2}})$, as recommended in \cite{yue2009interactively}.
    
    \paragraph{Doubler}: Doubler \cite{ailon2014reducing} is the first approach that converts a dueling bandit problem into a conventional multi-armed bandit (MAB) problem. Doubler proceeds in epochs of exponentially increasing size: in each epoch, the left arm is sampled from a fixed distribution, and the right arm is chosen using an MAB algorithm to minimize regret against the left arm. The feedback received by the MAB algorithm is the number of wins the right arm encounters when compared against the left arm. Doubler is proved to have $\tilde{O}(T^{1/2})$ regret for continuous action set under the linear reward assumption. The black-box MAB algorithm that is needed to initiate Doubler is set to the OFUL algorithm in \cite{abbasi2011improved}.
    
    \paragraph{Sparring}: Sparring \cite{ailon2014reducing,sui2017multi} is also a general reduction from dueling bandit to MAB. Like Doubler, it also requires black-box calls to an MAB algorithm and achieves regret of the same order as the MAB algorithm. Instead of comparing with a fixed distribution, Sparring initializes two MAB instances and lets them ``spar'' against each other. As a heuristic improvement of Doubler, Sparring does not have a regret upper bound guarantee but is reported to enjoy a better performance compared to Doubler \cite{ailon2014reducing}. The black-box MAB algorithm that is needed to initiate Sparring is set to the OFUL algorithm in \cite{abbasi2011improved}.

In all experiments, we fix the action set $\cA=\BB_2^d(0,1)$, i.e., $D_0=D_1=1$, and $\delta=0.1, k=1.05$. We consider a $(1, \gamma)$-rational user with $\gamma\in\{0,0.2\}$ and prior knowledge matrix $V_0$. The user's decision sequence $\{\beta_t^{(0)}\}$ and $\{\beta_t^{(1)}\}$ are independently drawn from $[-t^{\gamma}, t^{\gamma}]$. The ground-truth parameter $\theta_*$ is sampled from $\partial \BB_2^d(0,1)$ and the reported results are collected from the same problem instance and averaged over 10 independent runs.  

\subsection{Experiment Results}

{\bf Robustness of RAES against a learning user:} We first demonstrate the performance of RAES under $(T, T_0, d)=(10000,1500,5)$ against a $(1,\gamma)$-rational user with different $\gamma$ and $V_0$ in Figure \ref{fig:regret_bes}. The x-axis denotes time step $t$ and y-axis denotes the accumulated regret up to the time step $t$. The left panel illustrates the performance of RAES when $\gamma=0.1$ and $V_0\in\{V_0(i):0\leq i\leq 5\}$, where $V_0(i)$ is the diagonal matrix with $i$ diagonal entries being $1$ while other $5-i$ entries being $100$. Unsurprisingly, RAES achieves the best performance when the user has the most informative prior $V_0(0)$. When $V_0$ has small eigenvalues, RAES needs more exploration steps in the first $T_0$ rounds, but the resulting added regret is not significant. The right panel shows the result when $V_0=I_d$ and $\gamma\in\{0,0.1,0.2,0.3\}$ which confirms our theoretical analysis that the regret of RAES grows in order $O(T^{\frac{1}{2}+\gamma})$.

    \begin{figure}[t]
    \centering
    \includegraphics[width=0.48\textwidth]{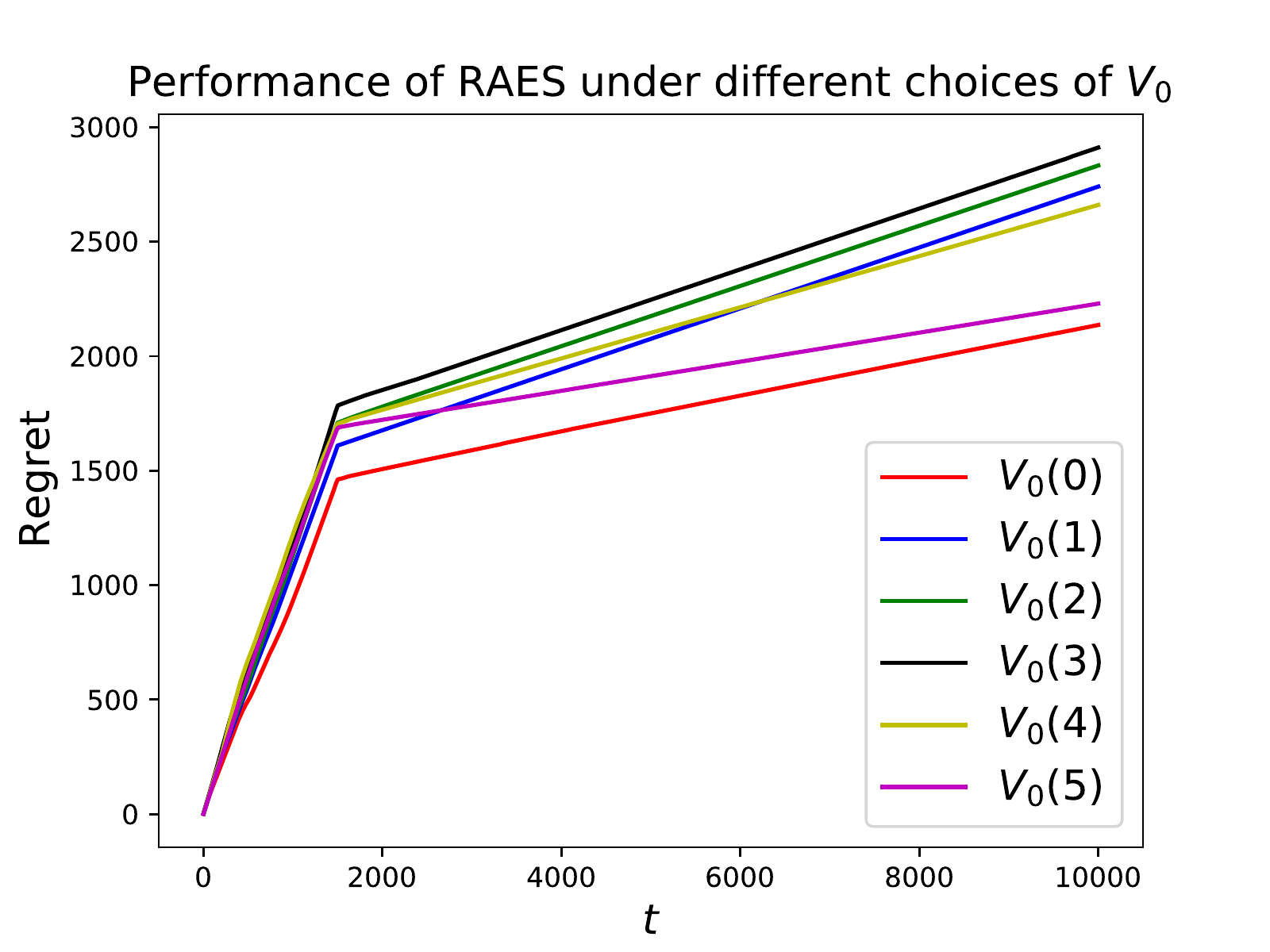}
    \includegraphics[width=0.48\textwidth]{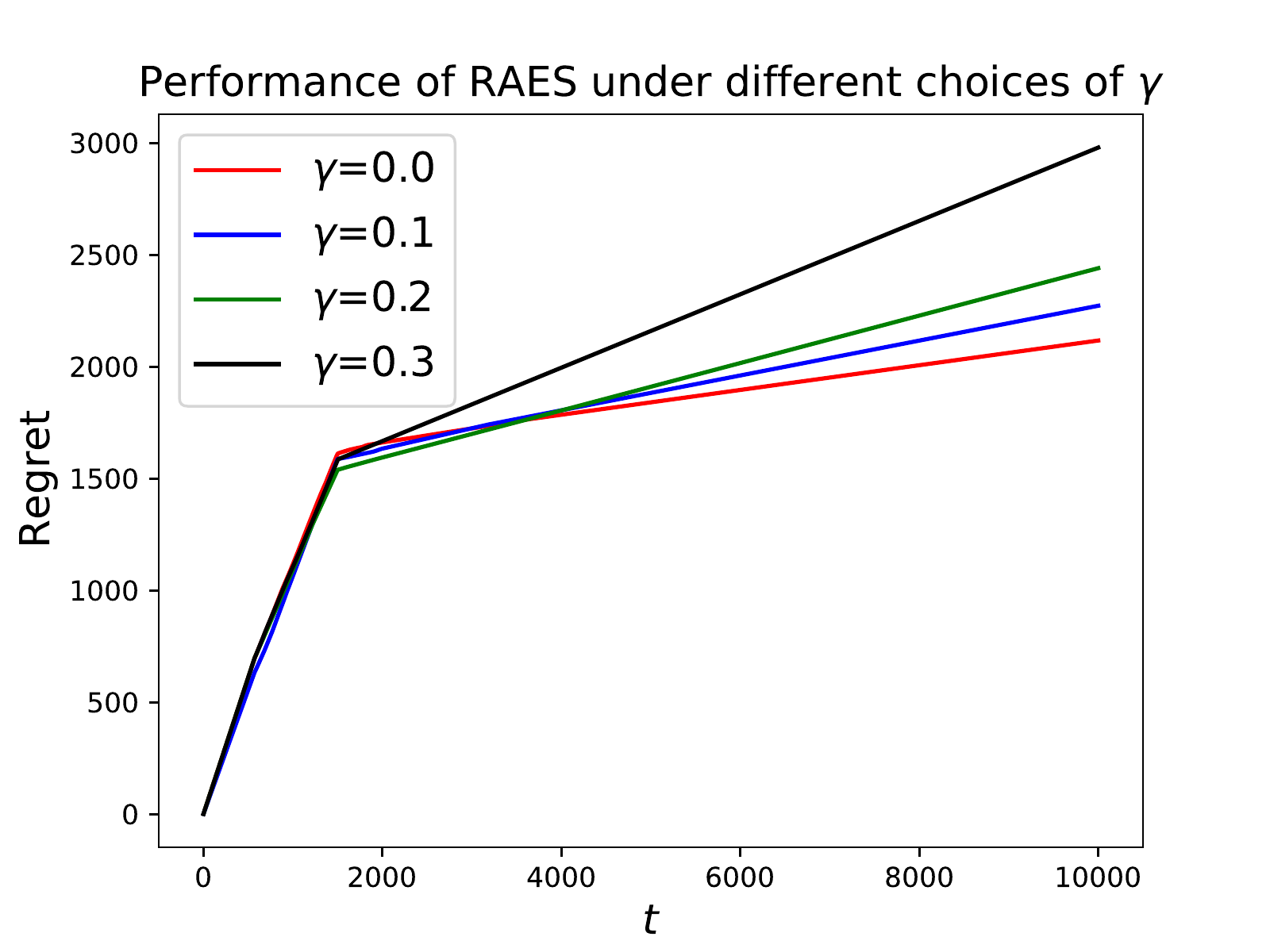}
    \caption{ The regret of RAES against a learning user with different $V_0$ and $\gamma$ over time. Left: Fix $\gamma=0.1$, plot for different choices of $V_0$; Right: Fix $V_0=I_d$, plot for different choices of $\gamma$.   }\label{fig:regret_bes}
    \end{figure}

\paragraph{\bf Comparison with baseline algorithms:} The comparison between RAES and the three baselines against learning users are shown in Figure \ref{fig:regret_comparison1}, \ref{fig:regret_comparison2}, where the x-axis denotes different time horizons $T$, and the y-axis denotes the corresponding accumulated regret. $\{\gamma, T_0\}$ are set to $\{0, 0.2\}$ and $0.25\times d^2\sqrt{T}$. The left panel shows the result with $V_0=100I_d$, i.e., each algorithm is facing a well-prepared user, while the right panel is plotted with $V_0=\text{diag}(100, 20, 5, 2, 1)$. 
The result demonstrates that RAES enjoys the best performance and is robust against different types of learning users. 
Since Doubler and Sparring employ a black-box linear bandit algorithm as their subroutine, the violation of the stochastic reward assumption breaks down the linear bandit algorithm and thus the failure of the algorithms themselves. For DBGD, the left panel suggests that it can still enjoy a sub-linear regret under milder users' rationality assumptions. However, when the user's prior $V_0$ is ill-posed (i.e., $\lambda_{\min}(V_0)$ is small), the performance of DBGD deteriorates seriously. In particular, under an ill-posed $V_0$, the user's feedback can be misleading along certain directions, and the design of DBGD does not provide any mechanism to increase the accuracy of user feedback along these directions. The degradation of DBGD becomes even more evident when $\gamma$ is larger, as shown by the stark contrast in Figure \ref{fig:regret_comparison2}.

    \begin{figure}[t]
    \centering
    \includegraphics[width=0.48\textwidth]{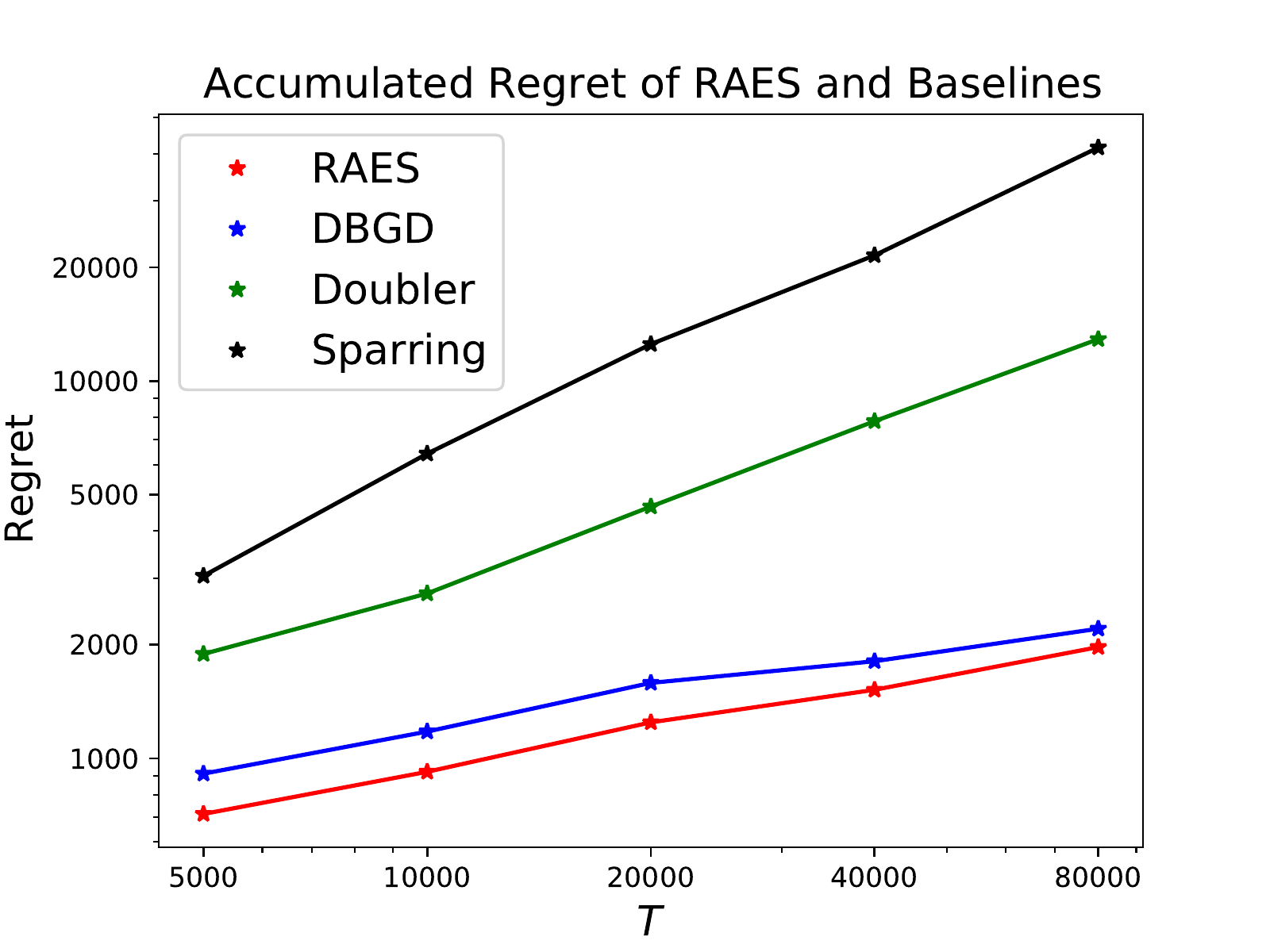}
    \includegraphics[width=0.48\textwidth]{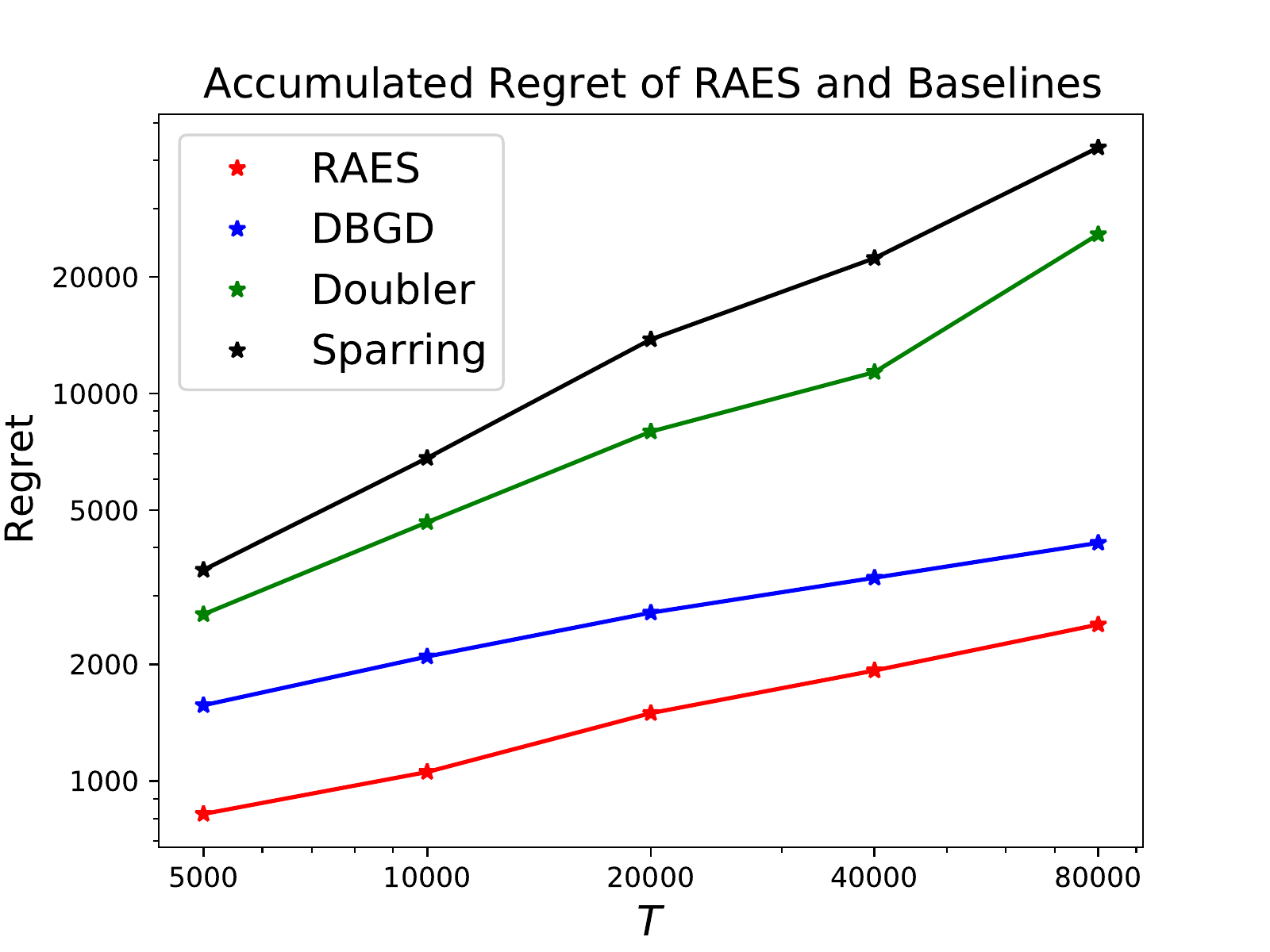}
    \caption{ The accumulated regret of RAES and three baseline algorithms. Different colors specify different algorithms. Each star represents the accumulated regret (y-axis) of the algorithm given time horizon $T$ (x-axis) with $\gamma=0$. Left: $V_0=100I_d$; right: $V_0=\text{diag}(100, 10, 5, 2, 1)$. }\label{fig:regret_comparison1}
    \end{figure}

    \begin{figure}[h]
    \centering
    \includegraphics[width=0.48\textwidth]{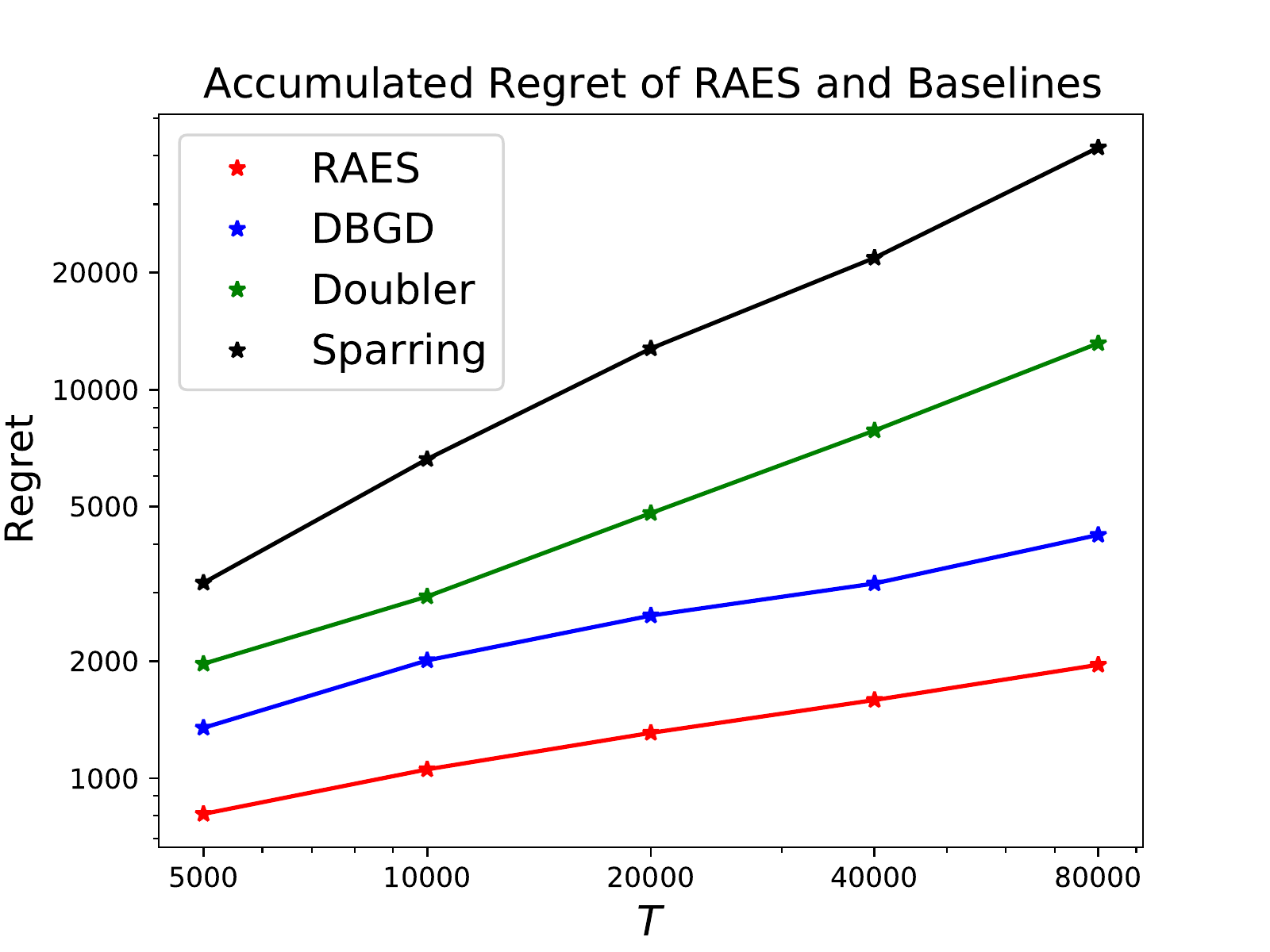}
    \includegraphics[width=0.48\textwidth]{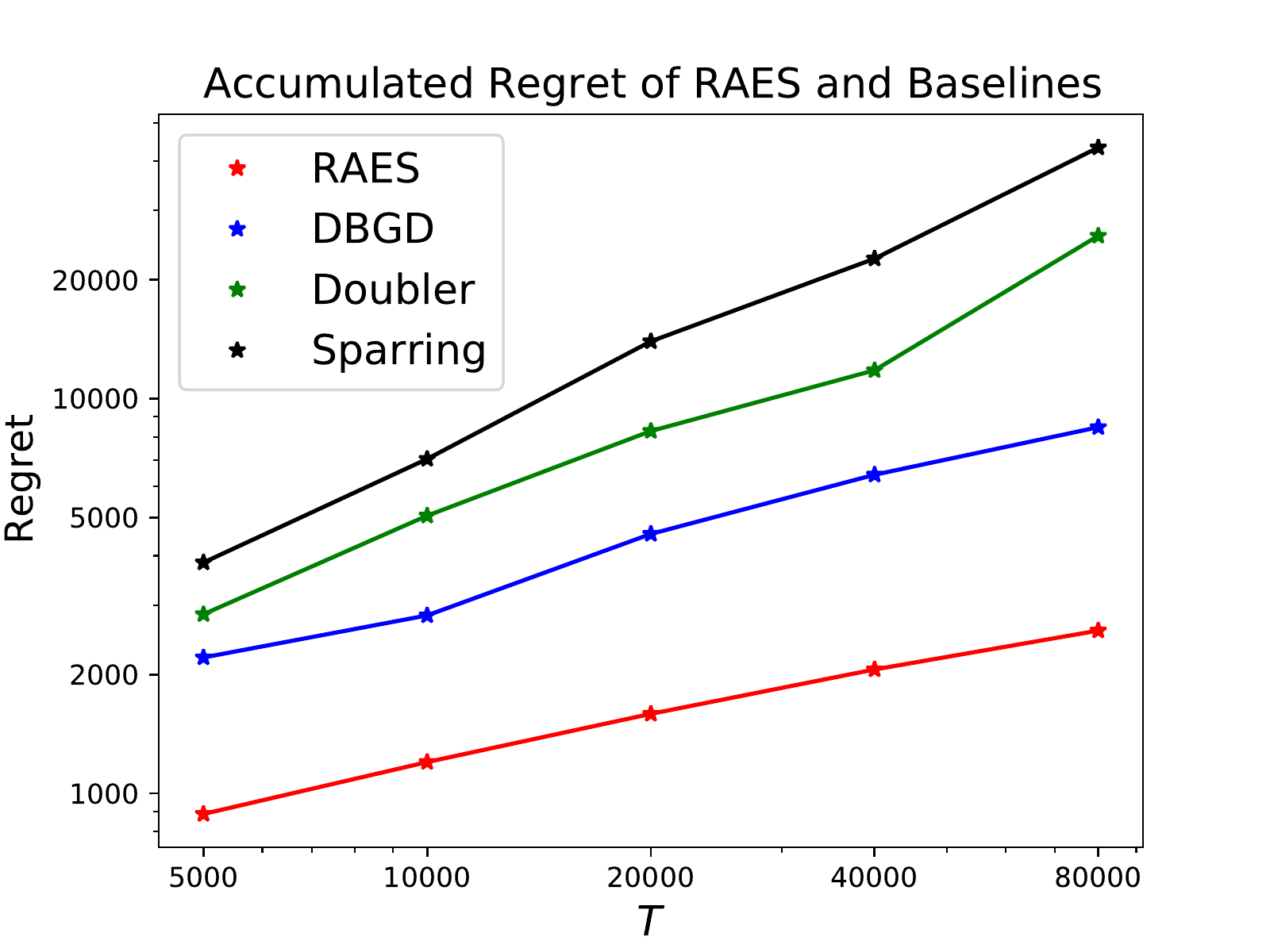}
    \caption{ The accumulated regret of RAES and three baseline algorithms against a learning user with different $\gamma,V_0$, and $T$. Different colors specify different algorithms, and each star represents the accumulated regret (y-axis) of the algorithm given time horizon $T$ (x-axis) with $\gamma=0.2$. Left: $V_0=100I_d$; right: $V_0=\text{diag}(100, 10, 5, 2, 1)$. }\label{fig:regret_comparison2}
    \end{figure}
    
    Figure \ref{fig:regret_comparison3}, \ref{fig:regret_comparison4} show the accumulated regret of RAES and other baselines when $d=10,20$ against a $(1, 0)-$rational user with different $V_0$. RAES enjoys the same advantage as demonstrated in Figure \ref{fig:regret_comparison1}. Since we have shown that the accumulated regret of RAES depends on $d$ quadratically, a larger time horizon $T$ is required to display its advantage for high-dimensional problems. However, as $T$ becomes larger, the advantage of RAES also becomes more evident.
    
    \begin{figure}[h]
    \centering
    \includegraphics[width=0.48\textwidth]{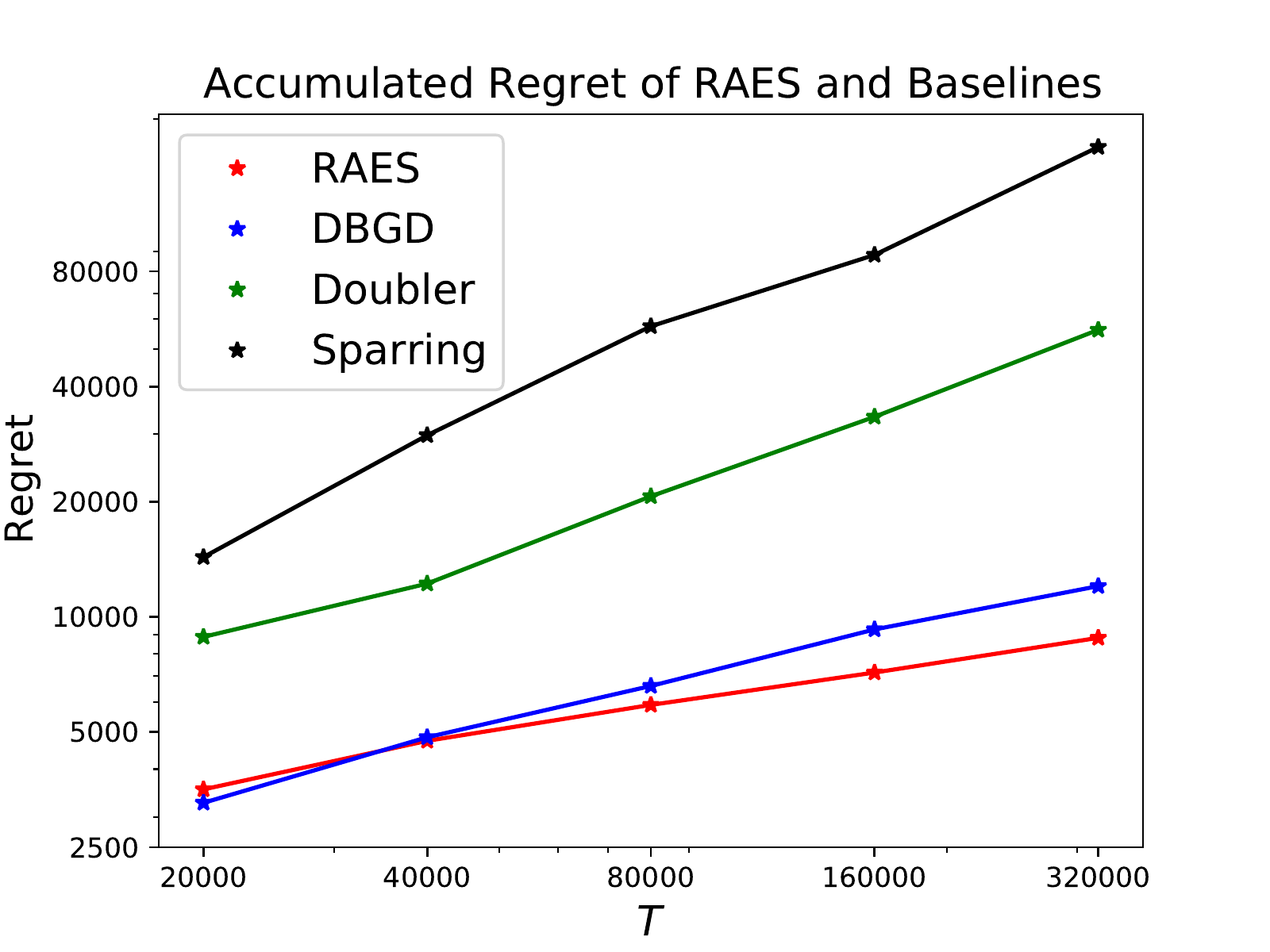}
    \includegraphics[width=0.48\textwidth]{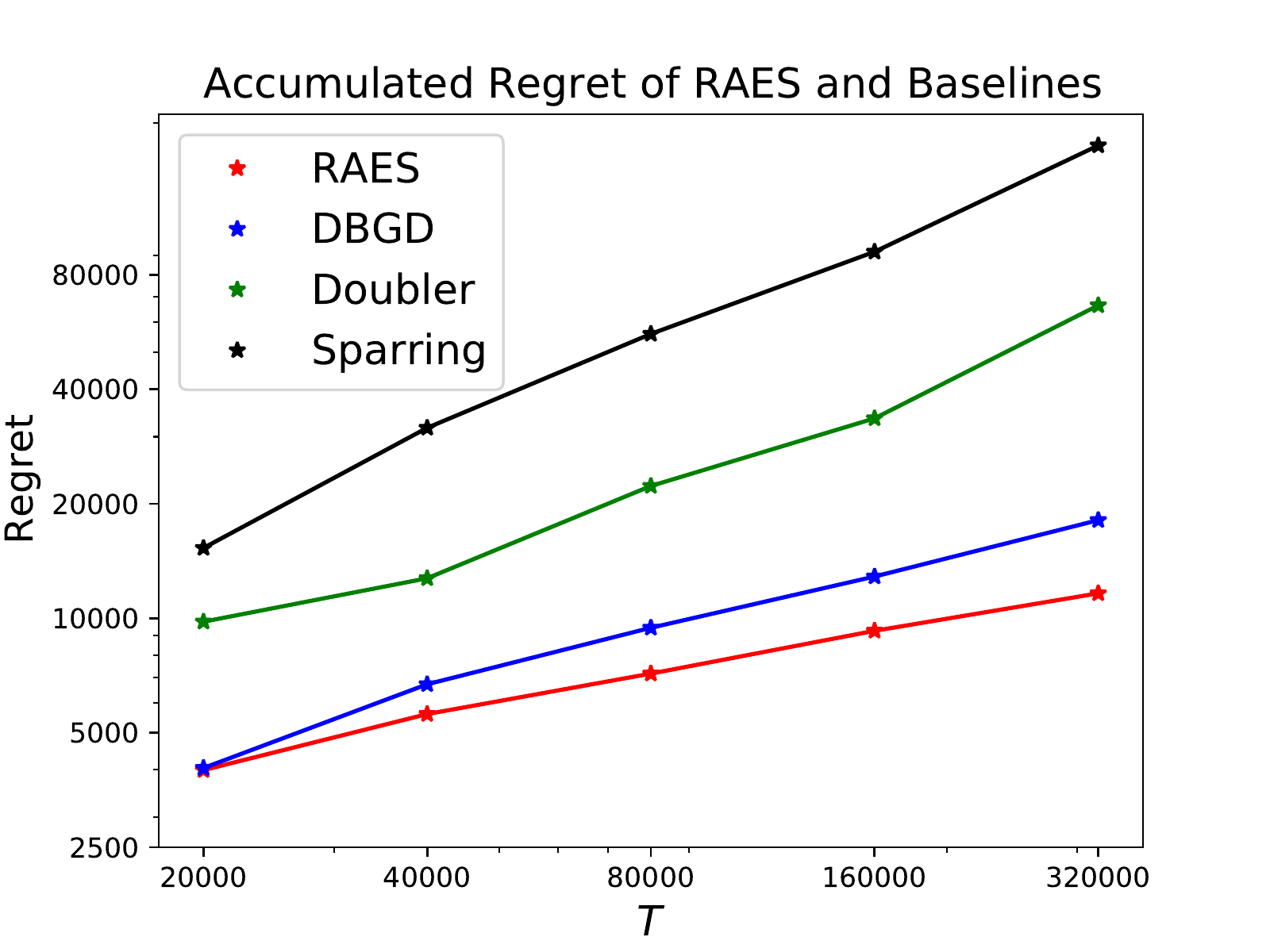}
    \caption{ The accumulated regret of RAES and three baseline algorithms against a learning user with different $\gamma,V_0$, and $T$. Different colors specify different algorithms, and each star represents the accumulated regret (y-axis) of the algorithm given time horizon $T$ (x-axis) with $\gamma=0.2$. Left: $V_0=100I_d$; right: $V_0$ is a diagonal matrix with half of its diagonal entries being $100$ while the others being $1$. }\label{fig:regret_comparison3}
    \end{figure}
    
    \begin{figure}[h]
    \centering
    \includegraphics[width=0.48\textwidth]{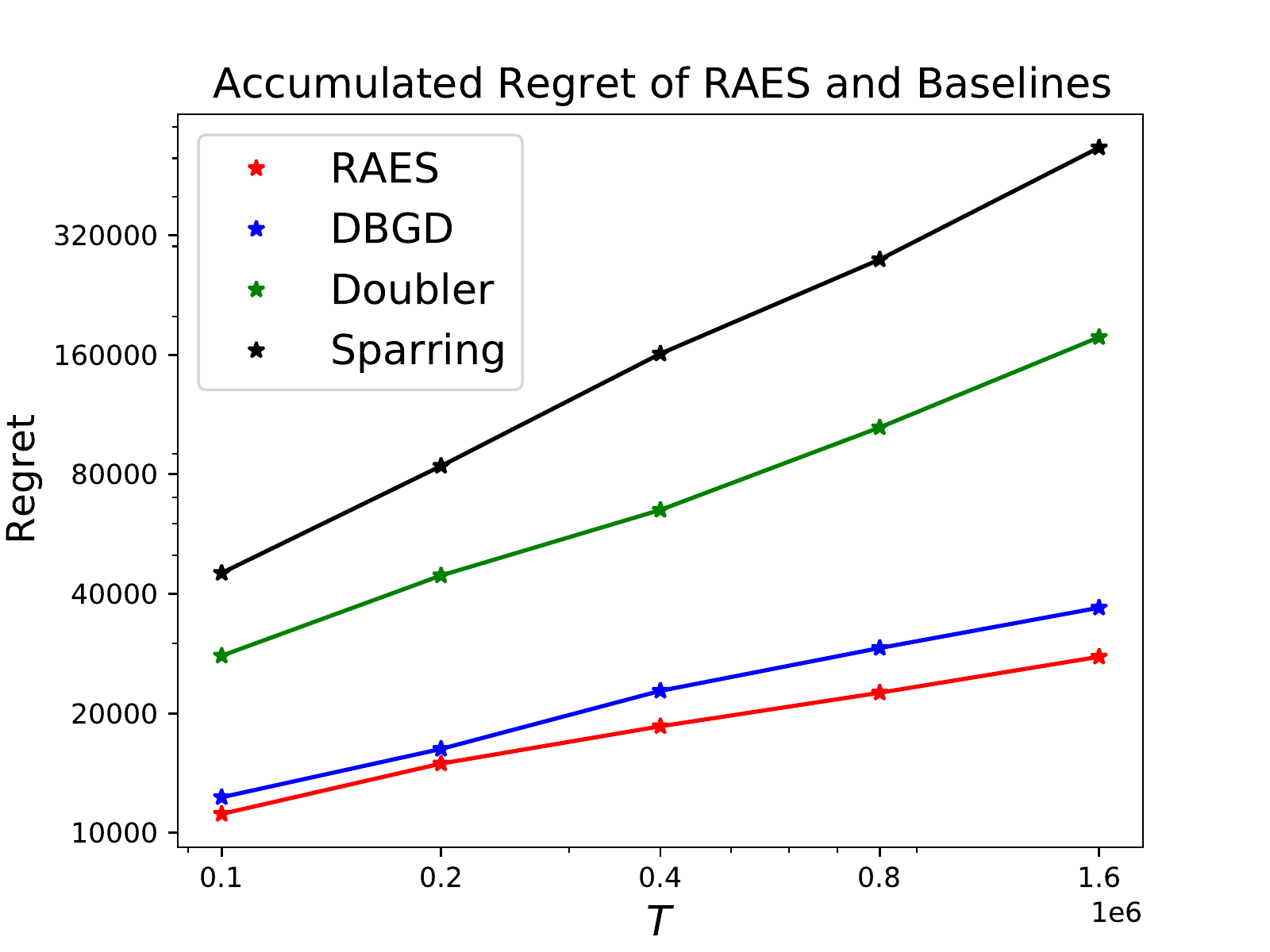}
    \includegraphics[width=0.48\textwidth]{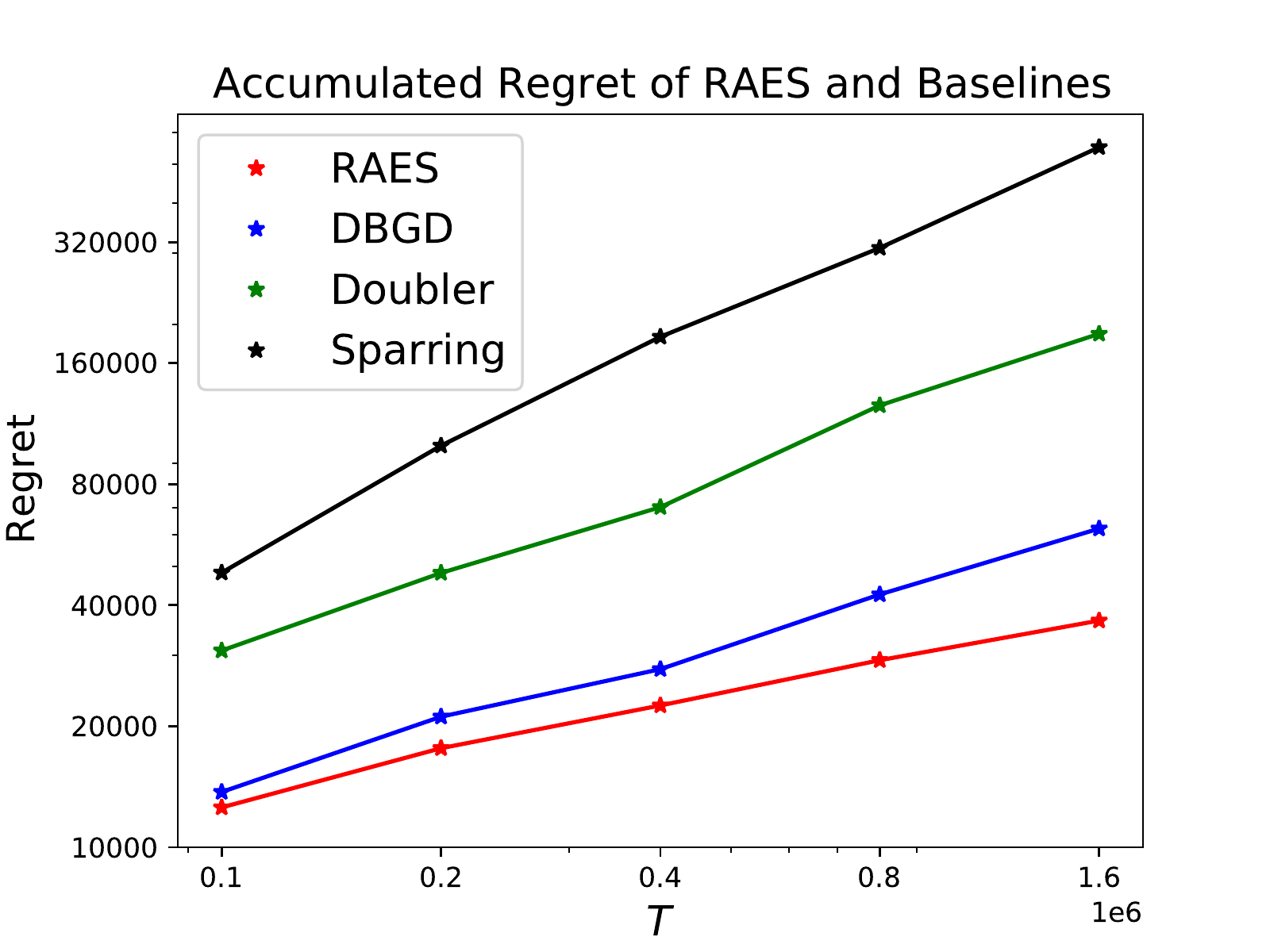}
    \caption{ The accumulated regret of RAES and three baseline algorithms against a learning user with different $\gamma,V_0$, and $T$. Different colors specify different algorithms, and each star represents the accumulated regret (y-axis) of the algorithm given time horizon $T$ (x-axis) with $\gamma=0.2$. Left: $V_0=100I_d$; right: $V_0$ is a diagonal matrix with half of its diagonal entries being $100$ while the others being $1$. }\label{fig:regret_comparison4}
    \end{figure}

\section{Conclusion}
 Motivated by the observation that users' feedback can be coupled with their interaction history with a recommender system, we propose a new problem setting where the system learns from non-stationary feedback of a learning user. Extending the dueling bandit framework, we formulate the problem of ``learning from a learner'' and establish an efficient learning algorithm based on the ellipsoid method with a near-optimal regret guarantee. Besides the new algorithm, our user learning model also provides a new perspective to studying the feedback loop in recommender systems. The negative empirical results of baseline algorithms demonstrate how inaccuracy of user feedback is formed and amplified on the system's side in its subsequent recommendations, if failing to consider the progression of user learning. A key insight of our proposed solution is that a healthy recommender system needs to expose a diversified spectrum of items to its users and thus ``foster'' them to respond with informed feedback. This leads to the win-win outcome for both users and the system in exploring the item space.

\section{Acknowledgement}
This work is supported in part by the US National Science Foundation under grants IIS-2007492, IIS-1553568 and IIS-1838615. Haifeng Xu is supported by a Google Faculty Research Award and a 3 Cavaliers Seed Grant.  

\vskip 0.2in

\bibliography{main}

\begin{thebibliography}{42}
\providecommand{\natexlab}[1]{#1}
\providecommand{\url}[1]{\texttt{#1}}
\expandafter\ifx\csname urlstyle\endcsname\relax
  \providecommand{\doi}[1]{doi: #1}\else
  \providecommand{\doi}{doi: \begingroup \urlstyle{rm}\Url}\fi

\bibitem[Abbasi-Yadkori et~al.(2011)Abbasi-Yadkori, P{\'a}l, and
  Szepesv{\'a}ri]{abbasi2011improved}
Yasin Abbasi-Yadkori, D{\'a}vid P{\'a}l, and Csaba Szepesv{\'a}ri.
\newblock Improved algorithms for linear stochastic bandits.
\newblock \emph{Advances in neural information processing systems},
  24:\penalty0 2312--2320, 2011.

\bibitem[Ailon et~al.(2014)Ailon, Karnin, and Joachims]{ailon2014reducing}
Nir Ailon, Zohar Karnin, and Thorsten Joachims.
\newblock Reducing dueling bandits to cardinal bandits.
\newblock In \emph{International Conference on Machine Learning}, pages
  856--864. PMLR, 2014.

\bibitem[Bahar et~al.(2015)Bahar, Smorodinsky, and
  Tennenholtz]{bahar2015economic}
Gal Bahar, Rann Smorodinsky, and Moshe Tennenholtz.
\newblock Economic recommendation systems.
\newblock \emph{arXiv preprint arXiv:1507.07191}, 2015.

\bibitem[Boutilier et~al.(2006)Boutilier, Patrascu, Poupart, and
  Schuurmans]{boutilier2006constraint}
Craig Boutilier, Relu Patrascu, Pascal Poupart, and Dale Schuurmans.
\newblock Constraint-based optimization and utility elicitation using the
  minimax decision criterion.
\newblock \emph{Artificial Intelligence}, 170\penalty0 (8-9):\penalty0
  686--713, 2006.

\bibitem[Bunch et~al.(1978)Bunch, Nielsen, and Sorensen]{bunch1978rank}
James~R Bunch, Christopher~P Nielsen, and Danny~C Sorensen.
\newblock Rank-one modification of the symmetric eigenproblem.
\newblock \emph{Numerische Mathematik}, 31\penalty0 (1):\penalty0 31--48, 1978.

\bibitem[Cohen et~al.(2007)Cohen, McClure, and Yu]{cohen2007should}
Jonathan~D Cohen, Samuel~M McClure, and Angela~J Yu.
\newblock Should {I} stay or should {I} go? how the human brain manages the
  trade-off between exploitation and exploration.
\newblock \emph{Philosophical Transactions of the Royal Society B: Biological
  Sciences}, 362\penalty0 (1481):\penalty0 933--942, 2007.

\bibitem[Cohen et~al.(2020)Cohen, Lobel, and Paes~Leme]{cohen2020feature}
Maxime~C Cohen, Ilan Lobel, and Renato Paes~Leme.
\newblock Feature-based dynamic pricing.
\newblock \emph{Management Science}, 66\penalty0 (11):\penalty0 4921--4943,
  2020.

\bibitem[Das et~al.(2007)Das, Datar, Garg, and Rajaram]{das2007google}
Abhinandan~S Das, Mayur Datar, Ashutosh Garg, and Shyam Rajaram.
\newblock Google news personalization: scalable online collaborative filtering.
\newblock In \emph{Proceedings of the 16th international conference on World
  Wide Web}, pages 271--280, 2007.

\bibitem[Daw et~al.(2006)Daw, O'doherty, Dayan, Seymour, and
  Dolan]{daw2006cortical}
Nathaniel~D Daw, John~P O'doherty, Peter Dayan, Ben Seymour, and Raymond~J
  Dolan.
\newblock Cortical substrates for exploratory decisions in humans.
\newblock \emph{Nature}, 441\penalty0 (7095):\penalty0 876--879, 2006.

\bibitem[Dud{\'\i}k et~al.(2015)Dud{\'\i}k, Hofmann, Schapire, Slivkins, and
  Zoghi]{dudik2015contextual}
Miroslav Dud{\'\i}k, Katja Hofmann, Robert~E Schapire, Aleksandrs Slivkins, and
  Masrour Zoghi.
\newblock Contextual dueling bandits.
\newblock In \emph{Conference on Learning Theory}, pages 563--587. PMLR, 2015.

\bibitem[Ebesu et~al.(2018)Ebesu, Shen, and Fang]{ebesu2018collaborative}
Travis Ebesu, Bin Shen, and Yi~Fang.
\newblock Collaborative memory network for recommendation systems.
\newblock In \emph{The 41st international ACM SIGIR conference on research \&
  development in information retrieval}, pages 515--524, 2018.

\bibitem[Fan(1949)]{fan1949theorem}
Ky~Fan.
\newblock On a theorem of {W}eyl concerning eigenvalues of linear
  transformations {I}.
\newblock \emph{Proceedings of the National Academy of Sciences of the United
  States of America}, 35\penalty0 (11):\penalty0 652, 1949.

\bibitem[Gershman(2018)]{gershman2018deconstructing}
Samuel~J Gershman.
\newblock Deconstructing the human algorithms for exploration.
\newblock \emph{Cognition}, 173:\penalty0 34--42, 2018.

\bibitem[Gollapudi et~al.(2021)Gollapudi, Guruganesh, Kollias, Manurangsi,
  Leme, and Schneider]{gollapudi2021contextual}
Sreenivas Gollapudi, Guru Guruganesh, Kostas Kollias, Pasin Manurangsi,
  Renato~Paes Leme, and Jon Schneider.
\newblock Contextual recommendations and low-regret cutting-plane algorithms.
\newblock \emph{arXiv preprint arXiv:2106.04819}, 2021.

\bibitem[Gr{\"o}tschel et~al.(1981)Gr{\"o}tschel, Lov{\'a}sz, and
  Schrijver]{grotschel1981ellipsoid}
Martin Gr{\"o}tschel, L{\'a}szl{\'o} Lov{\'a}sz, and Alexander Schrijver.
\newblock The ellipsoid method and its consequences in combinatorial
  optimization.
\newblock \emph{Combinatorica}, 1\penalty0 (2):\penalty0 169--197, 1981.

\bibitem[Gr{\"o}tschel et~al.(1993)Gr{\"o}tschel, Lov{\'a}sz, and
  Schrijver]{grotschel1993ellipsoid}
Martin Gr{\"o}tschel, L{\'a}szl{\'o} Lov{\'a}sz, and Alexander Schrijver.
\newblock The ellipsoid method.
\newblock In \emph{Geometric Algorithms and Combinatorial Optimization}, pages
  64--101. Springer, 1993.

\bibitem[He et~al.(2017)He, Liao, Zhang, Nie, Hu, and Chua]{he2017neural}
Xiangnan He, Lizi Liao, Hanwang Zhang, Liqiang Nie, Xia Hu, and Tat-Seng Chua.
\newblock Neural collaborative filtering.
\newblock In \emph{Proceedings of the 26th international conference on world
  wide web}, pages 173--182, 2017.

\bibitem[Kang and McAuley(2018)]{kang2018self}
Wang-Cheng Kang and Julian McAuley.
\newblock Self-attentive sequential recommendation.
\newblock In \emph{2018 IEEE International Conference on Data Mining (ICDM)},
  pages 197--206. IEEE, 2018.

\bibitem[Karmarkar(1984)]{karmarkar1984new}
Narendra Karmarkar.
\newblock A new polynomial-time algorithm for linear programming.
\newblock In \emph{Proceedings of the sixteenth annual ACM symposium on Theory
  of computing}, pages 302--311, 1984.

\bibitem[Konstan et~al.(1997)Konstan, Miller, Maltz, Herlocker, Gordon, and
  Riedl]{konstan1997grouplens}
Joseph~A Konstan, Bradley~N Miller, David Maltz, Jonathan~L Herlocker, Lee~R
  Gordon, and John Riedl.
\newblock Grouplens: Applying collaborative filtering to usenet news.
\newblock \emph{Communications of the ACM}, 40\penalty0 (3):\penalty0 77--87,
  1997.

\bibitem[Koren et~al.(2009)Koren, Bell, and Volinsky]{koren2009matrix}
Yehuda Koren, Robert Bell, and Chris Volinsky.
\newblock Matrix factorization techniques for recommender systems.
\newblock \emph{Computer}, \penalty0 (8):\penalty0 30--37, 2009.

\bibitem[Kremer et~al.(2014)Kremer, Mansour, and Perry]{kremer2014implementing}
Ilan Kremer, Yishay Mansour, and Motty Perry.
\newblock Implementing the “wisdom of the crowd”.
\newblock \emph{Journal of Political Economy}, 122\penalty0 (5):\penalty0
  988--1012, 2014.

\bibitem[Lattimore and Szepesv{\'a}ri(2020)]{lattimore2020bandit}
Tor Lattimore and Csaba Szepesv{\'a}ri.
\newblock \emph{Bandit algorithms}.
\newblock Cambridge University Press, 2020.

\bibitem[Li et~al.(2010)Li, Chu, Langford, and Schapire]{li2010contextual}
Lihong Li, Wei Chu, John Langford, and Robert~E Schapire.
\newblock A contextual-bandit approach to personalized news article
  recommendation.
\newblock In \emph{Proceedings of the 19th international conference on World
  wide web}, pages 661--670, 2010.

\bibitem[Liang et~al.(2018)Liang, Krishnan, Hoffman, and
  Jebara]{liang2018variational}
Dawen Liang, Rahul~G Krishnan, Matthew~D Hoffman, and Tony Jebara.
\newblock Variational autoencoders for collaborative filtering.
\newblock In \emph{Proceedings of the 2018 world wide web conference}, pages
  689--698, 2018.

\bibitem[Linden et~al.(2003)Linden, Smith, and York]{linden2003amazon}
Greg Linden, Brent Smith, and Jeremy York.
\newblock Amazon.com recommendations: Item-to-item collaborative filtering.
\newblock \emph{IEEE Internet computing}, 7\penalty0 (1):\penalty0 76--80,
  2003.

\bibitem[Lobel et~al.(2018)Lobel, Leme, and Vladu]{lobel2018multidimensional}
Ilan Lobel, Renato~Paes Leme, and Adrian Vladu.
\newblock Multidimensional binary search for contextual decision-making.
\newblock \emph{Operations Research}, 66\penalty0 (5):\penalty0 1346--1361,
  2018.

\bibitem[Mansour et~al.(2016)Mansour, Slivkins, Syrgkanis, and
  Wu]{mansour2016bayesian}
Yishay Mansour, Aleksandrs Slivkins, Vasilis Syrgkanis, and Zhiwei~Steven Wu.
\newblock Bayesian exploration: Incentivizing exploration in bayesian games.
\newblock \emph{arXiv preprint arXiv:1602.07570}, 2016.

\bibitem[Mansour et~al.(2020)Mansour, Slivkins, and
  Syrgkanis]{mansour2020bayesian}
Yishay Mansour, Aleksandrs Slivkins, and Vasilis Syrgkanis.
\newblock Bayesian incentive-compatible bandit exploration.
\newblock \emph{Operations Research}, 68\penalty0 (4):\penalty0 1132--1161,
  2020.

\bibitem[Rendle(2010)]{rendle2010factorization}
Steffen Rendle.
\newblock Factorization machines.
\newblock In \emph{2010 IEEE International Conference on Data Mining}, pages
  995--1000. IEEE, 2010.

\bibitem[Rendle and Schmidt-Thieme(2010)]{rendle2010pairwise}
Steffen Rendle and Lars Schmidt-Thieme.
\newblock Pairwise interaction tensor factorization for personalized tag
  recommendation.
\newblock In \emph{Proceedings of the third ACM international conference on Web
  search and data mining}, pages 81--90, 2010.

\bibitem[Rusmevichientong and Tsitsiklis(2010)]{rusmevichientong2010linearly}
Paat Rusmevichientong and John~N Tsitsiklis.
\newblock Linearly parameterized bandits.
\newblock \emph{Mathematics of Operations Research}, 35\penalty0 (2):\penalty0
  395--411, 2010.

\bibitem[Sarwar et~al.(2001)Sarwar, Karypis, Konstan, and
  Riedl]{sarwar2001item}
Badrul Sarwar, George Karypis, Joseph Konstan, and John Riedl.
\newblock Item-based collaborative filtering recommendation algorithms.
\newblock In \emph{10th International Conference on World Wide Web}, pages
  285--295. ACM, 2001.

\bibitem[Sui et~al.(2017)Sui, Zhuang, Burdick, and Yue]{sui2017multi}
Yanan Sui, Vincent Zhuang, Joel~W Burdick, and Yisong Yue.
\newblock Multi-dueling bandits with dependent arms.
\newblock \emph{arXiv preprint arXiv:1705.00253}, 2017.

\bibitem[Tang and Wang(2018)]{tang2018personalized}
Jiaxi Tang and Ke~Wang.
\newblock Personalized top-n sequential recommendation via convolutional
  sequence embedding.
\newblock In \emph{Proceedings of the Eleventh ACM International Conference on
  Web Search and Data Mining}, pages 565--573, 2018.

\bibitem[T{\'e}tard and Collan(2009)]{tetard2009lazy}
Franck T{\'e}tard and Mikael Collan.
\newblock Lazy user theory: A dynamic model to understand user selection of
  products and services.
\newblock In \emph{2009 42nd Hawaii International Conference on System
  Sciences}, pages 1--9. IEEE, 2009.

\bibitem[Viappiani and Boutilier(2009)]{viappiani2009regret}
Paolo Viappiani and Craig Boutilier.
\newblock Regret-based optimal recommendation sets in conversational
  recommender systems.
\newblock In \emph{Proceedings of the third ACM conference on Recommender
  systems}, pages 101--108, 2009.

\bibitem[Wilson et~al.(2014)Wilson, Geana, White, Ludvig, and
  Cohen]{wilson2014humans}
Robert~C Wilson, Andra Geana, John~M White, Elliot~A Ludvig, and Jonathan~D
  Cohen.
\newblock Humans use directed and random exploration to solve the
  explore--exploit dilemma.
\newblock \emph{Journal of Experimental Psychology: General}, 143\penalty0
  (6):\penalty0 2074, 2014.

\bibitem[Wu et~al.(2020)Wu, Cai, and Wang]{wu2020deja}
Jibang Wu, Renqin Cai, and Hongning Wang.
\newblock D{\'e}j{\`a} vu: A contextualized temporal attention mechanism for
  sequential recommendation.
\newblock In \emph{Proceedings of The Web Conference 2020}, pages 2199--2209,
  2020.

\bibitem[Yao et~al.(2021)Yao, Li, Nekipelov, Wang, and Xu]{yao2021learning}
Fan Yao, Chuanhao Li, Denis Nekipelov, Hongning Wang, and Haifeng Xu.
\newblock Learning the optimal recommendation from explorative users.
\newblock \emph{arXiv preprint arXiv:2110.03068}, 2021.

\bibitem[Yue and Joachims(2009)]{yue2009interactively}
Yisong Yue and Thorsten Joachims.
\newblock Interactively optimizing information retrieval systems as a dueling
  bandits problem.
\newblock In \emph{Proceedings of the 26th Annual International Conference on
  Machine Learning}, pages 1201--1208, 2009.

\bibitem[Yue et~al.(2012)Yue, Broder, Kleinberg, and Joachims]{yue2012k}
Yisong Yue, Josef Broder, Robert Kleinberg, and Thorsten Joachims.
\newblock The k-armed dueling bandits problem.
\newblock \emph{Journal of Computer and System Sciences}, 78\penalty0
  (5):\penalty0 1538--1556, 2012.

\end{thebibliography}
\newpage
\appendix
\onecolumn

\begin{center}
\textbf{\Large Appendix to ``Learning from a Learning User for Optimal Recommendations''}
\end{center}

\section{Omitted Proofs in Section \ref{sec:warm-up}}\label{app:warm-up}
To prove Theorem \ref{thm:converge_alg1}, we need the following technical lemmas. Lemma \ref{lm:shrink_e1e2} states that the product of the largest two eigenvalues of $P_t$ must shrink w.r.t. a constant factor after each cut. Since $\det(P_t)$ approaches zero at an exponential rate (from Eq \eqref{eq:vol_ratio1}), $P_t$ can only have one potentially large eigenvalue while all other eigenvalues must approach zero. Lemma \ref{lm:upperbound_sigma2-d} implies that at any time step $t$, the ``gap" between $P_t$'s second-largest eigenvalue and the smallest eigenvalue can be upper bounded by a constant. Given that the determinant of $P_t$ converges to $0$ at an exponential rate, all the eigenvalues of $P_t$ except the largest one must also converge to $0$ exponentially fast. 

\begin{lemma}\label{lm:shrink_e1e2}
In Algorithm \ref{alg:mbs}, let the eigenvalues of $P_{t}$ be $\sigma_1 \geq \cdots \geq \sigma_d$ and the eigenvalues of $P_{t+1}$ be $\{\sigma'_1 , \cdots , \sigma'_d\}$. Then we have 
\begin{enumerate}
    \item for any $3\leq i\leq d$, we have equalities $$\sigma'_i=\frac{d^2}{d^2-1}\sigma_i.$$
    \item for $\sigma'_1, \sigma'_2$, we have $\frac{\sigma'_1\sigma'_2}{\sigma_1\sigma_2}=\frac{d^4}{(d+1)^3(d-1)}<1$ and the following bound 
    \begin{equation}\label{eq:bound_max_sigma12}
        \max\{\sigma'_1, \sigma'_2\} \in [\frac{d^2}{(d+1)^2}\sigma_1, \frac{d^2}{d^2-1}\sigma_1],
    \end{equation}
    \begin{equation}\label{eq:bound_min_sigma12}
        \min\{\sigma'_1, \sigma'_2\} \in [\frac{d^2}{(d+1)^2}\sigma_2, \frac{d^2}{d^2-1}\sigma_2].
    \end{equation}
\end{enumerate}
\end{lemma}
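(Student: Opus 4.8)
The plan is to work in the eigenbasis of $P_t$ and exploit the fact that the rank-one update in Algorithm \ref{alg:mbs} only perturbs the two-dimensional block spanned by $\u_1^{(t)},\u_2^{(t)}$. Writing $P_t=\mathrm{diag}(\sigma_1,\dots,\sigma_d)$ in its own eigenbasis and $\g_t=a\u_1^{(t)}+b\u_2^{(t)}$ with $a^2+b^2=1$, I first observe that the sign $(2i-1)$ in $\tilde{\g}_t$ drops out of the outer product, so $P_t\tilde{\g}_t\tilde{\g}_t^{\top}P_t=\frac{1}{\mu}\w\w^{\top}$, where $\mu=\|\g_t\|_{P_t}^2=a^2\sigma_1+b^2\sigma_2$ and $\w=(a\sigma_1,b\sigma_2)$ is supported on the first two coordinates. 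Hence the update leaves the orthogonal complement $\mathrm{span}\{\u_3^{(t)},\dots,\u_d^{(t)}\}$ invariant and simply rescales it by the factor $\frac{d^2}{d^2-1}$, which immediately yields Part 1, $\sigma'_i=\frac{d^2}{d^2-1}\sigma_i$ for $3\le i\le d$.

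For Part 2 I restrict attention to the $2\times 2$ block $M=\frac{d^2}{d^2-1}\bigl(D-\frac{2}{(d+1)\mu}\w\w^{\top}\bigr)$ with $D=\mathrm{diag}(\sigma_1,\sigma_2)$, whose two eigenvalues are exactly $\sigma'_1,\sigma'_2$. The product follows from the matrix determinant lemma used just before \eqref{eq:Ixx}: since $\w^{\top}D^{-1}\w=a^2\sigma_1+b^2\sigma_2=\mu$, the perturbation factor collapses to $1-\frac{2}{d+1}=\frac{d-1}{d+1}$, giving $\det M=\bigl(\frac{d^2}{d^2-1}\bigr)^2\frac{d-1}{d+1}\sigma_1\sigma_2=\frac{d^4}{(d-1)(d+1)^3}\sigma_1\sigma_2$. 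Expanding $(d-1)(d+1)^3=d^4+2d^3-2d-1>d^4$ for $d\ge 2$ confirms the ratio is strictly below $1$.

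It remains to bound $\max$ and $\min$ of $\{\sigma'_1,\sigma'_2\}$ separately. Since $\frac{2}{(d+1)\mu}\w\w^{\top}\succeq 0$, we have $M\preceq\frac{d^2}{d^2-1}D$, so by monotonicity of eigenvalues (Weyl) $\max\{\sigma'_1,\sigma'_2\}\le\frac{d^2}{d^2-1}\sigma_1$ and $\min\{\sigma'_1,\sigma'_2\}\le\frac{d^2}{d^2-1}\sigma_2$, which are the claimed upper bounds. For the lower bounds I combine these with the exact product: dividing $\det M$ by the upper-bounded complementary eigenvalue gives $\max\{\sigma'_1,\sigma'_2\}=\det M/\min\{\sigma'_1,\sigma'_2\}\ge\frac{d^4}{(d-1)(d+1)^3}\sigma_1\sigma_2\big/\bigl(\frac{d^2}{d^2-1}\sigma_2\bigr)=\frac{d^2}{(d+1)^2}\sigma_1$, and symmetrically $\min\{\sigma'_1,\sigma'_2\}\ge\frac{d^2}{(d+1)^2}\sigma_2$, matching \eqref{eq:bound_max_sigma12}--\eqref{eq:bound_min_sigma12}.

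The main subtlety I expect is not any single calculation but the bookkeeping of eigenvalue labels: I must confirm that the two block eigenvalues really occupy the ordered slots $\sigma'_1,\sigma'_2$, so that the scaled complementary eigenvalues indeed land in slots $3,\dots,d$. The bound $\max\{\sigma'_1,\sigma'_2\}=\frac{d^2}{d^2-1}\lambda_1(M/\text{scaling})\ge\frac{d^2}{d^2-1}\sigma_2$ shows the larger block eigenvalue dominates every scaled complementary eigenvalue $\frac{d^2}{d^2-1}\sigma_i$ ($i\ge 3$); the second block eigenvalue is handled generically by the lower bound above, but when $\sigma_2$ and $\sigma_3$ are nearly equal the ordering can be delicate. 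I would address this by noting that the subsequent convergence argument only relies on the product $\sigma_1\sigma_2$ contracting and on every non-leading eigenvalue being controlled, so any harmless reshuffling among near-equal eigenvalues does not affect the downstream use of the lemma.
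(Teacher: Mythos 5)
Your proof is correct and follows essentially the same route as the paper's: isolate the $2\times 2$ block spanned by $\u_1^{(t)},\u_2^{(t)}$, get the product $\sigma'_1\sigma'_2$ from the determinant, obtain the upper bounds from Weyl-type monotonicity of the rank-one downdate, and recover the lower bounds by dividing the product by the complementary upper bound. The only cosmetic differences are that the paper reads off the determinant ratio from the volume formula \eqref{eq:vol_ratio1} rather than the matrix determinant lemma, and invokes the interlacing form of Weyl's inequality rather than Loewner monotonicity; your closing remark about eigenvalue relabeling is exactly the issue the paper defers to Lemma \ref{lm:upperbound_sigma2-d}.
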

\begin{proof}
\noindent {\bf Claim 1.} Suppose $P_t=U\Sigma U^{\top}$, where $\Sigma=\text{diag}(\sigma_1,\cdots,\sigma_d)$ and $U=[\u_1,\cdots,\u_d]$. From the update rule of $P_{t+1}$, for any $3\leq i\leq d$ we have
\begin{align}\notag
    P_{t+1} \u_i &= \frac{d^2}{d^2-1}\Big(P_t-\frac{2}{d+1}P_t\tilde{\g}_t \tilde{\g}_t^{\top}P_t\Big)\u_i \\ \notag
    &= \frac{d^2}{d^2-1}\sigma_i\u_i - \frac{d^2}{d^2-1}\cdot\frac{2\sigma_i}{d+1}P_t\tilde{\g}_t (\tilde{\g}_t^{\top}\u_i)\\ \label{eq:93}
    & = \frac{d^2}{d^2-1}\sigma_i\u_i,
\end{align}
where Eq \eqref{eq:93} holds because $\tilde{\g}_t \in \text{span}\{\u_1,\u_2\}$. Therefore, $ \{\frac{d^2}{d^2-1}\sigma_i\}_{i=3}^d$ are $d-2$ eigenvalues of $P_{t+1}$.  

\noindent {\bf Claim 2.} By the choice of $\g_t$, the cutting hyper plane always goes through $\x_t$ (i.e., $\alpha=0$). Therefore, by Eq \eqref{eq:vol_ratio1} we obtain $\frac{\prod_{i=1}^d \sigma'_i}{\prod_{i=1}^d \sigma_i}=\frac{d^2}{(d+1)^2}\cdot \Big(\frac{d^2}{d^2-1}\Big)^{d-1}.$ Consider Eq \eqref{eq:93}, we conclude that the remaining two eigenvalues of $P_{t+1}$ satisfy 
\begin{equation}\label{eq:103}
    \frac{\sigma'_1\sigma'_2}{\sigma_1\sigma_2}=\frac{d^2}{(d+1)^2}\cdot \frac{d^2}{d^2-1}=\frac{d^4}{(d+1)^3(d-1)}<1.
\end{equation}

Next we derive the bound for $\sigma'_1, \sigma'_2$. Let $\g_t=p\u_1+q\u_2$, and 
$$P_t\tilde{\g}_t=\frac{p\sigma_1}{\sqrt{p^2\sigma_1+q^2\sigma_2}}\u_1+\frac{q\sigma_2}{\sqrt{p^2\sigma_1+q^2\sigma_2}}\u_2\triangleq v_1\u_1+v_2\u_2.$$ It is easy to see that $\frac{d^2-1}{d^2}\sigma'_1, \frac{d^2-1}{d^2}\sigma'_2$ are the two eigenvalues of the following $2\times 2$ matrix
\begin{equation}
A=\begin{bmatrix}
\sigma_1 & 0 \\
0 & \sigma_2 
\end{bmatrix} - \frac{2}{d+1}
\begin{bmatrix}
v_1  \\
v_2  
\end{bmatrix}\cdot \begin{bmatrix}
v_1 & v_2  
\end{bmatrix}.
\end{equation}

Without loss of generality, we assume $\sigma'_1\geq \sigma'_2$. Applying Weyl's inequality in matrix theory \cite{fan1949theorem,bunch1978rank} to matrix $A$ yields 
\begin{equation}\label{eq:weyl1}
\sigma_1\geq \frac{d^2-1}{d^2}\sigma'_1\geq \sigma_2\geq \frac{d^2-1}{d^2}\sigma'_2.    
\end{equation}
On the other hand, from Eq \eqref{eq:103} we also have 
\begin{equation}\label{eq:133}
    \frac{\sigma'_1}{\sigma_1} = \frac{d^4}{(d+1)^3(d-1)}\frac{\sigma_2}{\sigma'_2} \geq  \frac{d^4}{(d+1)^3(d-1)}\cdot \frac{d^2-1}{d^2}= \frac{d^2}{(d+1)^2},
\end{equation}
\begin{equation}\label{eq:136}
    \frac{\sigma'_2}{\sigma_2} = \frac{d^4}{(d+1)^3(d-1)}\frac{\sigma_1}{\sigma'_1} \geq  \frac{d^4}{(d+1)^3(d-1)}\cdot \frac{d^2-1}{d^2}= \frac{d^2}{(d+1)^2}.
\end{equation}
From Eq \eqref{eq:weyl1}, \eqref{eq:133}, \eqref{eq:136}, we obtain Eq \eqref{eq:bound_max_sigma12}, \eqref{eq:bound_min_sigma12} and therefore complete the proof. 
\end{proof}

\begin{lemma}\label{lm:upperbound_sigma2-d}
At each time step $t$ in Algorithm \ref{alg:mbs}, let the eigenvalue of $P_{t}$ be $\sigma_1^{(t)} \geq \cdots \geq \sigma_d^{(t)}$. Further let $D_t=\sigma_2^{(t)}/\sigma_d^{(t)}$, we claim
\begin{enumerate}
    \item for any $t\geq 0$, $D_{t+1} \leq \frac{d+1}{d-1}\cdot D_t$;
    \item if $D_t > \frac{d+1}{d-1}$, $D_{t+1} \leq D_t$.
    \item for any $n\geq 0$, \begin{equation}\label{eq:maxDt}
        \max_{0\leq t \leq  n} D_t \leq \Big(\frac{d+1}{d-1}\Big)^2.
    \end{equation}
\end{enumerate}
\end{lemma}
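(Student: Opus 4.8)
\textbf{Proof plan for Lemma~\ref{lm:upperbound_sigma2-d}.}

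The plan is to build everything on top of Lemma~\ref{lm:shrink_e1e2}, which tells us exactly how the eigenvalues transform after a single cut: the bottom $d-2$ eigenvalues all scale by the common factor $\frac{d^2}{d^2-1}$, while the top two eigenvalues $\sigma_1,\sigma_2$ are squeezed into the intervals \eqref{eq:bound_max_sigma12} and \eqref{eq:bound_min_sigma12}. To analyze $D_t=\sigma_2^{(t)}/\sigma_d^{(t)}$ I would track the numerator and denominator separately through one cut. The smallest eigenvalue $\sigma_d$ is one of the ``bottom'' eigenvalues (since $d\geq 3$ here, as $d>1$ forces $d\geq 2$ and the interesting case has $d-2\geq 1$ bottom eigenvalues), so $\sigma_d^{(t+1)}=\frac{d^2}{d^2-1}\sigma_d^{(t)}$ exactly. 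The new second-largest eigenvalue is $\min\{\sigma_1',\sigma_2'\}$, which by \eqref{eq:bound_min_sigma12} lies in $\big[\frac{d^2}{(d+1)^2}\sigma_2^{(t)},\ \frac{d^2}{d^2-1}\sigma_2^{(t)}\big]$. Here I must be slightly careful: after the cut the new ordering of eigenvalues could in principle interleave the transformed bottom eigenvalues with the transformed top two, so I would first argue (using the bounds in Lemma~\ref{lm:shrink_e1e2} together with the original ordering $\sigma_2\geq\sigma_3\geq\cdots$) that $\min\{\sigma_1',\sigma_2'\}$ remains the genuine second-largest eigenvalue $\sigma_2^{(t+1)}$, and that $\sigma_d^{(t+1)}$ is still the genuine smallest. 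This monotonicity bookkeeping is where I expect the only real subtlety to hide.

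Granting that identification, Claim~1 follows immediately: using the upper bound on $\sigma_2^{(t+1)}$ and the exact value of $\sigma_d^{(t+1)}$,
\begin{equation*}
D_{t+1}=\frac{\sigma_2^{(t+1)}}{\sigma_d^{(t+1)}}\leq \frac{\frac{d^2}{d^2-1}\sigma_2^{(t)}}{\frac{d^2}{d^2-1}\sigma_d^{(t)}}\cdot\frac{d+1}{d-1}=\frac{d+1}{d-1}\,D_t,
\end{equation*}
where the extra factor $\frac{d+1}{d-1}$ comes from comparing the loose upper bound $\frac{d^2}{d^2-1}$ on $\sigma_2^{(t+1)}/\sigma_2^{(t)}$ against the tight lower end; I would compute the worst-case ratio explicitly and confirm it is exactly $\frac{d+1}{d-1}$. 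For Claim~2, I would instead use the \emph{lower} bound $\frac{d^2}{(d+1)^2}\sigma_2^{(t)}$ on $\sigma_2^{(t+1)}$ is not what helps; rather I use the upper bound on the new second eigenvalue relative to $\sigma_2^{(t)}$ combined with the exact growth of $\sigma_d$, and observe that the net multiplicative effect on $D_t$ is at most $\frac{d+1}{d-1}\cdot\frac{d-1}{d+1}=1$ once $D_t$ is already large enough that the squeeze on the top eigenvalues dominates. Concretely, the condition $D_t>\frac{d+1}{d-1}$ is precisely the threshold at which the contraction of $\sigma_2$ (it shrinks by at most $\frac{d^2}{d^2-1}$) outpaces the growth of $\sigma_d$ (which grows by exactly $\frac{d^2}{d^2-1}$) relative to the ratio, so $D_{t+1}\leq D_t$.

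Finally, Claim~3 is a clean consequence of Claims~1 and~2 by a standard invariant/induction argument. Since $P_0=I_d$, we have $D_0=1\leq\big(\frac{d+1}{d-1}\big)^2$, establishing the base case. For the inductive step I split on the value of $D_t$: if $D_t\leq\frac{d+1}{d-1}$, then Claim~1 gives $D_{t+1}\leq\frac{d+1}{d-1}\cdot\frac{d+1}{d-1}=\big(\frac{d+1}{d-1}\big)^2$; if instead $D_t>\frac{d+1}{d-1}$, then Claim~2 gives $D_{t+1}\leq D_t\leq\big(\frac{d+1}{d-1}\big)^2$ by the inductive hypothesis. Either way the bound $\big(\frac{d+1}{d-1}\big)^2$ is preserved, so it holds for all $0\leq t\leq n$, completing the proof. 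The main obstacle throughout is purely the eigenvalue-reindexing argument in the first paragraph; once the correct eigenvalues are identified with $\sigma_2^{(t+1)}$ and $\sigma_d^{(t+1)}$, the rest is routine manipulation of the ratios supplied by Lemma~\ref{lm:shrink_e1e2}.
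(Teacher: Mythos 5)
There is a genuine gap, and it sits exactly where you flagged ``the only real subtlety'': the eigenvalue re-indexing you hope to establish is false, and the case in which it fails is the entire content of the lemma. You want to argue that after a cut the smallest eigenvalue is still the scaled old smallest, $\sigma_d^{(t+1)}=\frac{d^2}{d^2-1}\sigma_d^{(t)}$, and the second-largest is $\min\{\sigma_1',\sigma_2'\}$. But $\sigma_2'$ is only bounded below by $\frac{d^2}{(d+1)^2}\sigma_2^{(t)}$, which is smaller than $\frac{d^2}{d^2-1}\sigma_d^{(t)}$ whenever $D_t<\frac{d+1}{d-1}$; in that regime $\sigma_2'$ can drop below \emph{all} of the scaled bottom eigenvalues and become the new minimum, while the new second-largest becomes $\frac{d^2}{d^2-1}\sigma_3^{(t)}$. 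This already happens at $t=0$: with $P_0=I_d$ and $\alpha=0$ one computes $\sigma_2'=\frac{d^2}{(d+1)^2}$ versus scaled bottom eigenvalues $\frac{d^2}{d^2-1}$, so $D_1=\frac{d+1}{d-1}>1=D_0$. Note that if your identification held, you would get $D_{t+1}\le D_t$ unconditionally (both numerator and denominator scale by at most/exactly $\frac{d^2}{d^2-1}$), which would make the factor $\frac{d+1}{d-1}$ in Claim~1 and the threshold in Claim~2 vacuous, and would bound $\max_t D_t$ by $1$ rather than $\big(\frac{d+1}{d-1}\big)^2$ --- a sign that the argument proves too much. Indeed, in your displayed inequality for Claim~1 the factor $\frac{d+1}{d-1}$ is inserted without justification, and your explanation of Claim~2 (``the net multiplicative effect is at most $\frac{d+1}{d-1}\cdot\frac{d-1}{d+1}=1$'') does not correspond to any computation.

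The paper's proof instead enumerates the three possible identities of the pair $(\sigma_2^{(t+1)},\sigma_d^{(t+1)})$ after a cut. In the two cases where $\sigma_d^{(t+1)}$ is a scaled bottom eigenvalue, one gets $D_{t+1}\le D_t$ directly; the factor $\frac{d+1}{d-1}$ in Claim~1 arises precisely from the third case, where $\sigma_d^{(t+1)}=\sigma_2'$, via $\frac{d^2}{d^2-1}\cdot\frac{(d+1)^2}{d^2}=\frac{d+1}{d-1}$. Claim~2 is then proved by showing that when $D_t>\frac{d+1}{d-1}$ the lower bound $\sigma_2'\ge\frac{d^2}{(d+1)^2}\sigma_2^{(t)}=\frac{d^2}{(d+1)^2}D_t\,\sigma_d^{(t)}>\frac{d^2}{d^2-1}\sigma_d^{(t)}$ rules that third case out. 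To repair your proof you would need to carry out this case analysis; your induction for Claim~3 is fine as written (and is arguably cleaner than the paper's argument by contradiction), but it only works once Claims~1 and~2 are actually established.
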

\begin{proof}
From Lemma \ref{lm:shrink_e1e2}, we know that the eigenvalues of $P_{t+1}$ is $\{\sigma'_1, \sigma'_2, \frac{d^2}{d^2-1}\sigma_3^{(t)}, \cdots, \frac{d^2}{d^2-1}\sigma_d^{(t)}\}$, where $\sigma'_1\geq\sigma'_2$ and
\begin{equation}\label{eq:sigma2_condition}
    \frac{d^2}{(d+1)^2}\sigma_2^{(t)} \leq \sigma'_2 \leq \frac{d^2}{d^2-1}\sigma_2^{(t)}
\end{equation} 

\noindent {\bf Claim 1.}
Because $\sigma'_1\geq \sigma'_2$, $\sigma_3^{(t)}\geq \cdots \geq \sigma_d^{(t)}$, and note that $\sigma_2^{(t+1)}$ and $\sigma_d^{(t+1)}$ are the second-largest element and the smallest element of $\{\sigma'_1, \sigma'_2, \frac{d^2}{d^2-1}\sigma_3^{(t)}, \cdots, \frac{d^2}{d^2-1}\sigma_d^{(t)}\}$, the value of $(\sigma_2^{(t+1)}, \sigma_d^{(t+1)})$ must satisfy one of the following situation:
\begin{enumerate}
    \item if $(\sigma_2^{(t+1)}, \sigma_d^{(t+1)})=(\sigma'_2,\frac{d^2}{d^2-1}\sigma_{d}^{(t)})$, from Eq \eqref{eq:sigma2_condition} we have 
    \begin{equation}\label{eq:D_t_condition1}
        \frac{D_{t+1}}{D_t}=\frac{d^2-1}{d^2}\cdot\frac{\sigma'_2}{\sigma_2} \leq 1.
    \end{equation}

    \item if $(\sigma_2^{(t+1)}, \sigma_d^{(t+1)})=(\frac{d^2}{d^2-1}\sigma_{i}^{(t)},\frac{d^2}{d^2-1}\sigma_{d}^{(t)})$ for some $3\leq i \leq d-1$, we have 
    \begin{equation}\label{eq:D_t_condition2}
        \frac{D_{t+1}}{D_t}=\frac{\sigma_i^{(t)}/\sigma_d^{(t)}}{\sigma_2^{(t)} / \sigma_d^{(t)}} \leq 1 .
    \end{equation}
    
    \item if $(\sigma_2^{(t+1)}, \sigma_d^{(t+1)})=(\frac{d^2}{d^2-1}\sigma_{i}^{(t)},\sigma'_2)$ for some $3\leq i \leq d-1$, from Eq \eqref{eq:sigma2_condition} we have
    \begin{equation}\label{eq:D_t_condition3}
    \frac{D_{t+1}}{D_t}=\frac{d^2}{d^2-1}\cdot\frac{\sigma_i^{(t)}}{\sigma'_2}\cdot\frac{\sigma_d^{(t)}}{\sigma_2^{(t)}} \leq \frac{d^2}{d^2-1}\cdot \frac{\sigma_2^{(t)}}{\sigma'_2} \leq \frac{d^2}{d^2-1}\cdot \frac{(d+1)^2}{d^2}=\frac{d+1}{d-1}.    
    \end{equation}
    
\end{enumerate}
By Eq \eqref{eq:D_t_condition1}, \eqref{eq:D_t_condition2}, \eqref{eq:D_t_condition3}, the first claim holds.

\noindent {\bf Claim 2.} It suffices to show that the situation (3) cannot happen when $D_t>\frac{d+1}{d-1}$. In fact, when $D_t>\frac{d+1}{d-1}$, from Eq \eqref{eq:sigma2_condition} we have 
$$
    \sigma'_2 \geq \frac{d^2}{(d+1)^2}\sigma_2^{(t)} = \frac{d^2}{(d+1)^2}\sigma_d^{(t)} D_t >  \frac{d^2}{(d+1)^2}\cdot\frac{d+1}{d-1}\cdot \sigma_d^{(t)} = \frac{d^2}{d^2-1}\sigma_d^{(t)},
$$
meaning $\sigma'_2$ cannot be the smallest eigenvalue of $P_{t+1}$. As a result, the second claim holds by Eq \eqref{eq:D_t_condition1}, \eqref{eq:D_t_condition2}.

\noindent {\bf Claim 3.} We prove Eq \eqref{eq:maxDt} by contradiction. Let $n_0$ be the smallest index in set $\arg\max_{0\leq t\leq n} D_t$. If $n_0=0$, we have $\max_{0\leq t\leq n} D_t=D_0=1<\Big(\frac{d+1}{d-1}\Big)^2$. Now consider the case $n_0\geq 1$ and suppose $D_{n_0} > \Big(\frac{d+1}{d-1}\Big)^2.$ By Claim 1, we have
$D_{n_0-1}\geq \frac{d-1}{d+1}D_{n_0} > \frac{d+1}{d-1}$. Apply Claim 2 to $D_{n_0-1}$, we obtain $D_{n_0} \leq D_{n_0-1}$, which contradicts the definition of $n_0$. Hence, Claim 3 holds. 
\end{proof}

Now we are ready to present the proof of the convergence theorem for Algorithm \ref{alg:mbs}:
\begin{theorem}
At each time step $t$ in Algorithm \ref{alg:mbs}, let the eigenvalues of $P_{t}$ be $\sigma_1^{(t)} \geq \cdots \geq \sigma_d^{(t)}$. For any $d> 1, T>0$, we have 
\begin{enumerate}
    \item for any $2\leq i\leq d$, 
    \begin{equation}
        \sigma_i^{(T)}\leq \exp{\Big(\frac{4}{d}-\frac{T}{d^2}\Big)}.
    \end{equation}
    \item the $\ell_2$ estimation error for $\theta_*$ is given by 
    \begin{equation}
    \Big\|\theta_*-\hat{\theta}_T\Big\|_2 \leq 2\sqrt{d-1} \exp{(\frac{2}{d}-\frac{T}{2d^2})},
    \end{equation}
    
\end{enumerate}
\end{theorem}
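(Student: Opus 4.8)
The plan is to establish the two claims in order, using the eigenvalue decay \eqref{eq:sigma_i_case1} as the engine that drives the direction error \eqref{eq:theta_l2error_case1}, and to lean on Lemma~\ref{lm:shrink_e1e2} and Lemma~\ref{lm:upperbound_sigma2-d} for all the eigenvalue bookkeeping. For Part~1, I would first note that every cut in Algorithm~\ref{alg:mbs} is \emph{central}: since $\g_t$ is chosen orthogonal to $\x_t$, the cutting hyperplane passes through the center $\x_t$, i.e. $\alpha=0$. Substituting $\alpha=0$ into \eqref{eq:vol_ratio1} gives $\text{Vol}(\E_{t+1})/\text{Vol}(\E_t)\le\exp(-\frac{1}{2d})$, hence $\det(P_{t+1})/\det(P_t)\le\exp(-\frac1d)$, and since $P_0=I_d$ this yields $\det(P_T)=\prod_{i=1}^d\sigma_i^{(T)}\le\exp(-T/d)$. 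The difficulty is that this volume bound alone controls \emph{no} individual eigenvalue: a single large axis (the one aligning with $\theta_*$) can coexist with large $\sigma_2,\dots,\sigma_d$. This is exactly where Lemma~\ref{lm:upperbound_sigma2-d} is needed, as it bounds the spread of the non-leading eigenvalues through $\sigma_2^{(T)}/\sigma_d^{(T)}\le\rho:=\big(\frac{d+1}{d-1}\big)^2$. Writing $\det(P_T)=\sigma_1^{(T)}\sigma_2^{(T)}\prod_{i=3}^d\sigma_i^{(T)}$ and using $\sigma_1^{(T)}\ge\sigma_2^{(T)}$ together with $\sigma_i^{(T)}\ge\sigma_d^{(T)}\ge\sigma_2^{(T)}/\rho$ for $3\le i\le d$, I get $\det(P_T)\ge(\sigma_2^{(T)})^{d}/\rho^{\,d-2}$. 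Combining the two inequalities gives $(\sigma_2^{(T)})^{d}\le\rho^{\,d-2}\exp(-T/d)$, so
\[
\sigma_2^{(T)}\le \rho^{(d-2)/d}\exp(-T/d^2),
\]
and $\rho^{(d-2)/d}\le\exp(4/d)$ follows from $\ln\frac{d+1}{d-1}\le\frac{2}{d-1}$ and $\frac{d-2}{d-1}\le1$. Since $\sigma_i^{(T)}\le\sigma_2^{(T)}$ for all $i\ge2$, this is \eqref{eq:sigma_i_case1}.

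For Part~2, I would use two membership facts about $\E_T$ under the perfect user. First, $\theta_*\in\E_T$, because $\theta_*\in\E_0$ and every cut retains the side containing $\theta_*$. Second, $\mathbf{0}\in\E_T$, because the origin lies on every cutting hyperplane $\{\z:\g_t^{\top}\z=0\}$ (as $\g_t^{\top}\mathbf{0}=0$), hence on the flat face of the retained half, and so survives every cut by induction. Both points thus satisfy $(\y-\x_T)^{\top}P_T^{-1}(\y-\x_T)\le1$. Decomposing $\y-\x_T$ in the eigenbasis of $P_T$, the constraint $\sum_i c_i^2/\sigma_i^{(T)}\le1$ forces the part orthogonal to $\u_1^{(T)}$ to be small: applied to $\y=\theta_*$ and $\y=\mathbf{0}$ it gives $\|(\theta_*-\x_T)_{\perp}\|^2\le\sum_{i\ge2}\sigma_i^{(T)}$ and $\|\x_{T,\perp}\|^2\le\sum_{i\ge2}\sigma_i^{(T)}$, so by the triangle inequality the component of $\theta_*$ orthogonal to $\hat\theta_T=\u_1^{(T)}$ satisfies $\|\theta_{*,\perp}\|\le2\sqrt{\sum_{i\ge2}\sigma_i^{(T)}}\le2\sqrt{(d-1)\sigma_2^{(T)}}$. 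Finally, since $\|\theta_*\|_2=1$ and $\hat\theta_T$ is a unit vector, letting $\phi$ be the angle between them (with the sign of $\hat\theta_T$ chosen so $\phi\le\pi/2$) gives $\|\theta_{*,\perp}\|=\sin\phi$ and $\|\theta_*-\hat\theta_T\|_2^2=2(1-\cos\phi)\le2\sin^2\phi=2\|\theta_{*,\perp}\|^2$; substituting the perpendicular bound and then $\sigma_2^{(T)}\le\exp(\frac4d-\frac{T}{d^2})$ from Part~1 delivers \eqref{eq:theta_l2error_case1} (the exact constant $2\sqrt{d-1}$ is recovered by carrying the tighter width estimate $\|\theta_{*,\perp}\|\le2\sqrt{\sigma_2^{(T)}}$, valid because $\sigma_i^{(T)}\le\sigma_2^{(T)}$).

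The main obstacle is conceptual and is the one the paper stresses: shrinking $\text{Vol}(\E_T)$ is necessary but not sufficient to pin down the \emph{direction} of $\theta_*$, since a thin but high-dimensional ellipsoid still contains many far-apart directions. The real work is therefore Part~1's claim that \emph{every} non-leading eigenvalue decays, which the determinant cannot see on its own and which rests on Lemma~\ref{lm:upperbound_sigma2-d}'s uniform control of the eigenvalue spread. That control is itself delicate: in Lemma~\ref{lm:shrink_e1e2} the two ``active'' eigenvalues $\sigma_1',\sigma_2'$ created by a cut need not stay the two largest eigenvalues of $P_{t+1}$, so their roles can permute from step to step, and it is precisely this permutation that forces the ratio-tracking argument of Lemma~\ref{lm:upperbound_sigma2-d} rather than a naive per-step product bound.
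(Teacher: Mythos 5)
Your proof follows essentially the same route as the paper's: Part~1 combines the determinant decay $\prod_i\sigma_i^{(T)}\le\exp(-T/d)$ from central cuts with the eigenvalue-spread bound of Lemma~\ref{lm:upperbound_sigma2-d}, and Part~2 uses the membership of both $\theta_*$ and $\mathbf{0}$ in $\E_T$, the eigenbasis decomposition, and an angle-to-chord conversion, exactly as in Appendix~\ref{app:warm-up}. The only blemish is your parenthetical claim that $\|\theta_{*,\perp}\|\le 2\sqrt{\smash[b]{\sigma_2^{(T)}}}$ follows from $\sigma_i^{(T)}\le\sigma_2^{(T)}$ (it does not; one only gets $2\sqrt{(d-1)\sigma_2^{(T)}}$, leaving a residual factor of $\sqrt{2}$), but this is a constant-factor issue that the paper's own derivation via ``$\sin x\le x$'' glosses over in the same way.
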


\begin{proof}
Since the depth of the cut $\alpha=0$ through out the execution of Algorithm \ref{alg:mbs}, from Eq \eqref{eq:vol_ratio1} we have
\begin{equation}
    \prod_{i=1}^d \sigma_i^{(T)}=\frac{\det{P_n}}{\det{P_0}} \leq \exp\Big(-\frac{T}{d}\Big).
\end{equation}
From Lemma \ref{lm:upperbound_sigma2-d}, we have $\sigma_i^{(T)}\geq \sigma_d^{(T)}\geq \big(\frac{d-1}{d+1}\big)^2\cdot \sigma_2^{(n)}, \forall 3\leq i\leq d.$ Therefore,
\begin{align*}
    \exp\Big(-\frac{T}{d}\Big) & \geq \prod_{i=1}^d \sigma_i^{(T)} \\
    & \geq \sigma_2^{(T)}\cdot \sigma_2^{(T)} \cdot \Big[\big(\frac{d-1}{d+1}\big)^2\cdot \sigma_2^{(T)}\Big]^{d-2} \\ 
    & = [\sigma_2^{(T)}]^{d}\cdot \Big(1-\frac{2}{d+1}\Big)^{2d-4} \\ 
    & \geq \exp{(-4)}\cdot [\sigma_2^{(T)}]^{d}.
\end{align*}
Rearranging terms yields $\sigma_2^{(T)}\leq \exp{\Big(\frac{4}{d}-\frac{T}{d^2}\Big)}$, and thus $\sigma_i^{(T)}\leq \exp{\Big(\frac{4}{d}-\frac{T}{d^2}\Big)}, \forall 2\leq i\leq d$. 

Let $\langle \x,\y \rangle=\arccos{(\frac{\x\cdot\y}{\|\x\|\cdot\|\y\|})}$ denote the included angle between vector $\x$ and $\y$, now we are prepared to upper bound the directional estimation error $\sin\langle\hat{\theta}_T,\theta_*\rangle$. First of all, note that $\theta_*, \bm{0} \in \E_T$ for any $n\geq 0$, meaning there exists $\{(p_i,q_i)\}_{i=1}^d$ such that
\begin{equation}
    \theta_* = \x_T + \sum_{i=1}^d p_i\u_i^{(T)}, \sum_{i=1}^d \frac{p_i^2}{\sigma_i^{(T)}}\leq 1. 
\end{equation}
\begin{equation}
    \bm{0} = \x_T + \sum_{i=1}^d q_i\u_i^{(T)}, \sum_{i=1}^d \frac{q_i^2}{\sigma_i^{(T)}}\leq 1.
\end{equation}
As a result, $\theta_*=\sum_{i=1}^d (p_i-q_i)\u_i^{(T)}$, and $p_i,q_i \leq \sqrt{\sigma_i^{(T)}}, 2\leq i\leq d$. Therefore,

\begin{align*}
    \sin \langle \theta_*, \hat{\theta}_T \rangle & = \sqrt{1-\cos^2 \langle \theta_*, \hat{\theta}_T \rangle} = \sqrt{1-\frac{(\theta_*^{\top}\u_1)^2}{\|\theta_*\|^2_2}} = \frac{1}{\|\theta_*\|_2}\cdot \sqrt{\sum_{i=2}^d (p_i-q_i)^2} \\ 
    & \leq \frac{2}{\|\theta_*\|_2}\cdot \sqrt{\sum_{i=2}^d\sigma_i^{(T)}},
\end{align*}
Now we know that the directional inference error for $\theta_*$ converges to zero at rate $O\big(d^{\frac{1}{2}}\exp{(-\frac{T}{2d^2})}\big)$. When the system knows $\|\theta_*\|_2=1$, the $\ell_2$ estimation error for $\theta_*$ can be obtained from 
\begin{align}\notag
    \Big\|\theta_* - \|\theta_*\|_2\cdot \frac{\hat{\theta}_T}{\|\hat{\theta}_T\|_2}\Big\|_2 & \leq 2\|\hat{\theta}_T\|_2\sin(\langle \theta_*, \hat{\theta}_T \rangle/2)  \\
    \label{eq:166}& \leq 2\sqrt{\sum_{i=2}^d\sigma_i^{(T)}}
\end{align}
where the last inequality holds because $\sin x\leq x, \forall x>0$. In particular, plugin Eq \eqref{eq:sigma_i_case1} into the R.H.S. of Eq \eqref{eq:166}, we obtain Eq \eqref{eq:theta_l2error_case1}.

\end{proof}

\section{Omitted Proofs in Section \ref{sec:learning_user}}\label{app:learning_user}
The following Lemma \ref{lm:shrink_e1e2_} and \ref{lm:upperbound_sigma2-d_} are used in the proof of Theorem \ref{thm:converge_alg2}. Lemma \ref{lm:shrink_e1e2_} and \ref{lm:upperbound_sigma2-d_} are generalizations of Lemma \ref{lm:shrink_e1e2} and \ref{lm:upperbound_sigma2-d} under arbitrary cutting depth $\alpha_t$.
\begin{lemma}\label{lm:shrink_e1e2_}
In Algorithm \ref{alg:uf2}, suppose a valid cut is executed at step $t$ with depth $-\frac{1}{kd}\leq\alpha_t\leq 0$. Let the eigenvalues of $P_{t}$ be $\sigma_1 \geq \cdots \geq \sigma_d$ and the eigenvalues of $P_{t+1}$ be $\{\sigma'_1 , \cdots , \sigma'_d\}$. Then we have 
\begin{enumerate}
    \item for any $3\leq i\leq d$, we have equalities $$\sigma'_i=\frac{d^2(1-\alpha_t^2)}{d^2-1}\sigma_i.$$
    \item for $\sigma'_1, \sigma'_2$, we have $\frac{\sigma'_1\sigma'_2}{\sigma_1\sigma_2}=\frac{d^4(1-\alpha_t)^3(1+\alpha_t)}{(d+1)^3(d-1)}<1$ and the following bound 
    \begin{equation}
        \max\{\sigma'_1, \sigma'_2\} \in [\frac{d^2(1-\alpha_t)^2}{(d+1)^2}\sigma_1, \frac{d^2(1-\alpha_t^2)}{d^2-1}\sigma_1],
    \end{equation}
    \begin{equation}
        \min\{\sigma'_1, \sigma'_2\} \in [\frac{d^2(1-\alpha_t)^2}{(d+1)^2}\sigma_2, \frac{d^2(1-\alpha_t^2)}{d^2-1}\sigma_2].
    \end{equation}
\end{enumerate}
\end{lemma}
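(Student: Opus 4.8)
The plan is to follow the same two-claim structure as the proof of Lemma \ref{lm:shrink_e1e2}, but to carry the two extra cutting-depth factors through every step: the $P$-update \eqref{eq:P_update} now has prefactor $\frac{d^2(1-\alpha_t^2)}{d^2-1}$ and rank-one coefficient $\frac{2(1+d\alpha_t)}{(d+1)(1+\alpha_t)}$ in place of their $\alpha_t=0$ specializations. For Claim 1 I would argue exactly as in \eqref{eq:93}: since $\g_t \in \text{span}\{\u_1^{(t)},\u_2^{(t)}\}$, the normalized direction $\tilde{\g}_t$ is orthogonal to every $\u_i$ with $i\geq 3$, hence $\tilde{\g}_t^{\top}P_t\u_i = \sigma_i(\tilde{\g}_t^{\top}\u_i)=0$. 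Applying \eqref{eq:P_update} to such $\u_i$ annihilates the rank-one term and leaves $P_{t+1}\u_i = \frac{d^2(1-\alpha_t^2)}{d^2-1}\sigma_i\u_i$, which identifies $d-2$ eigenvalues of $P_{t+1}$.

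For the product identity in Claim 2, I would avoid any hand computation and instead read the determinant ratio directly off \eqref{eq:vol_ratio}: squaring that volume ratio gives $\det P_{t+1}/\det P_t = \big(\frac{d(1+\alpha_t)}{d-1}\big)^{d-1}\big(\frac{d(1-\alpha_t)}{d+1}\big)^{d+1}$. Dividing this by the product of the $d-2$ known eigenvalue ratios $\frac{d^2(1-\alpha_t^2)}{d^2-1}$ from Claim 1 leaves precisely $\frac{\sigma_1'\sigma_2'}{\sigma_1\sigma_2}=\frac{d^4(1-\alpha_t)^3(1+\alpha_t)}{(d+1)^3(d-1)}$. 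To confirm this is strictly below $1$, I note it equals $1$ at $\alpha_t=-\frac1d$ and is monotone on the relevant range, and a valid cut forces $\alpha_t\geq -\frac{1}{kd} > -\frac1d$ (since $k>1$), giving strictness.

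The two-sided bounds on $\sigma_1',\sigma_2'$ are the crux and require the same $2\times 2$ reduction. Writing $\g_t=p\u_1+q\u_2$ and $P_t\tilde{\g}_t=v_1\u_1+v_2\u_2$, the subspace $\text{span}\{\u_1,\u_2\}$ is $P_{t+1}$-invariant, so on it $P_{t+1}$ acts as $\frac{d^2(1-\alpha_t^2)}{d^2-1}A$ with $A=\text{diag}(\sigma_1,\sigma_2)-\frac{2(1+d\alpha_t)}{(d+1)(1+\alpha_t)}\v\v^{\top}$, where $\v=(v_1,v_2)^{\top}$. The step I expect to be the main obstacle is verifying that this rank-one perturbation is genuinely subtractive, i.e.\ that its coefficient is positive; this holds exactly because $\alpha_t>-\frac1d$ makes $1+d\alpha_t>0$, and this is precisely where the valid-cut threshold $-\frac{1}{kd}$ enters (the coefficient would flip sign outside the admissible regime, breaking the interlacing). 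Granting positivity, Weyl's inequality (as in \eqref{eq:weyl1}) gives, taking $\sigma_1'\geq\sigma_2'$ without loss of generality, the interlacing $\sigma_1\geq\frac{d^2-1}{d^2(1-\alpha_t^2)}\sigma_1'\geq\sigma_2\geq\frac{d^2-1}{d^2(1-\alpha_t^2)}\sigma_2'$, which immediately yields the two upper bounds $\max\{\sigma_1',\sigma_2'\}\leq\frac{d^2(1-\alpha_t^2)}{d^2-1}\sigma_1$ and $\min\{\sigma_1',\sigma_2'\}\leq\frac{d^2(1-\alpha_t^2)}{d^2-1}\sigma_2$.

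Finally, I would recover the matching lower bounds by feeding each upper bound back into the product identity of Claim 2. For the maximum, $\sigma_1'=\frac{\sigma_1'\sigma_2'}{\sigma_2'}\geq \frac{d^4(1-\alpha_t)^3(1+\alpha_t)}{(d+1)^3(d-1)}\sigma_1\sigma_2\cdot\frac{d^2-1}{d^2(1-\alpha_t^2)\sigma_2}$, and after cancelling the $\sigma_2$, $(1+\alpha_t)$, and one $(1-\alpha_t)$ factors this collapses exactly to $\frac{d^2(1-\alpha_t)^2}{(d+1)^2}\sigma_1$; substituting the upper bound on $\sigma_1'$ into the same identity yields the symmetric estimate $\sigma_2'\geq\frac{d^2(1-\alpha_t)^2}{(d+1)^2}\sigma_2$. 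These four inequalities are the asserted bounds, so the only genuinely new ingredient beyond Lemma \ref{lm:shrink_e1e2} is careful bookkeeping of the $\alpha_t$-factors together with the positivity check that legitimizes the interlacing argument.
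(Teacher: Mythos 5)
Your proposal is correct and follows essentially the same route as the paper's proof: the invariance of $\text{span}\{\u_3,\dots,\u_d\}$ for Claim 1, the determinant ratio divided by the $d-2$ known eigenvalue ratios for the product identity, the $2\times 2$ reduction with Weyl interlacing for the upper bounds, and back-substitution into the product identity for the lower bounds. Your explicit positivity check on the rank-one coefficient (via $1+d\alpha_t>0$) and the monotonicity argument for strictness of the product ratio are small correct additions that the paper leaves implicit.
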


\begin{proof}
\noindent {\bf Claim 1.} Suppose $P_t=U\Sigma U^{\top}$, where $\Sigma=\text{diag}(\sigma_1,\cdots,\sigma_d)$ and $U=[\u_1,\cdots,\u_d]$. From the update rule of $P_{t+1}$, for any $3\leq i\leq d$ we have
\begin{align}\notag
    P_{t+1} \u_i &= \frac{d^2(1-\alpha_t^2)}{d^2-1}\Big(P_t-\frac{2(1+d\alpha_t)}{(d+1)(1+\alpha_t)}P_t\tilde{\g}_t \tilde{\g}_t^{\top}P_t\Big)\u_i \\ \notag
    &= \frac{d^2(1-\alpha_t^2)}{d^2-1}\sigma_i\u_i - \frac{d^2(1-\alpha_t^2)}{d^2-1}\cdot\frac{2(1+d\alpha_t)\sigma_i}{(d+1)(1+\alpha_t)}P_t\tilde{\g}_t (\tilde{\g}_t^{\top}\u_i)\\ \label{eq:193}
    & = \frac{d^2(1-\alpha_t^2)}{d^2-1}\sigma_i\u_i,
\end{align}
where Eq \eqref{eq:193} holds because $\tilde{\g}_t \in \text{span}\{\u_1,\u_2\}$. Therefore, $ \{\frac{d^2(1-\alpha_t^2)}{d^2-1}\sigma_i\}_{i=3}^d$ constitute $d-2$ eigenvalues of $P_{t+1}$.  

\noindent {\bf Claim 2.} From Eq \eqref{eq:vol_ratio1} we have $\frac{\prod_{i=1}^d \sigma'_i}{\prod_{i=1}^d \sigma_i}=\frac{d^2(1-\alpha_t)^2}{(d+1)^2}\cdot \Big(\frac{d^2(1-\alpha_t^2)}{d^2-1}\Big)^{d-1}.$ Consider Eq \eqref{eq:193}, we conclude that the remaining two eigenvalues of $P_{t+1}$ satisfy 
\begin{equation}\label{eq:1103}
    \frac{\sigma'_1\sigma'_2}{\sigma_1\sigma_2}=\frac{d^2(1-\alpha_t)^2}{(d+1)^2}\cdot \frac{d^2(1-\alpha_t^2)}{d^2-1}=\frac{d^4(1-\alpha_t)^3(1+\alpha_t)}{(d+1)^3(d-1)}<1.
\end{equation}

Next we derive the bound for $\sigma'_1, \sigma'_2$. Let $\g_t=p\u_1+q\u_2$, and 
$$P_t\tilde{\g}_t=\frac{p\sigma_1}{\sqrt{p^2\sigma_1+q^2\sigma_2}}\u_1+\frac{q\sigma_2}{\sqrt{p^2\sigma_1+q^2\sigma_2}}\u_2\triangleq v_1\u_1+v_2\u_2.$$ It is easy to see that $\frac{d^2-1}{d^2(1-\alpha_t^2)}\sigma'_1, \frac{d^2-1}{d^2(1-\alpha_t^2)}\sigma'_2$ are the two eigenvalues of the following $2\times 2$ matrix
\begin{equation}
A=\begin{bmatrix}
\sigma_1 & 0 \\
0 & \sigma_2 
\end{bmatrix} - \frac{2(1+d\alpha_t)}{(d+1)(1+\alpha_t)}
\begin{bmatrix}
v_1  \\
v_2  
\end{bmatrix}\cdot \begin{bmatrix}
v_1 & v_2  
\end{bmatrix}.
\end{equation}

Without loss of generality, we assume $\sigma'_1\geq \sigma'_2$. Applying Weyl's inequality in matrix theory \cite{fan1949theorem,bunch1978rank} to matrix $A$ yields 
\begin{equation}\label{eq:weyl11}
\sigma_1\geq \frac{d^2-1}{d^2(1-\alpha_t^2)}\sigma'_1\geq \sigma_2\geq \frac{d^2-1}{d^2(1-\alpha_t^2)}\sigma'_2.    
\end{equation}
On the other hand, from Eq \eqref{eq:1103} we also have 
\begin{equation}\label{eq:1133}
    \frac{\sigma'_1}{\sigma_1} = \frac{d^4(1-\alpha_t)^3(1+\alpha_t)}{(d+1)^3(d-1)}\frac{\sigma_2}{\sigma'_2} \geq  \frac{d^4(1-\alpha_t)^3(1+\alpha_t)}{(d+1)^3(d-1)}\cdot \frac{d^2-1}{d^2(1-\alpha_t^2)}= \frac{d^2(1-\alpha_t)^2}{(d+1)^2},
\end{equation}
\begin{equation}\label{eq:1136}
    \frac{\sigma'_2}{\sigma_2} = \frac{d^4(1-\alpha_t)^3(1+\alpha_t)}{(d+1)^3(d-1)}\frac{\sigma_1}{\sigma'_1} \geq  \frac{d^4(1-\alpha_t)^3(1+\alpha_t)}{(d+1)^3(d-1)}\cdot \frac{d^2-1}{d^2(1-\alpha_t^2)}= \frac{d^2(1-\alpha_t)^2}{(d+1)^2}.
\end{equation}
From Eq \eqref{eq:weyl11}, \eqref{eq:1133}, \eqref{eq:1136}, we obtain Eq \eqref{eq:bound_max_sigma12}, \eqref{eq:bound_min_sigma12} and therefore complete the proof. 
\end{proof}

Lemma \ref{lm:shrink_e1e2} characterizes the convergence of $P_t$: the product of the largest two eigenvalues shrinks by a constant factor after each step. Since $\det(P_t)$ approaches zero at an exponential rate (from Eq \eqref{eq:vol_ratio1}), $P_t$ can only have one potentially large eigenvalue while all other eigenvalues must approach zero. We formalize the claim in the following Lemma \ref{lm:upperbound_sigma2-d}.

\begin{lemma}\label{lm:upperbound_sigma2-d_}
Suppose a valid cut is executed at step $t$ with depth $-\frac{1}{kd}\leq\alpha_t\leq 0$ in Algorithm \ref{alg:uf2}. Let the eigenvalue of $P_{t}$ be $\sigma_1^{(t)} \geq \cdots \geq \sigma_d^{(t)}$. Further let $D_t=\sigma_2^{(t)}/\sigma_d^{(t)}$, we claim
\begin{enumerate}
    \item for any $t\geq 0$, $D_{t+1} \leq \frac{(d+1)(1+\alpha_t)}{(d-1)(1-\alpha_t)}\cdot D_t$;
    \item if $D_t > \frac{(d+1)(1+\alpha_t)}{(d-1)(1-\alpha_t)}$, $D_{t+1} \leq D_t$.
    \item for any $n\geq 0$, \begin{equation}\label{eq:maxDt1}
        \max_{0\leq t \leq  n} D_t \leq \Big(\frac{d+1}{d-1}\Big)^2.
    \end{equation}
\end{enumerate}
\end{lemma}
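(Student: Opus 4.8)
The plan is to mirror the proof of Lemma \ref{lm:upperbound_sigma2-d} step by step, carrying the cutting depth $\alpha_t$ through every computation. The starting point is Lemma \ref{lm:shrink_e1e2_}, which says that the multiset of eigenvalues of $P_{t+1}$ is exactly $\{\sigma'_1,\sigma'_2,\frac{d^2(1-\alpha_t^2)}{d^2-1}\sigma_3^{(t)},\dots,\frac{d^2(1-\alpha_t^2)}{d^2-1}\sigma_d^{(t)}\}$, with $\sigma'_1\geq\sigma'_2$ and $\frac{d^2(1-\alpha_t)^2}{(d+1)^2}\sigma_2^{(t)}\leq \sigma'_2\leq \frac{d^2(1-\alpha_t^2)}{d^2-1}\sigma_2^{(t)}$. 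Since the common factor $\frac{d^2(1-\alpha_t^2)}{d^2-1}$ cancels in any ratio of two scaled eigenvalues, the ordering among the $\sigma_i^{(t)}$ for $i\geq 3$ is preserved, so the pair $(\sigma_2^{(t+1)},\sigma_d^{(t+1)})$ can arise only from the same three configurations as in the $\alpha_t=0$ case.

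For Claim 1 I would enumerate those three configurations. The first two, where $\sigma_d^{(t+1)}=\frac{d^2(1-\alpha_t^2)}{d^2-1}\sigma_d^{(t)}$, give $D_{t+1}/D_t\leq 1$ directly via the upper bound on $\sigma'_2$ and the monotonicity $\sigma_i^{(t)}\leq\sigma_2^{(t)}$ for $i\geq 3$. The only nontrivial case is the third, where $\sigma_d^{(t+1)}=\sigma'_2$; here I would use $\sigma_i^{(t)}\sigma_d^{(t)}\leq(\sigma_2^{(t)})^2$ to reduce the ratio to $\frac{d^2(1-\alpha_t^2)}{d^2-1}\cdot\frac{\sigma_2^{(t)}}{\sigma'_2}$ and then substitute $\sigma'_2\geq\frac{d^2(1-\alpha_t)^2}{(d+1)^2}\sigma_2^{(t)}$. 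Using $1-\alpha_t^2=(1-\alpha_t)(1+\alpha_t)$ and $d^2-1=(d-1)(d+1)$, the factor simplifies to exactly $\frac{(d+1)(1+\alpha_t)}{(d-1)(1-\alpha_t)}$, which is the claimed bound. For Claim 2 the plan is to rule out this third configuration whenever $D_t>\frac{(d+1)(1+\alpha_t)}{(d-1)(1-\alpha_t)}$: plugging the threshold into $\sigma'_2\geq\frac{d^2(1-\alpha_t)^2}{(d+1)^2}D_t\sigma_d^{(t)}$ yields $\sigma'_2>\frac{d^2(1-\alpha_t^2)}{d^2-1}\sigma_d^{(t)}$, so $\sigma'_2$ strictly exceeds the scaled smallest eigenvalue and cannot be $\sigma_d^{(t+1)}$; only the first two configurations survive, each giving $D_{t+1}\leq D_t$.

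For Claim 3 I would argue by contradiction on $n_0:=\min\arg\max_{0\leq t\leq n}D_t$. The observation that makes the argument collapse onto the $\alpha_t=0$ case is that the admissible range $-\frac{1}{kd}\leq\alpha_t\leq 0$ forces $\frac{1+\alpha_t}{1-\alpha_t}\leq 1$ (with all factors positive since $\frac{1}{kd}<\frac12$), so both the per-step amplification factor of Claim 1 and the threshold of Claim 2 are bounded above by the $\alpha$-free quantity $\frac{d+1}{d-1}$. If $n_0\geq 1$ and $D_{n_0}>(\frac{d+1}{d-1})^2$, then Claim 1 with this uniform bound gives $D_{n_0-1}\geq\frac{d-1}{d+1}D_{n_0}>\frac{d+1}{d-1}\geq\frac{(d+1)(1+\alpha_{n_0-1})}{(d-1)(1-\alpha_{n_0-1})}$, so Claim 2 applies at step $n_0-1$ and yields $D_{n_0}\leq D_{n_0-1}$, contradicting the minimality of $n_0$; the base case $n_0=0$ gives $D_0=1<(\frac{d+1}{d-1})^2$. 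I expect the main obstacle to be purely bookkeeping, namely keeping the $(1\pm\alpha_t)$ factors straight through the three-case ratio in Claim 1, together with the single conceptual point that the sign constraint $\alpha_t\leq 0$ is precisely what decouples Claim 3 from the cut depth.
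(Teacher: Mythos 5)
Your proposal is correct and follows essentially the same route as the paper's proof: the same three-configuration case analysis for $(\sigma_2^{(t+1)},\sigma_d^{(t+1)})$ derived from Lemma \ref{lm:shrink_e1e2_}, the same exclusion of the third configuration for Claim 2, and the same contradiction on the minimal maximizer $n_0$ for Claim 3, using $\alpha_t\leq 0$ to bound both the amplification factor and the threshold by $\frac{d+1}{d-1}$. All the $(1\pm\alpha_t)$ bookkeeping you describe checks out.
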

\begin{proof}
From Lemma \ref{lm:shrink_e1e2}, we know that the eigenvalues of $P_{t+1}$ is $\{\sigma'_1, \sigma'_2, \frac{d^2(1-\alpha_t^2)}{d^2-1}\sigma_3^{(t)}, \cdots, \frac{d^2(1-\alpha_t^2)}{d^2-1}\sigma_d^{(t)}\}$, where $\sigma'_1\geq\sigma'_2$ and
\begin{equation}\label{eq:sigma2_condition1}
    \frac{d^2(1-\alpha_t)^2}{(d+1)^2}\sigma_2^{(t)} \leq \sigma'_2 \leq \frac{d^2(1-\alpha_t^2)}{d^2-1}\sigma_2^{(t)}
\end{equation} 

\noindent {\bf Claim 1.}
Because $\sigma'_1\geq \sigma'_2$, $\sigma_3^{(t)}\geq \cdots \geq \sigma_d^{(t)}$, and note that $\sigma_2^{(t+1)}$ and $\sigma_d^{(t+1)}$ are the second-largest element and the smallest element of $\{\sigma'_1, \sigma'_2, \frac{d^2(1-\alpha_t^2)}{d^2-1}\sigma_3^{(t)}, \cdots, \frac{d^2(1-\alpha_t^2)}{d^2-1}\sigma_d^{(t)}\}$, the value of $(\sigma_2^{(t+1)}, \sigma_d^{(t+1)})$ must satisfy one of the following situation:
\begin{enumerate}
    \item if $(\sigma_2^{(t+1)}, \sigma_d^{(t+1)})=(\sigma'_2,\frac{d^2(1-\alpha_t^2)}{d^2-1}\sigma_{d}^{(t)})$, from Eq \eqref{eq:sigma2_condition1} we have 
    \begin{equation}\label{eq:D_t_condition11}
        \frac{D_{t+1}}{D_t}=\frac{d^2-1}{d^2(1-\alpha_t^2)}\cdot\frac{\sigma'_2}{\sigma_2^{(t)}} \leq 1.
    \end{equation}

    \item if $(\sigma_2^{(t+1)}, \sigma_d^{(t+1)})=(\frac{d^2(1-\alpha_t^2)}{d^2-1}\sigma_{i}^{(t)},\frac{d^2(1-\alpha_t^2)}{d^2-1}\sigma_{d}^{(t)})$ for some $3\leq i \leq d-1$, we have 
    \begin{equation}\label{eq:D_t_condition21}
        \frac{D_{t+1}}{D_t}=\frac{\sigma_i^{(t)}/\sigma_d^{(t)}}{\sigma_2^{(t)} / \sigma_d^{(t)}} \leq 1 .
    \end{equation}
    
    \item if $(\sigma_2^{(t+1)}, \sigma_d^{(t+1)})=(\frac{d^2(1-\alpha_t^2)}{d^2-1}\sigma_{i}^{(t)},\sigma'_2)$ for some $3\leq i \leq d-1$, from Eq \eqref{eq:sigma2_condition1} we have
    \begin{equation}\label{eq:D_t_condition31}
    \frac{D_{t+1}}{D_t}=\frac{d^2(1-\alpha_t^2)}{d^2-1}\cdot\frac{\sigma_i^{(t)}}{\sigma'_2}\cdot\frac{\sigma_d^{(t)}}{\sigma_2^{(t)}} \leq \frac{d^2(1-\alpha_t^2)}{d^2-1}\cdot \frac{\sigma_2^{(t)}}{\sigma'_2} \leq \frac{d^2(1-\alpha_t^2)}{d^2-1}\cdot \frac{(d+1)^2}{d^2(1-\alpha_t)^2}=\frac{(d+1)(1+\alpha_t)}{(d-1)(1-\alpha_t)}.    
    \end{equation}
    
\end{enumerate}
By Eq \eqref{eq:D_t_condition11}, \eqref{eq:D_t_condition21}, \eqref{eq:D_t_condition31}, the first claim holds.

\noindent {\bf Claim 2.} It suffices to show that the situation (3) cannot happen when $D_t>\frac{d+1}{d-1}$. In fact, when $D_t>\frac{d+1}{d-1}$, from Eq \eqref{eq:sigma2_condition1} we have 
$$
    \sigma'_2 \geq \frac{d^2(1-\alpha_t)^2}{(d+1)^2}\sigma_2^{(t)} = \frac{d^2(1-\alpha_t)^2}{(d+1)^2}\sigma_d^{(t)} D_t >  \frac{d^2(1-\alpha_t)^2}{(d+1)^2}\cdot\frac{(d+1)(1+\alpha_t)}{(d-1)(1-\alpha_t)}\cdot \sigma_d^{(t)} = \frac{d^2(1-\alpha_t^2)}{d^2-1}\sigma_d^{(t)},
$$
meaning $\sigma'_2$ cannot be the smallest eigenvalue of $P_{t+1}$. As a result, the second claim holds by Eq \eqref{eq:D_t_condition11}, \eqref{eq:D_t_condition21}.

\noindent {\bf Claim 3.} We prove Eq \eqref{eq:maxDt1} by contradiction. Let $n_0$ be the smallest index in set $\arg\max_{0\leq t\leq n} D_t$. If $n_0=0$, we have $\max_{0\leq t\leq n} D_t=D_0=1<\Big(\frac{d+1}{d-1}\Big)^2$. Now consider the case $n_0\geq 1$ and suppose $D_{n_0} > \Big(\frac{d+1}{d-1}\Big)^2.$ By Claim 1 and the fact that $-\frac{1}{2d}\leq \alpha_{n_0-1} \leq 0$, we have
$D_{n_0-1}\geq \frac{(d-1)(1-\alpha_{n_0-1})}{(d+1)(1+\alpha_{n_0-1})}D_{n_0} > \frac{(d+1)(1+\alpha_{n_0-1})}{(d-1)(1-\alpha_{n_0-1})}$. Apply Claim 2 to $D_{n_0-1}$, we obtain $D_{n_0} \leq D_{n_0-1}$, which contradicts the definition of $n_0$. Hence, Claim 3 holds. 

\end{proof}

\begin{lemma}
With the choice of $\alpha_t$ given in Eq \eqref{eq:alpha_t}, we conclude that
\begin{enumerate}
    \item After each cut step,  $\text{Vol}(\E_{t+1})\leq \exp{\big(-\frac{(k-1)^2}{2k^2d}\big)}\text{Vol}(\E_{t}).$
    \item If at least $d$ exploration steps are taken during $t_0\leq t< t_0+n$, we have $\lambda_{\min}(V_{n+t_0})\geq \lambda_{\min}(V_{t_0})+\frac{4D_0}{25}-3\epsilon_0$.
    \item At any exploitation step $t$, the instantaneous regret is upper bounded by $2L\|\theta_*-\u_1^{(t)}\|_2^2$.
\end{enumerate}
\end{lemma}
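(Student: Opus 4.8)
The three conclusions correspond exactly to the three branches of RAES, so I would establish them independently and in increasing order of difficulty.

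\textbf{Claim 1 (cut step).} This follows almost immediately from the volume-ratio estimate \eqref{eq:vol_ratio1}. A cut is performed only when the cut condition holds, and since the numerator of \eqref{eq:alpha_t} is a sum of nonnegative terms we always have $\alpha_t\le 0$; together with the threshold this pins down $-\tfrac{1}{kd}\le\alpha_t\le 0$ (exactly the hypothesis of Lemma~\ref{lm:shrink_e1e2_}). Hence $1+d\alpha_t\in[\,1-\tfrac1k,\,1\,]$, so $(1+d\alpha_t)^2\ge\big(\tfrac{k-1}{k}\big)^2$, and plugging into \eqref{eq:vol_ratio1} (valid precisely for $\alpha_t\in(-\tfrac1d,1)$) gives
$$\frac{\text{Vol}(\E_{t+1})}{\text{Vol}(\E_t)}\le\exp\Big(-\frac{(1+d\alpha_t)^2}{2d}\Big)\le\exp\Big(-\frac{(k-1)^2}{2k^2d}\Big).$$
No further work is needed.

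\textbf{Claim 3 (exploitation step).} Here the system plays $(\a_t,\a_t)$ with $\a_t=\arg\max_{\a\in\A}(\u_1^{(t)})^\top\a$, so by \eqref{eq:strong_reg} the instantaneous regret equals $2\,\theta_*^\top(\a_*-\a_t)$, where $\a_*=\arg\max_{\a\in\A}\theta_*^\top\a$. Writing $\theta=\theta_*$ and $\hat\theta=\u_1^{(t)}$ (both unit vectors), I would use the two optimality inequalities $\theta^\top\a_*\ge\theta^\top\a_t$ and $\hat\theta^\top\a_t\ge\hat\theta^\top\a_*$: adding the nonnegative quantity $\hat\theta^\top(\a_t-\a_*)$ to $\theta^\top(\a_*-\a_t)$ yields
$$\theta^\top(\a_*-\a_t)\le(\theta-\hat\theta)^\top(\a_*-\a_t)\le\|\theta-\hat\theta\|_2\,\|\a_*-\a_t\|_2$$
by Cauchy--Schwarz. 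The $L$-SRC condition (Assumption~\ref{ass:SDRC}) then controls the arm displacement by the parameter displacement, $\|\a_*-\a_t\|_2\le L\|\theta-\hat\theta\|_2$, giving the claimed bound $2L\|\theta_*-\u_1^{(t)}\|_2^2$.

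\textbf{Claim 2 (exploration step).} I would first reduce this to a purely spectral statement. Since every round updates $V_{t+1}=V_t+\a_t\a_t^\top$ by a PSD rank-one term, Weyl's inequality lets me discard the non-exploration contributions and keep only the $d$ exploration steps $\tau_1<\cdots<\tau_d$:
$$\lambda_{\min}(V_{t_0+n})\ge\lambda_{\min}(V_{t_0})+\lambda_{\min}\Big(\textstyle\sum_{j=1}^{d}\a_{\tau_j}\a_{\tau_j}^\top\Big).$$
It thus suffices to lower bound $\lambda_{\min}$ of the exploration Gram sum. By construction $\a_{\tau_j}$ is the $\epsilon_0$-net approximation of $D_0(\tfrac45\v_1\pm\tfrac35\v_d)$ for the current top/bottom eigenvectors of $V_{\tau_j}$, so — regardless of the user's sign choice — $\a_{\tau_j}$ has squared projection at least $(\tfrac35D_0-\epsilon_0)^2$ onto the current weakest direction. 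The structural idea is that once a round boosts the weakest direction, that direction ceases to be weakest, so across $d$ rounds the boosted directions must span $\RR^d$ and force $\lambda_{\min}$ up by a constant multiple of $D_0^2$. I would make this rigorous by a counting argument: fix the threshold $h=\lambda_{\min}(V_{t_0})+\tfrac{4}{25}D_0^2$ (up to an $O(\epsilon_0)$ slack) and track the number of eigenvalues below $h$, showing via eigenvalue interlacing that each exploration round removes at least one sub-threshold eigenvalue, so that after $d$ rounds none remain.

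The delicate point — and the main obstacle — is precisely the per-round decrement. A single round need not raise $\lambda_{\min}$ by anything close to $(\tfrac35D_0)^2$: when the bottom eigenvalues of $V_{\tau_j}$ are clustered, the injected mass leaks into the coupling between nearby eigenvalues (the secular equation for a rank-one update shows the smallest root advances only as far as the neighboring eigenvalue allows), and the $\tfrac45D_0\v_1$ component, included solely to keep the two recommended items distinct (cf.\ the algorithm's footnote), further dilutes the gain. Carefully accounting for both effects — degenerate clusters and the wasted $\v_1$ component — over all $d$ rounds is what degrades the naive per-direction boost $\tfrac{9}{25}D_0^2$ to the stated $\tfrac{4}{25}D_0^2$, and controlling the accumulated $\epsilon_0$ perturbations to collapse into the single $3\epsilon_0$ term is the remaining bookkeeping I expect to be the most technical part of the proof.
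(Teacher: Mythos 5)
Your arguments for Claims 1 and 3 are correct and essentially identical to the paper's. One remark on Claim 1: the paper spends most of that part of the proof justifying that the half-space retained by the cut actually contains $\theta_*$ with high probability --- i.e., deriving $\g_t^{\top}(\theta_*-\x_t)\leq ct^{\gamma}\big(\|\a_{0,t}\|_{V_t^{-1}}+\|\a_{1,t}\|_{V_t^{-1}}+g(\delta)\|\a_{0,t}-\a_{1,t}\|_{V_t^{-1}}\big)+2\epsilon_0$ from the decision rule \eqref{eq:rule_decision}, the estimation rule \eqref{eq:rule_estimation}, and the $\epsilon_0$-DC approximation error. The literal volume inequality does follow from \eqref{eq:vol_ratio1} exactly as you say, but without that containment argument the ``choice of $\alpha_t$'' is unmotivated and the cut is not known to be valid; you should at least record it.

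The genuine gap is in Claim 2. Your skeleton (monotonicity of $\lambda_{\min}$ under PSD rank-one updates, plus a counting/interlacing argument showing each exploration round removes at least one sub-threshold eigenvalue) matches the paper's strategy, but you explicitly defer the one step that carries all the quantitative content: the per-round statement that a rank-one update by $\v=p\u_1+q\u_d+\bm{\epsilon}$ (with $\u_1,\u_d$ the extreme eigenvectors of the current matrix) either raises the smallest eigenvalue to at least $\sigma_1+p^2-|pq|-3\epsilon$, or promotes it to $\sigma_2$ while creating some eigenvalue above that threshold. The paper proves this (its Lemma \ref{lm:weyl}) by writing the secular equation for the rank-one update, observing that it factors into $\prod_{j\neq 1,d}(\sigma_j-\lambda)$ times an explicit quadratic in $\lambda$, and lower-bounding the quadratic's smaller root by $\sigma_1+p^2-|pq|$ via $\sqrt{a^2+b^2}\leq|a|+|b|$; the perturbation $\bm{\epsilon}$ then costs at most $2\epsilon+\epsilon^2<3\epsilon$ by Weyl's perturbation inequality. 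With $(p,q)=(\tfrac45,\tfrac35)$ this is exactly where $\tfrac{16}{25}-\tfrac{12}{25}=\tfrac{4}{25}$ and the single $3\epsilon_0$ come from --- you correctly identify the clustering/dilution obstacle but do not resolve it, so your proposal does not yet establish the stated constant. A secondary issue: your opening reduction to $\lambda_{\min}\big(\sum_j\a_{\tau_j}\a_{\tau_j}^{\top}\big)$ over the exploration steps alone is not the right object, because the directions $\v_1,\v_d$ at step $\tau_j$ are eigenvectors of the full $V_{\tau_j}$ (including non-exploration contributions), so the ``weakest direction'' of the exploration-only Gram sum need not coincide with the direction actually boosted; the counting argument must be run on the full sequence $V_t$, using Weyl only to guarantee the non-exploration steps never decrease any eigenvalue.
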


\begin{proof}
\textbf{First Claim:} We first justify our choice of $\alpha_t$. With out loss of generality, assume $\a_{1,t}$ is preferred over $\a_{0,t}$, then according to the user's decision rule \eqref{eq:rule_decision} we have
\begin{equation}\label{eq:328}
    \theta_t^{\top}(\a_{0,t}-\a_{1,t}) \leq |\beta_t|\cdot (\|\a_{0,t}\|_{V_t^{-1}}+\|\a_{1,t}\|_{V_t^{-1}})  
     \leq c_2t^{\gamma_2}\cdot(\|\a_{0,t}\|_{V_t^{-1}}+\|\a_{1,t}\|_{V_t^{-1}}).
\end{equation}
Next we translate Eq \eqref{eq:328} into the estimation with respect to $\theta_*$. According to the Estimation rule \eqref{eq:rule_estimation}, with probability $1-\delta$,
\begin{align*}
    (\theta_*-\theta_t)^{\top}(\a_{0,t}-\a_{1,t}) & \leq \|\theta_*-\theta_t\|_{V_t} \cdot \|\a_{0,t}-\a_{1,t}\|_{V_t^{-1}} \\
    & \leq c_1g(\delta)t^{\gamma_1}\|\a_{0,t}-\a_{1,t}\|_{V_t^{-1}},
\end{align*}
and therefore according to the $(c,\gamma)-$rational assumption, we obtain

\begin{align}\notag
    \theta_*^{\top}&(\a_{0,t}-\a_{1,t}) \leq \theta_t^{\top}(\a_{0,t}-\a_{1,t}) + (\theta_*-\theta_t)^{\top}(\a_{0,t}-\a_{1,t}) \\  \notag
    & \leq c_2t^{\gamma_2}\cdot(\|\a_{0,t}\|_{V_t^{-1}}+\|\a_{1,t}\|_{V_t^{-1}})+c_1g(\delta)t^{\gamma_1}\|\a_{0,t}-\a_{1,t}\|_{V_t^{-1}} \\ \label{eq:305}
    & \leq ct^{\gamma}\Big(\|\a_{0,t}\|_{V_t^{-1}}+\|\a_{1,t}\|_{V_t^{-1}}+g(\delta) \cdot\|\a_{0,t}-\a_{1,t}\|_{V_t^{-1}}\Big).
\end{align}

According to $\epsilon_0$-DC and the definition of $\g_t$, we have 

\begin{equation}\label{eq:332}
    \|\g_t-(\a_{0,t}-\a_{1,t})\|_2 \leq \|\a_{0,t}-\bar{\a}_{0,t}\|_2+\|\a_{1,t}-\bar{\a}_{1,t}\|_2 \leq 2\epsilon_0.
\end{equation}
Using Eq \eqref{eq:332}, we may relax Eq \eqref{eq:305} by replacing $\a_{0,t}-\a_{1,t}$ with $\g_t=\bar{\a}_{0,t}-\bar{\a}_{1,t}$, accounting for the error introduced by the inaccuracy of the exploration direction as below:

\begin{align}\notag
      &\g_t^{\top}(\theta_*-\x_t)=\g_t^{\top}\theta_* \\\notag
      =&(\a_{0,t}-\a_{1,t})^{\top}\theta_* + (\g_t-(\a_{0,t}-\a_{1,t}))^{\top}\theta_*\\ \notag
    \leq &ct^{\gamma}\Big(\|\a_{0,t}\|_{V_t^{-1}}+\|\a_{1,t}\|_{V_t^{-1}}+g(\delta) \cdot\|\a_{0,t}-\a_{1,t}\|_{V_t^{-1}}\Big) +\|\g_t-a_{0,t}+a_{1,t}\|_2\cdot\|\theta_*\|_2 \\ 
     \leq& ct^{\gamma}\Big(\|\a_{0,t}\|_{V_t^{-1}}+\|\a_{1,t}\|_{V_t^{-1}}+g(\delta) \cdot\|\a_{0,t}-\a_{1,t}\|_{V_t^{-1}}\Big) +2\epsilon_0, \label{eq:350}
\end{align}
where Eq \eqref{eq:350} holds because we assume $\|\theta_*\|_2=1$. Hence, by equation \eqref{eq:cutting_depth}, the cutting depth 
\begin{align}\label{eq:cutting_depth_t}
  \alpha_t & = - \frac{ ct^{\gamma}\Big(\|\a_{0,t}\|_{V_t^{-1}}+\|\a_{1,t}\|_{V_t^{-1}}+g(\delta) \cdot\|\a_{0,t}-\a_{1,t}\|_{V_t^{-1}}\Big)+2\epsilon_0}{\|\g_t\|_{P_t}}   .
\end{align}

Therefore, we may leverage Eq \eqref{eq:cutting_depth_t} to evaluate the cutting depth $\alpha_t$ and perform a cut whenever $\alpha_t\geq -\frac{1}{kd}>-\frac{1}{d}$ is satisfied. From Eq \eqref{eq:vol_ratio1}, we therefore conclude $\text{Vol}(\E_{t+1})\leq \exp{\big(-\frac{(k-1)^2}{2k^2d}\big)}\text{Vol}(\E_{t})$.

\textbf{Second Claim:} To prove the second claim, we need the following auxiliary lemma:

\begin{lemma}\label{lm:weyl}
$A$ is a $d\times d$ PSD matrix with eigendecomposition $A=U\text{diag}(\sigma_1,\cdots,\sigma_d)U^T$, where $\sigma_1\leq \cdots\leq\sigma_d$ and $ U=[\u_1, \cdots,\u_d]$. For any $\v\in\mathbb{R}^d$, let the eigenvalues of $A+\v\v^T$ be $\sigma'_1\leq \cdots\leq\sigma'_d$. Then we have
\begin{enumerate}
    \item $\sigma_1\leq \sigma'_1\leq \sigma_2\leq \sigma'_2\leq \cdots\leq \sigma_d\leq \sigma'_d\leq \sigma_d+\v^T\v.$
    \item if $\v=p\u_1+q \u_d+\bm{\epsilon}$ for some $p^2+q^2=1, \|\bm{\epsilon}\|_2=\epsilon<1$, $\{\sigma_i\}_{i=1}^d$ and $\{\sigma'_i\}_{i=1}^d$ have at least $d-2$ common values. Furthermore, conditioned on $\sigma_d>\sigma_1+p^2-q^2$, at least one of the following claims is true:\\
    a) $\sigma'_1 \geq \sigma_1+p^2-|pq|-3\epsilon.$\\
    b) $\sigma'_1 = \sigma_2$, and $\sigma'_i \geq \sigma_1+p^2-|pq|-3\epsilon$ for some $2\leq i\leq d$.

\end{enumerate}

\end{lemma}
\begin{proof}
The first claim is a direct corollary of Weyl's inequality in matrix theory \cite{fan1949theorem,bunch1978rank}. Now we prove the second claim for the special case $\bm{\epsilon}=\bm{0}$. From Secular Equations, we know that $\sigma'_1$ is the smallest root of the following equation
\begin{align*}
    f(\lambda)&=\prod_{i=1}^d(\sigma_i-\lambda)+p^2\prod_{j\neq 1}^d(\sigma_j-\lambda)+q^2\prod_{j\neq d}^d(\sigma_j-\lambda) \\ 
    &=\Big[(\sigma_1-\lambda)(\sigma_d-\lambda)+p^2(\sigma_d-\lambda)+q^2(\sigma_1-\lambda)\Big]\prod_{j\neq 1,d}^d(\sigma_j-\lambda)\\
    &=\Big[\lambda^2-(1+\sigma_1+\sigma_d)\lambda+q^2\sigma_1+p^2\sigma_d+\sigma_1\sigma_d\Big]\prod_{j\neq 1,d}^d(\sigma_j-\lambda).
\end{align*}
Therefore, $\sigma'_1$ is the smaller one between $\sigma_2$ and the smallest root of the  quadratic equation $\lambda^2-(1+\sigma_1+\sigma_d)+q^2\sigma_1+p^2\sigma_d+\sigma_1\sigma_d=0$, i.e., 
\begin{equation}\label{eq:500}
    \sigma'_1 = \min\{\sigma_2, \frac{1+\sigma_1+\sigma_d-\sqrt{(1+\sigma_1+\sigma_d)^2-4(q^2\sigma_1+p^2\sigma_d+\sigma_1\sigma_d)}}{2}\}.
\end{equation} Note that when $\sigma_d>\sigma_1+p^2-q^2$, we have
\begin{align} \notag
    & \frac{1+\sigma_1+\sigma_d-\sqrt{(1+\sigma_1+\sigma_d)^2-4(q^2\sigma_1+p^2\sigma_d+\sigma_1\sigma_d)}}{2} \\ \notag
    = & \frac{1+\sigma_1+\sigma_d-\sqrt{(p^2-q^2+\sigma_1-\sigma_d)^2+4p^2q^2}}{2} \\ \label{eq:502}
    \geq & \frac{1}{2}(1+\sigma_1+\sigma_d-|p^2-q^2+\sigma_1-\sigma_d|-2|pq|) \\ \label{eq:506}
    = & \sigma_1+p^2-|pq|,
\end{align}
where Eq \eqref{eq:502} holds because $\sqrt{a^2+b^2}\leq|a|+|b|$. From Eq \eqref{eq:500} and Eq \eqref{eq:506} we conclude the proof.

Next it remains to show that with a small perturbation $\bm{\epsilon}$ on $\v$, the change of the smallest eigenvalue will only deviate at most $3\epsilon$. From Weyl's eigenvalue perturbation inequality, for any Hermitian matrices $M, \Delta$, we have $|\lambda_k(M+\Delta)-\lambda_k(M)|\leq \|\Delta\|_2$, where $\lambda_k(\cdot)$ denotes the $k-$th largest eigenvalue of a given matrix. Using this tool, we can upper bound the difference between the smallest eigenvalues of matrix $A+\v\v^{\top}$ and $A+(\v+\bm{\epsilon})(\v+\bm{\epsilon})^{\top}$ as below:

\begin{align}\notag
   & \lambda_1(A+(\v+\bm{\epsilon})(\v+\bm{\epsilon})^{\top})-\lambda_1(A+\v\v^{\top}) \\ \notag
   \leq & \|\bm{\epsilon}\v^{\top} + \v \bm{\epsilon}^{\top} + \bm{\epsilon} \bm{\epsilon}^{\top}\|_2 
   \leq \|\bm{\epsilon}\v^{\top} + \v \bm{\epsilon}^{\top}\|_2 +\| \bm{\epsilon} \bm{\epsilon}^{\top}\|_2 \\ \label{eq:367} \leq & 2\epsilon+\epsilon^2<3\epsilon,
\end{align}
where Eq \eqref{eq:367} holds because for any $\|\x\|_2=1$, $\x^{\top} (\bm{\epsilon}\v^{\top} + \v \bm{\epsilon}^{\top})\x\leq 2\|\bm{\epsilon}\|_2$ and $\x^{\top} (\bm{\epsilon}\bm{\epsilon}^{\top})\x\leq \|\bm{\epsilon}\|_2^2$.
\end{proof}

Now we are ready to prove the second claim. Without loss of generality, we consider the case $D_0=1$. Suppose Algorithm \ref{alg:uf2} had executed $d$ exploration steps from $t=t_0$ to $t=t_0+n$. By the first claim of Lemma \ref{lm:weyl}, we know $\{\sigma_1^{(\tau)}\}_{\tau=1}^t$ is always non-decreasing. Therefore, it suffices to prove that after $d$ consecutive exploration steps, $\sigma_1^{(t_0+d)} \geq \sigma_1^{(t_0)} +p^2-|pq|-3\epsilon_0$.

From the second claim in Lemma \ref{lm:weyl}:
\begin{enumerate}
    \item if situation $a)$ happens at least once during the $d$ exploration steps, we already obtain $\sigma_1^{(t_0+d)} \geq \sigma_1^{(t_0)} +p^2-|pq|-3\epsilon_0$.
    \item if we always observe situation $b)$, consider the set $C_t = \{i:\sigma_i^{(t_0+t)}<\sigma_1^{(t_0)}+p^2-|pq|-3\epsilon_0\}$. From Lemma \ref{lm:weyl}, we can prove $|C_{t+1}|\leq |C_{t}|-1$. Since $\sigma_d^{(t_0)}>\sigma_1^{(t_0)}+p^2-|pq|-3\epsilon_0$, we have $|C_1|\leq d-1$. Therefore, there must exists $1\leq k\leq d$ such that $|C_k|=0$, meaning $\sigma_1^{(t_0+d)}\geq \sigma_1^{(k)}\geq \sigma_1^{(t_0)}+p^2-|pq|-3\epsilon_0$.
\end{enumerate}

By taking $(p, q)=(\frac{4}{5}, \frac{3}{5})$, we obtain the desirable result.

\textbf{Thrid Claim:} Given $\|\theta_*\|_2=1$, denote $\hat{\theta}=\u_1^{(t)}$ and $\|\theta_*-\hat{\theta}\|_2=\epsilon$. Let $x_*=\arg\max_{x\in \cA}x^T\theta_*$ and $\hat{x}=\arg\max_{x\in \cA}x^T\hat{\theta}$. We have 

\begin{align*}
    \theta_*^T(x_*-\hat{x}) &= (\theta_*-\hat{\theta})^Tx_* + (x_*-\hat{x})^T\hat{\theta}+(\hat{\theta}-\theta_*)^T\hat{x} \\ 
    &\leq  (\theta_*-\hat{\theta})^Tx_* +(\hat{\theta}-\theta_*)^T\hat{x} & \mbox{by definition of $\hat{x}$}\\ 
    &= (\hat{\theta}-\theta_*)^T(\hat{x}-x_*) \\ 
    &\leq \|\hat{\theta}-\theta_*\|_2\cdot \|\hat{x}-x_*\|_2 & \mbox{by Cauchy-Schwarz} \\ 
    &\leq L\cdot \|\hat{\theta}-\theta_*\|_2^2.  & \mbox{by L-SRC}\\
\end{align*}
As a result, the instantaneous regret is upper bounded by $2L \|\u_1^{(t)}-\theta_*\|_2^2$.
\end{proof}

Now we are ready to analyze the regret of Algorithm \ref{alg:uf2}:
\begin{theorem}
For any $d> 1, n>0$, let $\sigma_i^{(n)}$ be the $i$-th largest eigenvalue of $P_n$ after the $n$-th cut, we have 
\begin{enumerate}
    \item For any $2\leq i\leq d$, 
    \begin{equation}\label{eq:408}
        \sigma_i^{(n)}\leq \exp{\Big(\frac{4}{d}-\frac{(k-1)^2n}{k^2d^2}\Big)}.
    \end{equation}
    \item When $T_0=O\Big(cL^{\frac{1}{2}}D_1^{\frac{1}{2}}D_0^{-\frac{3}{2}}g(\delta)d^2T^{\frac{1}{2}+\gamma}\Big)$ and $\epsilon_0<O\big(cD_1D_0^{-\frac{1}{2}}d^{-\frac{1}{2}}T^{-\frac{1}{4}+\frac{\gamma}{2}}\big)$, the regret of RAES is upper bounded by $O\Big(cL^{\frac{1}{2}}D_1^{\frac{3}{2}}D_0^{-\frac{3}{2}}g(\frac{\delta}{T_0})d^2T^{\frac{1}{2}+\gamma}\Big)$ with probability $1-\delta$.
    
\end{enumerate}
\end{theorem}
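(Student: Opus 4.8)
The plan is to obtain the first claim by mirroring the warm-up analysis of Theorem \ref{thm:converge_alg1}, and to obtain the regret bound by a three-way decomposition of the rounds. For Claim~1, note that Lemmas \ref{lm:shrink_e1e2_} and \ref{lm:upperbound_sigma2-d_} are the exact analogues of Lemmas \ref{lm:shrink_e1e2} and \ref{lm:upperbound_sigma2-d}, so the only substantive change from the warm-up is the per-cut volume factor. Since every executed cut has depth $\alpha_t\in[-\tfrac{1}{kd},0]$, so that $1+d\alpha_t\geq \tfrac{k-1}{k}$, Eq.~\eqref{eq:vol_ratio1} (Claim~1 of Lemma \ref{lm:cut_explore}) gives $\text{Vol}(\E_{t+1})/\text{Vol}(\E_t)\leq \exp(-\tfrac{(k-1)^2}{2k^2 d})$, hence after $n$ cuts $\prod_{i=1}^d \sigma_i^{(n)}=\det P_n\leq \exp(-\tfrac{(k-1)^2 n}{k^2 d})$. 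Combining this with $\sigma_2^{(n)}/\sigma_d^{(n)}\leq(\tfrac{d+1}{d-1})^2$ from Lemma \ref{lm:upperbound_sigma2-d_} exactly as in the proof of Theorem \ref{thm:converge_alg1} yields $(\sigma_2^{(n)})^d\leq e^4\exp(-\tfrac{(k-1)^2 n}{k^2 d})$, which is Eq.~\eqref{eq:408}; the identical argument using $\bm 0,\theta_*\in\E_n$ gives $\|\theta_*-\u_1^{(n)}\|_2\leq 2\sqrt{d-1}\exp(\tfrac2d-\tfrac{(k-1)^2 n}{2k^2 d^2})$.

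For Claim~2, I would split the regret into contributions from cut, exploration, and exploitation steps. By Claim~3 of Lemma \ref{lm:cut_explore} each exploitation step costs at most $2L\|\theta_*-\u_1^{(t)}\|_2^2$, so the total exploitation regret is at most $2LT\cdot 4(d-1)\sigma_2^{(N)}$, where $N$ is the number of cuts completed within the first $T_0$ rounds. Each of the at most $T_0$ first-phase rounds costs instantaneous regret $O(D_1)$ (the diameter of $\A$), so the first-phase regret is $O(D_1 T_0)$. Choosing $T_0=\Theta(cL^{1/2}D_1^{1/2}D_0^{-3/2}g(\delta)d^2 T^{1/2+\gamma})$ makes this first term exactly of the claimed order; it then remains to show that with this budget enough cuts occur to drive $\sigma_2^{(N)}$ small enough that $2LT\cdot 4(d-1)\sigma_2^{(N)}$ also matches the target order.

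The crux is a counting argument interleaving cuts and explorations. By Eq.~\eqref{eq:alpha_t}, a cut is triggered once $ct^\gamma(\|\a_{0,t}\|_{V_t^{-1}}+\|\a_{1,t}\|_{V_t^{-1}}+g(\delta)\|\a_{0,t}-\a_{1,t}\|_{V_t^{-1}})+2\epsilon_0\leq \|\g_t\|_{P_t}/(kd)$. Since $\g_t\in\text{span}\{\u_1^{(t)},\u_2^{(t)}\}$ is a unit vector and $\sigma_1^{(t)}\geq\sigma_2^{(t)}$, one has the lower bound $\|\g_t\|_{P_t}\geq\sqrt{\sigma_2^{(t)}}$, while $\|\a_{i,t}\|_{V_t^{-1}}\leq D_1/\sqrt{\lambda_{\min}(V_t)}$; hence a cut is guaranteed once $\lambda_{\min}(V_t)=\Omega\big((ct^\gamma D_1 g(\delta)kd)^2/\sigma_2^{(t)}\big)$ and $\epsilon_0$ is small enough that $2\epsilon_0\leq\tfrac12\sqrt{\sigma_2^{(N)}}/(kd)$, the latter pinning down the stated upper bound $\epsilon_0<O(cD_1 D_0^{-1/2}d^{-1/2}T^{-1/4+\gamma/2})$. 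Between consecutive cuts only explorations occur, and Claim~2 of Lemma \ref{lm:cut_explore} shows every $d$ exploration steps raise $\lambda_{\min}(V_t)$ by $\Omega(D_0)$, so after $m$ explorations $\lambda_{\min}(V_t)=\Omega(\lambda_0+mD_0/d)$. Feeding this growth into the cut condition and summing the explorations needed to reach each of the $N=\Theta(d^2\log T)$ cuts required by Claim~1 (so that $\sigma_2^{(N)}$ is of order $T^{-1/2+\gamma}$ up to the stated factors), I would verify the cumulative exploration stays within $T_0$, so all $N$ cuts indeed finish by round $T_0$. A final union bound over the at most $T_0$ rounds on which the confidence inequality \eqref{eq:rule_estimation} must hold replaces $g(\delta)$ by $g(\delta/T_0)$ in the high-probability guarantee.

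The main obstacle is precisely this last counting step. Because the cut condition demands ever greater precision as $\sigma_2^{(t)}$ shrinks (the required $\lambda_{\min}$ threshold scales like $1/\sigma_2^{(t)}$), while $t^\gamma$ grows simultaneously, one must confirm that the nearly linear growth of $\lambda_{\min}(V_t)$ from exploration outpaces both effects and that the total number of exploration rounds telescopes to $O(T_0)$ without overflowing the budget. Making the exponents of $c,L,D_0,D_1,d$ and the power of $T$ line up cleanly between the first-phase and exploitation terms, and thereby fixing the prescribed $T_0$ and $\epsilon_0$, is where the delicate constant bookkeeping resides.
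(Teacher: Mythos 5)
Your Claim 1 argument is correct and is exactly the paper's: every executed cut has $\alpha_t\in[-\tfrac{1}{kd},0]$, so Eq.~\eqref{eq:vol_ratio1} gives the per-cut volume factor $\exp(-\tfrac{(k-1)^2}{2k^2d})$, and combining $\det P_n\leq\exp(-\tfrac{(k-1)^2n}{k^2d})$ with the eigenvalue-gap bound $\sigma_2^{(n)}/\sigma_d^{(n)}\leq(\tfrac{d+1}{d-1})^2$ of Lemma~\ref{lm:upperbound_sigma2-d_} yields Eq.~\eqref{eq:408} verbatim. The skeleton of your Claim 2 --- first-phase regret $O(D_1T_0)$, exploitation regret $8L(d-1)\sigma_2 T$ via Claim~3 of Lemma~\ref{lm:cut_explore}, balancing over $T_0$, and a union bound replacing $g(\delta)$ by $g(\delta/T_0)$ --- also matches the paper.

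The gap is precisely the step you flag as the obstacle: establishing that $\sigma_2^{(t)}$ is small by the end of phase one. Your plan is a forward accounting --- compute how many exploration rounds are needed to trigger each successive cut and sum over the $N$ cuts --- which requires controlling the interleaving of cuts and explorations while the threshold $\lambda_{\min}(V_t)=\Omega(t^{2\gamma}/\sigma_2^{(t)})$ escalates as $\sigma_2^{(t)}$ shrinks; you do not carry this out, and it is delicate (targeting $\sigma_2^{(N)}\approx T^{-1/2+\gamma}$ rather than the correct $\Theta(dT^{-1/2+\gamma})$ makes the required exploration budget overflow $T_0$ by a factor of $d$). The paper sidesteps all of this with a pigeonhole dichotomy on the number $N_0$ of cuts in the first $T_0/2$ rounds: either $N_0\geq\Theta(d^2\log T_0)$, in which case Claim~1 already forces $\sigma_2\leq 1/T_0$; or $N_0$ is smaller, in which case at least $T_0/3$ of those rounds were explorations, so Claim~2 of Lemma~\ref{lm:cut_explore} gives $\lambda_{\min}(V_{T_0})\geq\beta T_0/d$ with $\beta=\Theta(D_0)$ unconditionally, and then the mere fact that no cut was available after the last executed cut (i.e.\ $\alpha_t<-\tfrac{1}{kd}$) immediately yields $\sqrt{\sigma_2^{(t)}}\leq O\big(ckd^{1.5}D_1T_0^{\gamma}(1+g(\delta))/(D_0\sqrt{\beta T_0})\big)$ --- no certification that any particular cut was actually reached is ever needed. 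You should either adopt this dichotomy or fully execute your counting argument with the corrected target for $\sigma_2^{(N)}$; as written, the crux of Claim~2 is asserted rather than proved.
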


\begin{proof}

Since the depth of the cut $\alpha_t \geq -\frac{1}{kd}$ through out the execution of Algorithm \ref{alg:uf2}, from Lemma \ref{lm:cut_explore} and Eq \eqref{eq:vol_ratio1} we have
\begin{equation}
    \prod_{i=1}^d \sigma_i^{(n)}=\prod_{i=0}^{n-1} \frac{\det{P_{i+1}}}{\det{P_i}} \leq \prod_{i=0}^{n-1} \frac{\text{Vol}(\E_{i+1})}{\text{Vol}(\E_i)} = \exp\Big(-\frac{(k-1)^2n}{k^2d}\Big).
\end{equation}
From Lemma \ref{lm:upperbound_sigma2-d}, we have $\sigma_i^{(n)}\geq \sigma_d^{(n)}\geq \big(\frac{d-1}{d+1}\big)^2\cdot \sigma_2^{(n)}, \forall 3\leq i\leq d.$ Therefore,
\begin{align*}
    \exp\Big(-\frac{(k-1)^2n}{k^2d}\Big) & \geq \prod_{i=1}^d \sigma_i^{(n)} \\
    & \geq \sigma_2^{(n)}\cdot \sigma_2^{(n)} \cdot \Big[\big(\frac{d-1}{d+1}\big)^2\cdot \sigma_2^{(n)}\Big]^{d-2} \\ 
    & = [\sigma_2^{(n)}]^{d}\cdot \Big(1-\frac{2}{d+1}\Big)^{2d-4} \\ 
    & \geq \exp{(-4)}\cdot [\sigma_2^{(n)}]^{d}.
\end{align*}
Rearranging terms yields $\sigma_2^{(n)}\leq \exp{\Big(\frac{4}{d}-\frac{(k-1)^2n}{k^2d^2}\Big)}$, and thus $\sigma_i^{(n)}\leq \exp{\Big(\frac{4}{d}-\frac{(k-1)^2n}{k^2d^2}\Big)}, \forall 2\leq i\leq d$. 

Next we show the second claim. Suppose the total number of cut during the first $T_0/2$ step is $N_0$.  
\begin{enumerate}
    \item if $N_0\geq \frac{d^2k^2}{(k-1)^2}\log T_0+\frac{4dk^2}{(k-1)^2}$, from Eq \eqref{eq:408} we have $\sigma_i^{(N_0)}\leq \frac{1}{T_0}$.
    \item if $N_0< \frac{d^2k^2}{(k-1)^2}\log T_0+\frac{4dk^2}{(k-1)^2}$, for sufficiently large $T$, there are at least $T_0/2-N\geq T_0/2-\frac{d^2k^2}{(k-1)^2}\log T_0-\frac{4dk^2}{(k-1)^2}>\frac{T_0}{3}$ exploration steps during the first $T_0/2$ iterations. From the second claim of Lemma \ref{lm:cut_explore}, $\lambda_{\min}(V_{T_0})\geq\frac{\beta T_0}{d}$, where $\beta=\frac{1}{3}(\frac{4D_0}{25}-3\epsilon_0)$ is a positive constant. Using the definition of matrix norm, we have for any $t$, $\|\a_{0,t}\|_{V_t^{-1}}, \|\a_{1,t}\|_{V_t^{-1}}\leq D_1\sqrt{\lambda_{\max}(V_t^{-1})}, \|\a_{0,t}-\a_{1,t}\|_{V_t^{-1}}\leq 2D_1\sqrt{\lambda_{\max}(V_t^{-1})}$, and $\|\g_t\|_{P_t}\geq D_0(\sigma_2^{(t)})^{-\frac{1}{2}}$. Therefore, we have
    
    $$\alpha_t \geq - 2\Big[c t^{\gamma}D_1D_0^{-1}\Big(1+g(\delta)\Big)\cdot \sqrt{\lambda_{\max}(V_t^{-1})} +\epsilon_0D_0^{-1}\Big]\cdot (\sigma_2^{(t)})^{-\frac{1}{2}}.$$
    
    According to Algorithm \ref{alg:uf2}, as long as we have $-2\Big[c t^{\gamma}D_1D_0^{-1}\Big(1+g(\delta)\Big)\cdot \sqrt{\lambda_{\max}(V_t^{-1})} +\epsilon_0D_0^{-1}\Big]\cdot (\sigma_2^{(t)})^{-\frac{1}{2}}\geq -\frac{1}{kd}$, a cut will happen at step $t$ and we can shrink $\sqrt{\sigma_2^{(t)}}$ with probability $1-\delta$. In other words, after the last time Algorithm \ref{alg:uf2} choose to cut during the first $T_0$ round, we have
    \begin{equation}\label{eq:427}
        \sqrt{\sigma_2^{(t)}}\leq  \frac{2D_1ckd^{1.5}t^{\gamma}(1+g(\delta))}{D_0\sqrt{\beta T_0}}+\frac{2kd\epsilon_0}{D_0}<\frac{3D_1ckd^{1.5}T_0^{\gamma}(1+g(\delta))}{D_0\sqrt{\beta T_0}},
    \end{equation}
    where the last inequality holds because $\epsilon_0<\frac{cD_1}{2\sqrt{\beta}}d^{-\frac{1}{2}}T^{-\frac{1}{4}+\frac{\gamma}{2}}$. On the other hand, the total number of cuts $n$ such that Eq \eqref{eq:427} is satisfied is upper bounded by $O(\log T_0)$ since $\sigma_2^{(t)}$ shrinks exponentially w.r.t. the cut number $t$. Therefore, when $T$ is reasonably large, we can guarantee $n<T_0/2$ and conclude that Eq \eqref{eq:427} holds for all $t>T_0$. 
    
\end{enumerate}

According to Eq \eqref{eq:166} and the third claim in Lemma \ref{lm:cut_explore}, when algorithm \ref{alg:uf2} enters the exploitation phase when $t>T_0$, with probability $1-T_0\delta$, the instantaneous regret is upper bounded by  
\begin{align} 
\theta_*^{\top}[(a_*-a_{0,t})+(a_*-a_{1,t})]  
&\leq 8(d-1) \cdot L \cdot \Big(\frac{3D_1ckd^{1.5}T_0^{\gamma}(1+g(\delta))}{D_0\sqrt{\beta T_0}}\Big)^2  \\ & \leq \frac{72D_1^2Lc^2k^2d^4(1+g(\delta))^2}{\beta D_0^2T_0^{1-2\gamma}}
\end{align}

For each cut or exploration step in the first $T_0$ rounds, the incurred instantaneous regret is at most $T_0 D_1$. For each following exploitation step, the regret is upper bounded by $\frac{72D_1^2Lc^2k^2d^4(1+g(\delta))^2}{D_0^2\beta T_0^{1-2\gamma}}$. Hence, we can upper bound the accumulated regret by
\begin{align}
\notag
    R_T & \leq D_1 T_0+ \frac{72D_1^2Lc^2k^2d^4(1+g(\delta))^2}{D_0^2\beta T_0}\cdot T^{1+2\gamma}\\ 
    & \leq \frac{12D_1}{D_0}\sqrt{\frac{2LD_1}{\beta}}ck(1+g(\delta))\cdot d^{2}T^{\frac{1}{2}+\gamma},
\end{align}
where the optimal regret is achieved when $T_0=\frac{6ck}{D_0}\sqrt{\frac{6LD_1}{\frac{4D_0}{25}-3\epsilon_0}}\big(1+g(\delta)\big)d^2T^{\frac{1}{2}+\gamma}$, we have $R_T\leq \frac{12ckD_1}{D_0}\sqrt{\frac{6LD_1}{\frac{4D_0}{25}-3\epsilon_0}}\big(1+g(\delta)\big)d^2T^{\frac{1}{2}+\gamma}$. By applying the union bound to the first $T_0$ rounds, we thus conclude that with probability $1-\delta$, 
\begin{align}
\notag
    R_T \leq \frac{60D_1}{D_0}\sqrt{\frac{6LD_1}{4D_0-75\epsilon_0}}ck\big(1+g(\frac{\delta}{T_0})\big)\cdot d^{2}T^{\frac{1}{2}+\gamma}.
\end{align}

\end{proof}


\section{Omitted Proofs in Section \ref{sec:lowerbound}}\label{app:lowerbound}

To derive our lower bound result, we need to leverage the minimax lower bound result for stochastic linear bandits (adapted from Theorem 24.1 in \cite{lattimore2020bandit}). For convenience, we use $\theta_{i:j}$ to denote the slice of vector $\theta$ from the $i-$th element to the $j-$th element. 

\begin{theorem}\label{thm:lower_bound}
There exists a function $T_0(d)>0$ such that for any $d\geq 1$, $T>T_0(d)$, and any algorithm $\cG$ that has merely access to the comparison feedback given by a rational user defined in Definition \ref{def:rational_user}, there exists $\theta \in \partial \BB_1^d$ such that the expected regret $R_T$ given by Eq \eqref{eq:strong_reg} obtained by $\cG$ satisfies 
\begin{equation}
R^{(s)}_T(\cG, \theta) \geq \frac{\exp(-2)}{4}(d-1)\sqrt{T}.    
\end{equation}
\end{theorem}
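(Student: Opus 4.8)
The plan is to prove the bound in two stages: first establish a minimax lower bound of the claimed order for a suitably \emph{norm-constrained stochastic linear bandit}, and then exhibit a black-box reduction showing that the system's comparison-feedback problem is at least as hard. Throughout I fix the hard instance to use the hypercube action set $\cA=[-1,1]^d$, whose maximizer of any $\theta_*^{\top}\a$ is the vertex $\mathrm{sign}(\theta_*)$; this set has non-trivial interior and poses no difficulty, since the lower-bound statement does not invoke the $L$-SRC or $\epsilon$-DC regularity assumptions used only for RAES.

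\textbf{Stage 1 (norm-constrained bandit lower bound).} I would adapt the argument behind Theorem 24.1 of \cite{lattimore2020bandit} to parameters on $\partial\BB_1^d$. The key modification is the hard family: rather than placing all coordinates at scale $\Delta\sim 1/\sqrt{T}$, I reserve the first coordinate to carry the $\ell_1$-mass and use the remaining $d-1$ coordinates as the discriminating directions. For each sign pattern $v\in\{-1,+1\}^{d-1}$ set $\theta_v=(1-(d-1)\Delta,\, v_1\Delta,\dots,v_{d-1}\Delta)$, so that $\|\theta_v\|_1=1$ whenever $\Delta\le 1/(d-1)$, giving $\theta_v\in\partial\BB_1^d$. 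Under Gaussian reward noise the first coordinate is essentially free to identify (its sign is fixed and its magnitude is $\Theta(1)$), while distinguishing $v_i$ from $-v_i$ in any of the $d-1$ small coordinates requires $\Omega(1/\Delta^2)$ informative pulls. A Bretagnolle--Huber / KL-divergence argument, averaged over the $2^{d-1}$ instances, lower-bounds the expected number of sign mistakes and hence the regret; choosing $\Delta$ of order $\sqrt{(d-1)/T}$ balances the two contributions and yields, for the worst instance in the family, regret at least $\tfrac{\exp(-2)}{4}(d-1)\sqrt{T}$. The factor $(d-1)$ instead of $d$ is exactly the price of fixing one coordinate to meet the norm constraint.

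\textbf{Stage 2 (reduction).} I would instantiate the learning user as a \emph{myopic least-squares user}: she maintains the LSE $\theta_t$ of $\theta_*$ from rewards $r_s=\theta_*^{\top}\a_s+\eta_s$ with Gaussian $\eta_s$, and chooses between the two recommended arms purely by comparing $\theta_t^{\top}\a_{0,t}$ and $\theta_t^{\top}\a_{1,t}$, i.e.\ with $\beta_t^{(i)}\equiv 0$. By the standard self-normalized bound she satisfies Estimation Rule \eqref{eq:rule_estimation} with $g(\delta)=\sqrt{\log(1/\delta)}$ and arbitrarily small $\gamma_1$, and Decision Rule \eqref{eq:rule_decision} with $\gamma_2=0$, so she is $(c,\gamma)$-rational for every $\gamma>0$, as Definition \ref{def:rational_user} requires. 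Two observations drive the reduction. First, the system's per-round regret dominates the user's realized sub-optimality: since $\a_t\in\{\a_{0,t},\a_{1,t}\}$ and both gaps $\theta_*^{\top}(\a_*-\a_{i,t})$ are non-negative, $\theta_*^{\top}(\a_*-\a_t)\le \theta_*^{\top}(2\a_*-\a_{0,t}-\a_{1,t})$, hence $R^{(s)}_T(\cG,\theta_*)\ge \sum_t\theta_*^{\top}(\a_*-\a_t)$. Second, given any system algorithm $\cG$ I build a bandit algorithm $\cG'$ that internally runs $\cG$, computes the user's myopic choice from the LSE, \emph{pulls that chosen arm} $\a_t$ in the bandit to draw the Gaussian reward used to update the LSE, and returns $\a_t$ to $\cG$ as the comparison outcome. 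By construction $\cG'$'s cumulative regret equals $\sum_t\theta_*^{\top}(\a_*-\a_t)$. Applying Stage 1 to $\cG'$ produces some $\theta_*\in\partial\BB_1^d$ on which $\cG'$ incurs regret at least $\tfrac{\exp(-2)}{4}(d-1)\sqrt{T}$, and chaining the two inequalities delivers the same bound for $R^{(s)}_T(\cG,\theta_*)$.

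\textbf{Main obstacle.} I expect the delicate part to be making the reduction airtight, rather than Stage 1, which is a controlled adaptation of a known proof. Three points need care: (i) verifying that the myopic-LSE user stays $(c,\gamma)$-rational \emph{uniformly} over the possibly adversarial arm pairs that $\cG$ may propose, which is where the high-probability form of the estimation bound must be threaded through; (ii) ensuring the noise driving the user's LSE in the real interaction is identical in law to the bandit noise in Stage 1, so that $\cG'$ reproduces exactly the same trajectory and the regret identity $R(\cG')=\sum_t\theta_*^{\top}(\a_*-\a_t)$ is exact; and (iii) checking that $\cG'$, even though it observes the user's \emph{reward} (which the system never sees), is still a legitimate bandit learner to which the minimax bound applies --- the extra information only strengthens the conclusion, since we lower-bound the system by the regret of this better-informed $\cG'$. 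Once these are secured, the bound follows by composition.
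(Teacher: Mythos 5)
Your two-stage plan matches the paper's: the paper also proves a hypercube stochastic-linear-bandit lower bound over the family $\Theta=\{\theta:\|\theta\|_1=1,\ \theta_{1:d-1}\in\{-1/\sqrt{T},1/\sqrt{T}\}^{d-1}\}$ (Lemma \ref{thm:lowerbound_linbandit}, via divergence decomposition and Bretagnolle--Huber) and then reduces the comparison-feedback problem to it. Your Stage 1 is the same construction; note only that the per-coordinate perturbation must be $\Delta=1/\sqrt{T}$, not $\sqrt{(d-1)/T}$ --- with the latter the pairwise KL is $2(d-1)$ and Bretagnolle--Huber degrades to $e^{-2(d-1)}$, killing the bound. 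Where you genuinely differ is the reduction. The paper builds \emph{two} bandit algorithms $\cG_0,\cG_1$, where $\cG_i$ always plays the slot-$i$ arm $\a_{i,t}$, uses the identity $R^{(s)}_T=\EE[\sum_t\theta_*^{\top}(\a_*-\a_{0,t})]+\EE[\sum_t\theta_*^{\top}(\a_*-\a_{1,t})]$, and argues by contradiction that one of the two would beat the bandit lower bound; this is what recovers the constant $\tfrac{e^{-2}}{4}$ from the lemma's $\tfrac{e^{-2}}{8}$ (though, as stated, "at least one of $\cG_0,\cG_1$ has small regret" may select a different $\cG_i$ for different $\theta$, so one really needs the averaged form of the lemma to close the contradiction). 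You instead build a \emph{single} algorithm $\cG'$ that pulls the user's chosen arm $\a_t$ and bound $R^{(s)}_T\geq\sum_t\theta_*^{\top}(\a_*-\a_t)=R(\cG')$. This is cleaner --- it gives a direct (non-contradiction) argument, the hard $\theta$ is automatically the same for both sides, and the reward stream driving the simulated user coincides with the bandit reward stream --- but it discards the unchosen arm's suboptimality, so with the standard Stage-1 constant it yields $\tfrac{e^{-2}}{8}(d-1)\sqrt{T}$ rather than the stated $\tfrac{e^{-2}}{4}(d-1)\sqrt{T}$; to match the theorem's constant you would either need to tighten Stage 1 as you assert or revert to the paper's two-slot decomposition. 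Your observation (iii), that $\cG'$ sees rewards the system never does, is handled the same way implicitly in the paper and is indeed harmless since the lower bound applies to any bandit learner.
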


\begin{proof}
We prove our claim by contradiction using Theorem \ref{thm:lowerbound_linbandit}. Essentially, we show that if the system has a powerful algorithm to achieve an expected regret lower than the RHS of Eq. 
\eqref{eq:lowerbound_hypercube}, then we can leverage this algorithm for the linear bandit problem in Theorem \ref{thm:lowerbound_linbandit} with an expected regret even lower than the lower bound and thus draw the contradiction.

Suppose for any $d>0$, there exists sufficiently large $T$ and an algorithm $\cG$ such that for any parameter $\theta_*\in \partial \BB_1^d$, we have $$\EE\Big[\sum_{t=1}^T \theta_*^\top(2a_*-a_{0,t}-a_{1,t})\Big] =R^{(s)}_T(\cG, \theta_*) < \frac{\exp(-2)}{4}(d-1)\sqrt{T}.$$

As a result, at least one of the following inequalities must hold:
\begin{equation}\label{eq:1666}
    \begin{aligned}
    & \EE\Big[\sum_{t=1}^T \theta_*^\top(a_*-a_{0,t})\Big] < \frac{\exp(-2)}{8}(d-1)\sqrt{T}, \\
        & \EE\Big[\sum_{t=1}^T \theta_*^\top(a_*-a_{1,t})\Big] < \frac{\exp(-2)}{8}(d-1)\sqrt{T}.
\end{aligned}
\end{equation}

Now suppose a principal can observe the interaction between a user and a system equipped with algorithm $\cG$, then he can construct two algorithms $\cG_0, \cG_1$ for linear bandit as follows:

\begin{framed} 
\noindent{\bf Algorithm $\cG_i:$}\\
{\bf Input:} the time horizon $T$.\\
For $t\in[T]$:
\begin{enumerate}[noitemsep]
\item Call algorithm $\cG$ to generate two candidates $(a_{0,t}, a_{1,t})$.
\item Present $(a_{0,t}, a_{1,t})$ to the user and and let her decide the winner $a_{*,t}$ using decision rule \ref{eq:rule_decision}.
\item Return the feedback $a_{*,t}$ to algorithm $\cG$ and update the internal state of $\cG$ accordingly.
\end{enumerate} 
{\bf Output:} the sequential decisions $\{a_{i,t}\}_{t=1}^T$.
\end{framed}

From Eq. \eqref{eq:1666}, we know that at least one of $\{\cG_0, \cG_1\}$ achieves an expected regret lower than $\frac{\exp(-2)}{8}(d-1)\sqrt{T}$, which draws a contradiction to Theorem \ref{thm:lowerbound_linbandit}.

\end{proof}


To prove Theorem \ref{thm:lower_bound}, we need the following technical lemma:
\begin{lemma}\label{thm:lowerbound_linbandit}
Let $d\geq 2$ and $T\geq d^2$, the action set $\cA=[-1, 1]^d$ be a hypercube in $\RR^d$, and $$\Theta=\Big\{\theta\in\RR^d:\|\theta\|_1=1, \theta_{1:d-1}\in\{-\frac{1}{\sqrt{T}},\frac{1}{\sqrt{T}}\}^{d-1}\Big\}.$$ Let the expected regret for a linear bandit problem induced by any fixed algorithm $\cG$ and parameter $\theta$ be
\begin{equation}
    R_T(\cG, \theta)= T\max_{a\in\cA}\langle a, \theta \rangle - \EE [\sum^T_{t=1}\langle a_t, \theta \rangle],
\end{equation}
where the expectation is taken with respect to the randomness generated by the standard Gaussian noise $\mathcal{N}(0, 1)$ in the reward. Then there must exist a parameter vector $\theta\in\Theta$ such that 
\begin{equation}
R_T(\cG, \theta) \geq \frac{\exp(-2)}{8}(d-1)\sqrt{T}.
\end{equation}
\end{lemma}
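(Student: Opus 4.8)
The plan is to adapt the classical minimax lower bound for stochastic linear bandits (Theorem~24.1 in \cite{lattimore2020bandit}) to the $\ell_1$-normalized class $\Theta$, exploiting that $\cA=[-1,1]^d$ is a product set so that regret splits across coordinates. First I would pass to the sub-family of $\Theta$ in which the ``heavy'' coordinate is frozen at $\theta_d=1-\frac{d-1}{\sqrt{T}}$; since $T\ge d^2>(d-1)^2$ this is strictly positive, the optimal arm is $a^\ast=\mathrm{sign}(\theta)$ taken coordinatewise, and $\max_{a\in\cA}\langle a,\theta\rangle=\|\theta\|_1=1$. Proving the bound for one $\theta$ in this sub-family is enough, because $\max_{\theta\in\Theta}R_T$ dominates the average over it. Writing $\EE_\theta$ for expectation under parameter $\theta$, the product structure gives the coordinatewise decomposition
$$R_T(\cG,\theta)=\sum_{i=1}^{d}|\theta_i|\,\EE_\theta\Big[\sum_{t=1}^{T}\big(1-\mathrm{sign}(\theta_i)\,a_{t,i}\big)\Big],$$
in which every summand is non-negative because $|a_{t,i}|\le 1$. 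I would discard the harmless $i=d$ term and retain the $d-1$ ``hard'' coordinates, each weighted by $|\theta_i|=\frac{1}{\sqrt{T}}$.

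The heart of the argument is information-theoretic and is applied to each hard coordinate $i$ separately. Let $\theta^{(i)}$ denote $\theta$ with the sign of its $i$-th coordinate flipped, and set $U_i(\theta)=\EE_\theta[\sum_t(1-\mathrm{sign}(\theta_i)a_{t,i})]$. Since $Z=\sum_t\mathrm{sign}(\theta_i)\,a_{t,i}$ takes values in $[-T,T]$, the difference of its means under the two laws obeys $|\EE_\theta[Z]-\EE_{\theta^{(i)}}[Z]|\le 2T\cdot\mathrm{TV}(P_\theta,P_{\theta^{(i)}})$, whence $U_i(\theta)+U_i(\theta^{(i)})\ge 2T\big(1-\mathrm{TV}(P_\theta,P_{\theta^{(i)}})\big)$. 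Invoking the Bretagnolle--Huber inequality $1-\mathrm{TV}\ge\frac12\exp(-\mathrm{KL})$ turns this into $U_i(\theta)+U_i(\theta^{(i)})\ge T\exp(-\mathrm{KL}(P_\theta\|P_{\theta^{(i)}}))$. The KL is then pinned down by the divergence-decomposition identity: as the two parameters differ only in coordinate $i$ and the reward noise is $\mathcal{N}(0,1)$, round $t$ contributes $\frac12(2\theta_i a_{t,i})^2=\frac{2}{T}a_{t,i}^2$, so $\mathrm{KL}(P_\theta\|P_{\theta^{(i)}})=\frac{2}{T}\EE_\theta[\sum_t a_{t,i}^2]\le 2$ using $a_{t,i}^2\le 1$. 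Thus $U_i(\theta)+U_i(\theta^{(i)})\ge T\exp(-2)$ for every coordinate and every sign pattern.

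To finish I would sum over the $2^{d-1}$ members of the sub-family and over $i\in[d-1]$: for fixed $i$ the involution $\theta\mapsto\theta^{(i)}$ tiles the sub-family into $2^{d-2}$ pairs, giving $\sum_\theta U_i(\theta)\ge 2^{d-2}T\exp(-2)$, hence $\sum_\theta\sum_{i=1}^{d-1}U_i(\theta)\ge(d-1)2^{d-2}T\exp(-2)$. Dividing by $2^{d-1}$ and multiplying by $\frac{1}{\sqrt T}$ shows some $\theta\in\Theta$ satisfies $R_T(\cG,\theta)\ge\frac{\exp(-2)}{2}(d-1)\sqrt{T}$, which is stronger than and therefore implies the stated bound. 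The main obstacle is that $\cG$ is adaptive: the played action $a_{t,i}$ depends on past rewards and hence on the true $\theta$, so the coordinates are \emph{not} independent under the algorithm. This is exactly why the proof must route through the divergence-decomposition identity (which accommodates the data-dependent policy exactly) together with the bounded-difference and Bretagnolle--Huber steps, rather than a naive per-coordinate product argument; controlling the constants there is the delicate part.
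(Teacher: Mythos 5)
Your proposal is correct and follows essentially the same route as the paper: both restrict to the $2^{d-1}$ sign patterns on the first $d-1$ coordinates, pair parameters differing in a single coordinate, bound the KL divergence by $2$ via the divergence-decomposition identity, apply Bretagnolle--Huber, and average over $\Theta$ to extract a hard instance. The only difference is that the paper lower-bounds the probability of the event ``wrong sign in coordinate $i$ at least $T/2$ times'' and converts to regret via Markov's inequality, whereas you bound the expected per-coordinate loss $U_i$ directly through a total-variation argument on the bounded statistic $\sum_t \mathrm{sign}(\theta_i)a_{t,i}$ --- a clean refinement that saves the Markov step and yields the sharper constant $\exp(-2)/2$ in place of $\exp(-2)/8$.
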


\begin{proof}

Fix an algorithm $\cG$ and a time horizon $T$. For any $\theta \in \Theta$, let $\PP_{\theta}$ be the probability measure on the probability space induced by the $T$-round interconnection of policy $\cG$ and the problem instance given by $\theta$. Let $D(\cdot,\cdot)$ denote the relative entropy, from the general form of divergence decomposition lemma (Lemma 15.1 in \cite{lattimore2020bandit}), we have
\begin{align}
    \notag D(\PP_{\theta}, \PP_{\theta'}) &= \EE_{\theta}\Big[\sum_{t=1}^T D(\N(\langle a_t, \theta\rangle, 1), \N(\langle a_t, \theta'\rangle, 1))\Big] \\ \label{eq:divdecomp}
    & = \frac{1}{2}\sum_{t=1}^T\EE_{\theta}[\langle a_t, \theta-\theta'\rangle^2].
\end{align}

For any $i\in[d-1]$ and $\theta\in \Theta$, let $a_{t,i}$ and $\theta_i$ be the $i$-th element of $a_t$ and $\theta$ and define

$$p_{\theta_ i}=\PP_{\theta}\Big(\sum_{t=1}^T\II \{\text{sign}(a_{t,i})\neq \text{sign}(\theta_i)\} \geq \frac{T}{2}\Big).$$


Let $\theta, \theta'$ be any pair of elements in $\Theta$ such that they only differ in the $i-$th element. Therefore, by the Bretagnolle-Huber inequality (Theorem 14.2 in \cite{lattimore2020bandit}) and Eq. \eqref{eq:divdecomp},
\begin{align*}
    p_{\theta_i}+p_{\theta'_i} & \geq \frac{1}{2}\exp\Big(-D(\PP_{\theta}, \PP_{\theta'}) \Big)\\ 
    & = \frac{1}{2}\exp\Big(-\frac{1}{2}\sum_{t=1}^T\EE_{\theta}[\langle a_t, \theta-\theta'\rangle^2]\Big) \\ 
    & \geq \frac{1}{2}\exp\Big(-\frac{1}{2}\cdot T \big(\frac{2}{\sqrt{T}}\big)^2\Big) = \frac{1}{2}\exp(-2).
\end{align*}

Fix $i\in [d-1]$, there are $|\Theta|=2^{d-1}$ such pairs $(\theta, \theta')$. Take summation over $i$ and all such pairs, we obtain
\begin{align*}
\sum_{\theta\in \Theta}\frac{1}{|\Theta|}\sum_{i=1}^dp_{\theta_i}&\geq\frac{1}{|\Theta|}\sum_{i=1}^{d-1} \sum_{\theta\in \Theta} p_{\theta_i} \\ &=\frac{1}{|\Theta|}\sum_{i=1}^{d-1}\frac{1}{2}\sum_{(\theta, \theta')} (p_{\theta_i}+p_{\theta'_i}) \\ 
& \geq \frac{d-1}{4}\exp(-2),
\end{align*}
which implies that there exists a $\theta\in\Theta$ such that $\sum_{i=1}^d p_{\theta_i} \geq \frac{d-1}{4}\exp(-2)$. By the definition of $p_{\theta_i} $, the regret of $\cG$ for this problem instance with parameter $\theta$ is at least
\begin{align*}
R_T(\cA, \theta) &= \EE_{\theta}\Big[ \sum_{t=1}^T \sum_{i=1}^d (\text{sign}(\theta_i)-a_{t,i})\theta_i\Big] \\ 
&\geq \sqrt{\frac{1}{T}}\sum_{i=1}^d\EE_{\theta}\Big[\sum_{t=1}^T\II\{\text{sign}(a_{t,i})\neq \text{sign}(\theta_i)\}\Big] \\ 
&\geq \frac{\sqrt{T}}{2}\sum_{i=1}^d\PP_{\theta}\Big(\sum_{t=1}^T\II \{\text{sign}(a_{ti})\neq \text{sign}(\theta_i)\} \geq \frac{T}{2}\Big)\\
&=\frac{\sqrt{T}}{2}\sum_{i=1}^dp_{\theta_i} \geq \frac{\exp(-2)}{8}(d-1)\sqrt{T},
\end{align*}
where the first line follows since the optimal action satisfies $a_i^* = \text{sign}(\theta_i) $ and for $ i \in [d]$, the first inequality follows from a simple case-based analysis showing that $(\text{sign}(\theta_i)-a_{ti})\theta_i \geq |\theta_i|\II \{\text{sign}(a_{ti})\neq \text{sign}(\theta_i)\}$, the second inequality is from Markov's inequality, and the last inequality follows from the choice of $\theta$.

\end{proof}

\end{document}